\def\<{\langle}
\def\>{\rangle}
\def\veps{\varepsilon}
\def\bveps{\boldsymbol \varepsilon}
\def\bmu{{\boldsymbol \mu}}
\def\bSigma{{\boldsymbol \Sigma}}
\def\bDelta{{\boldsymbol \Delta}}
\def\bGamma{{\boldsymbol \Gamma}}
\def\bLambda{{\boldsymbol \Lambda}}
\def\bPhi{{\boldsymbol \Phi}}
\def\bPsi{{\boldsymbol \Psi}}
\def\hbPhi{\hat{\boldsymbol \Phi}}
\def\hbPsi{\hat{\boldsymbol \Psi}}
\def\blambda{{\boldsymbol \lambda}}
\def\bOmega{{\boldsymbol \Omega}}
\def\bTheta{{\boldsymbol \Theta}}
\def\bXi{{\boldsymbol \Xi}}
\def\tbPhi{\tilde{\boldsymbol \Phi}}
\def\tbPsi{\tilde{\boldsymbol \Psi}}
\def\tbM{{\tilde{\boldsymbol M}}}
\def\barbmu{\bar{\boldsymbol \mu}}
\def\barkap{{\bar{\kappa}}}
\def\barbPhi{\bar{\boldsymbol \Phi}}
\def\barbPsi{\bar{\boldsymbol \Psi}}
\def\barbDelta{{\bar{\boldsymbol \Delta}}}
\def\barbOmega{{\bar{\boldsymbol \Omega}}}
\def\barbSigma{{\bar{\boldsymbol \Sigma}}}
\def\barbGamma{{\bar{\boldsymbol \Gamma}}}
\def\barbM{{\bar{\boldsymbol M}}}
\def\bx{{\boldsymbol x}}
\def\by{{\boldsymbol y}}
\def\ba{{\boldsymbol a}}
\def\be{{\boldsymbol e}}
\def\bs{{\boldsymbol s}}
\def\br{{\boldsymbol r}}
\def\bu{{\boldsymbol u}}
\def\bv{{\boldsymbol v}}
\def\bz{{\boldsymbol z}}
\def\bg{{\boldsymbol g}}
\def\bm{{\boldsymbol m}}
\def\bO{{\boldsymbol O}}
\def\bI{{\boldsymbol I}}
\def\bA{{\boldsymbol A}}
\def\bQ{{\boldsymbol Q}}
\def\bS{{\boldsymbol S}}
\def\bM{{\boldsymbol M}}
\def\bQ{{\boldsymbol Q}}
\def\sT{{\sf T}}
\def\ssb{{\sf b}}
\def\sa{{\sf a}}
\def\hbx{\hat{\boldsymbol x}}
\def\hx{\hat{x}}
\def\hX{\hat{X}}
\def\tf{\tilde{f}}
\def\th{\tilde{h}}
\def\tbM{\widetilde{\boldsymbol M}}
\def\tbPhi{\widetilde{\boldsymbol \Phi}}
\def\tbPsi{\widetilde{\boldsymbol \Psi}}
\def\tbDelta{\widetilde{\boldsymbol \Delta}}
\def\tbOmega{\widetilde{\boldsymbol \Omega}}
\def\tbSigma{\widetilde{\boldsymbol \Sigma}}
\def\tbGamma{\widetilde{\boldsymbol \Gamma}}
\def\tbXi{\widetilde{\boldsymbol \Xi}}
\def\tbTheta{\widetilde{\boldsymbol \Theta}}
\def\barmu{\bar{\mu}}
\def\E{{\mathbb E}} %expectation
\newcommand{\normal}{\mathcal{N}}
\newcommand{\reals}{\mathbb{R}}
\newcommand{\bzero}{\boldsymbol{0}}
\newcommand{\beq}{\begin{equation}}
\newcommand{\eeq}{\end{equation}}
\newcommand\norm[1]{\left\lVert{#1}\right\rVert}
\newcommand\abs[1]{\left\lvert{#1}\right\rvert}
\newcommand{\sign}{\textrm{\sign}}
\newcommand{\op}{\textrm{op}}
\newtheoremstyle{myexample} % name
    {\topsep}                    % Space above
    {\topsep}                    % Space below
    {\rm }                   % Body font
    {}                           % Indent amount
    {\bf }                   % Theorem head font
    {.}                          % Punctuation after theorem head
    {.5em}                       % Space after theorem head
    {}  % Theorem head spec (can be left empty, meaning normal)
\newtheoremstyle{myremark} % name
    {\topsep}                    % Space above
    {\topsep}                    % Space below
    {\rm}                        % Body font
    {}                           % Indent amount
    {\bf}                        % Theorem head font
    {.}                          % Punctuation after theorem head
    {.5em}                       % Space after theorem head
    {}  % Theorem head spec (can be left empty, meaning normal)
\newtheorem{claim}{Claim}[section]
\newtheorem{lemma}[claim]{Lemma}
\newtheorem{theorem}{Theorem}
\newtheorem{proposition}[claim]{Proposition}
\theoremstyle{myremark}
\theoremstyle{myremark}
\theoremstyle{myexample}
\title{Estimation in Rotationally Invariant Generalized Linear Models\\ via Approximate Message Passing} 
\author{Ramji Venkataramanan\thanks{Department of Engineering, University of Cambridge. Email: \texttt{ramji.v@eng.cam.ac.uk},}
\;\;\;\;
 Kevin K{\"o}gler\thanks{Institute of Science and Technology (IST) Austria. Email: \texttt{kevin.koegler@ist.ac.at},} 
\;\;\;and\;\;\;
Marco Mondelli\thanks{Institute of Science and Technology (IST) Austria. Email: \texttt{marco.mondelli@ist.ac.at}.}}
\begin{document}

\maketitle

\begin{abstract}
We consider the problem of signal estimation in generalized linear models defined via rotationally invariant design matrices. Since these matrices can have an arbitrary spectral distribution, this model is well suited for capturing complex correlation structures which often arise in applications. We propose a novel family of approximate message passing (AMP) algorithms for signal estimation, and rigorously characterize their performance in the high-dimensional limit via a state evolution recursion. Our rotationally invariant AMP has complexity of the same order as the existing AMP derived under the restrictive assumption of a Gaussian design; our algorithm also recovers this existing AMP as a special case. Numerical results showcase a performance close to Vector AMP (which is conjectured to be Bayes-optimal in some settings), but obtained with a much lower complexity, as the proposed algorithm does not require a computationally expensive singular value decomposition. 
\end{abstract}

\section{Introduction}

 We consider the problem of estimating a $d-$dimensional signal $\bx^* \in \reals^d$ from an observation  $\by \in \reals^n$ obtained via a generalized linear model (GLM) \cite{mccullagh2018generalized}. 
Specifically, given a design matrix $\bA \in \reals^{n \times d}$ with rows $\ba_1, \ldots, \ba_n \in \reals^d$, the observation $\by \equiv (y_1, \ldots, y_n)$ is generated as
\beq
y_i = q( \< \ba_i, \bx^*  \>, \, \veps_i), \quad  \text{ for } i=1, \ldots, n,
\label{eq:GLM_def}
\eeq
where $\< \ba_i, \bx^*  \> = \ba_i^{\sT} \bx^*$ denotes the Euclidean inner product, $\bveps \equiv (\veps_1, \ldots, \veps_n)$ is a noise vector and $q: \reals^2 \to \reals$ is a known function. The model \eqref{eq:GLM_def} covers many widely studied problems in statistical estimation and signal processing: examples include linear regression \cite{Donoho1,eldar2012compressed} ($y_i = \< \ba_i, \bx^*  \> + \veps_i$), phase retrieval \cite{shechtman2015phase,fannjiang2020numerics} ($y_i = \abs{\< \ba_i, \bx^*  \>}^2 + \veps_i$), and 1-bit compressed sensing \cite{boufounos20081} ($y_i= \text{sign}(\< \ba_i, \bx^*  \> + \veps_i)$).

A range of estimators based on convex relaxations, spectral methods, and non-convex methods have been proposed for specific instances of GLMs, such as sparse linear regression \cite{Tibs96,Dantzig,hastie2019statistical}, phase retrieval \cite{netrapalli2013phase,candes2013phaselift,candes2015wirt,mondelli2017fundamental,luo2019optimal,lu2020phase} and one-bit compressed sensing \cite{plan2012robust, plan2013one, jacques2013robust}. Most of these techniques are generic and can incorporate certain constraints like sparsity, but they are not well-equipped to exploit specific information about $\bx^*$, e.g.,   a known signal prior.  

Approximate message passing (AMP) is a family of iterative algorithms that can be tailored to take advantage of  structural information known about the signal.  AMP algorithms were first proposed for estimation in linear models \cite{Kabashima_2003,BayatiMontanariLASSO,BM-MPCS-2011,DMM09,  krzakala2012,maleki2013asymptotic}, but have since been applied to a range of statistical estimation problems, including generalized linear models \cite{barbier2019optimal,ma2019optimization, maillard2020phase,mondelli2021approximate,RanganGAMP,schniter2014compressive,sur2019modern} and low-rank matrix estimation \cite{deshpande2014information,fletcher2018iterative,kabashima2016phase, lesieur2017constrained,montanari2017estimation, BarbierMR20}. An attractive feature of AMP is that under suitable model assumptions, its performance in the high-dimensional limit is precisely characterized by a succinct deterministic recursion called \emph{state evolution} \cite{BM-MPCS-2011, bolthausen2014iterative, javanmard2013state}.   Using the state evolution analysis, it has been proved that AMP achieves Bayes-optimal performance for some models \cite{deshpande2014information,DonSpatialC13, montanari2017estimation,barbier2019optimal}, and a conjecture from statistical physics posits that AMP is optimal among polynomial-time algorithms.  The above works, including the original GAMP algorithm \cite{RanganGAMP} for estimation in GLMs, all assume that the matrix $\bA$ defining the model is i.i.d. Gaussian.  While some of  these results can be generalized to the broader class of i.i.d. sub-Gaussian matrices via universality arguments \cite{bayati2015universality,ChenLam21}, the i.i.d. assumption limits the applicability of AMP in practice. 
In this paper, we present an AMP algorithm for generalized linear models defined via a rotationally invariant design matrix $\bA$. The class of rotational invariant matrices includes Gaussian matrices, but is much bigger. Rotational invariance only imposes that the orthogonal matrices in the singular value decomposition of $\bA$ are uniformly random, and allows for arbitrary singular values. Hence, $\bA$ is able to capture a complex correlation structure, which is typical in applications. 

\paragraph{Main contributions.} We propose an AMP algorithm for  GLMs with rotationally invariant design matrices. The algorithm, which we call RI-GAMP, uses a pair of multivariate `denoising' functions to produce an updated signal estimate in each iteration. The iterates depend on the free cumulants of the spectral distribution of the design matrix. Assuming that these free cumulants are known (e.g., via the spectral distribution), the complexity of RI-GAMP is of the same order as that of the standard GAMP algorithm \cite{RanganGAMP}. Moreover, when the design matrix is i.i.d. Gaussian, RI-GAMP reduces to standard GAMP. Our main technical contribution is a state evolution result for RI-GAMP (Theorem \ref{thm:main}), which gives a rigorous characterization of its performance in the high-dimensional limit as $n,d \to \infty$ with a fixed ratio $\delta = n/d$, for a constant $\delta >0$. We also present numerical simulation results for linear regression and 1-bit compressed sensing, showcasing the performance of RI-GAMP on both synthetic data and images. The performance of RI-GAMP closely matches that of Vector AMP  \cite{rangan2019vector,pandit2020inference} (which is conjectured to be Bayes-optimal in some settings), but obtained with  much lower complexity, as RI-GAMP does not require computing the singular value decomposition of $\bA$. 

 RI-GAMP offers a flexible framework to analyze other estimators for GLMs, e.g., spectral methods. Standard GAMP has been used as a proof technique to study the distributional properties of linear and spectral estimators under Gaussian model assumptions \cite{mondelli2021optimal, mondelli2021approximate}. An exciting research direction is to use RI-GAMP to analyze spectral estimators for rotationally invariant GLMs in the high-dimensional limit. 

%that showcase the performance of our algorithm and validate the state evolution characterization.  

\paragraph{Proof idea.} The key idea in establishing the state evolution result is to design an auxiliary AMP algorithm whose iterates are close to those of RI-GAMP.  The auxiliary AMP is an instance of the abstract  AMP iteration for rotationally invariant matrices analyzed in \cite{fan2020approximate, zhong2021approximate}, hence a  state evolution result can be obtained for it.
We then show that each iterate of RI-GAMP is close to one of the auxiliary AMP, and use this fact to translate the state evolution result for the latter to the RI-GAMP. We emphasize that the auxiliary AMP only serves as a proof technique. Indeed, it is initialized using the unknown signal $\bx^*$, and therefore cannot be used for estimation.

\paragraph{RI-GAMP vs. Vector AMP.} Vector AMP (VAMP) is an iterative algorithm (based on Expectation Propagation) for estimation in rotationally invariant  linear  \cite{rangan2019vector, takeuchi2020rigorous, takeuchi2021convergence} and generalized linear models \cite{schniter2016vector, pandit2020inference}.  Like RI-GAMP, VAMP can be tailored to take advantage of prior information about the signal and its performance can be characterized by a state evolution recursion. It is also shown in \cite{rangan2019vector,pandit2020inference} that the asymptotic estimation error of VAMP (with optimal denoising functions) coincides with the replica prediction for the Bayes-optimal error whenever the  state evolution recursion has a unique fixed point.

The  RI-GAMP  algorithm proposed here is distinct from VAMP, and the associated state evolution recursions are also different.  Let us highlight some attractive features of RI-GAMP. First,  RI-GAMP does not require the computationally expensive ($O(d^3)$) singular value decomposition used in VAMP. Instead, it uses the free cumulants of the design matrix which can be estimated in $O(d^2)$ time (details on p.\pageref{para:freecum_est}). 
%In comparison, the SVD in the initial iteration of VAMP has  running time. 
We confirm via numerical simulations  that the complexity advantage of RI-GAMP over VAMP is significant and increases with the problem dimension (see Figure \ref{fig:running_time_comp}).

The state evolution result (Theorem \ref{thm:main}) for RI-GAMP holds under mild assumptions on the denoising functions (see \textbf{(A1)} on p.\pageref{assump:A1}), while the analysis for VAMP requires slightly stronger conditions, e.g., the denoising functions and their derivatives need to be uniformly Lipschitz continuous. The numerical results in Section \ref{sec:simu} show that the performance of RI-GAMP matches that of VAMP, except near parameter values corresponding to a phase transition in the estimation error.

\paragraph{Other related work.} Orthogonal AMP \cite{ma2017orthogonal} is an algorithm equivalent to VAMP for estimation in  rotationally invariant linear models.   A variant of Expectation Propagation (an algorithm closely related to VAMP) was proposed in \cite{ccakmak2016self} for rotationally invariant GLMs. Ma et al. \cite{ma2021analysis} recently studied the performance of Expectation Propagation for rotationally invariant GLMs, and analyzed the impact of the spectrum  on the estimation performance. VAMP has also been used to obtain the asymptotic risk of convex penalized estimators for rotationally invariant GLMs \cite{gerbelot2020asymptotic1, gerbelot2020asymptotic2}.  A few lower  complexity alternatives to VAMP have been proposed recently, including convolutional AMP \cite{takeuchi2021bayes}, Memory AMP for linear models \cite{LiuMemoryAMP20}, and Generalized Memory AMP for GLMs \cite{tian2021generalized}.    The phase retrieval problem (which is a special case of a GLM) has been studied for design matrices with orthogonal columns, a model distinct from the rotationally invariant one considered here \cite{dudeja2020analysis, dudeja2020information}.  Finally, we mention that AMP has also been applied to low-rank matrix estimation with rotationally invariant noise \cite{opper2016theory, ccakmak2019memory, fan2020approximate,zhong2021approximate, mondelli21Rank1PCA}.

\section{Preliminaries}\label{sec:prel}

\paragraph{Notation and definitions.} 
 For $n\in \mathbb N$, we use the shorthand $[n]$ to denote the set $\{1, \ldots, n\}$. Given a vector $\bx$, we write $\| \bx \|$  for its $\ell_2$ norm. All vectors are treated as column vectors. Given  $\bx = (x_1, \ldots, x_d)$, we denote by $\< \bx \>$ its empirical average $\frac{1}{d} \sum_{i=1}^d x_i$.  The empirical distribution of  $\bx$ is given by $ \frac{1}{d}\sum_{i=1}^d \delta_{x_i}$, where $\delta_{x_i}$ denotes a Dirac delta mass on $x_i$. Similarly, the joint empirical  distribution of the rows of a matrix $(\bx^1, \bx^2, \ldots, \bx^t) \in \reals^{d \times t}$ is $\frac{1}{d} \sum_{i=1}^d \delta_{(x^1_i, \ldots, x^t_i)}$. Given a matrix $\bA$, we denote by $(\bA)_{i, j}$ its $(i, j)$-th element.
For a square matrix $\bM$, we follow the convention that $\bM^0$ is the identity matrix.

\paragraph{$W_2$ convergence.} We write 
$(\bx^1,  \ldots, \bx^k) \stackrel{\mathclap{W_2}}{\longrightarrow} (X_1, \ldots, X_k)$
for the  convergence in Wasserstein-$2$ distance of the joint empirical distribution of the rows of $(\bx^1, \bx^2, \ldots, \bx^k) \in \reals^{d \times k}$ to the law of the random vector $(X_1, \ldots, X_k)$.   Equivalently  \cite{feng2021unifying}[Corollary 7.4], for any $L >0$ and function $\psi: \reals^k \to \reals$ that satisfies 
\beq \abs{ \psi(\bu) - \psi(\bv)} \le L \| \bu -\bv \|\left(1  +  \| \bu \|   
+  \| \bv \| \right) %\,  \forall \,\bu,\bv \in \reals^k,
\label{eq:PL_def}
\eeq
for all $\bu,\bv \in \reals^k$, we have
\beq
\lim_{d \to \infty} \, \frac{1}{d} \sum_{i=1}^d \psi(x^1_i,  \ldots, x^k_i)  = 
\E\{ \psi(X_1, \ldots, X_k) \}.
\label{eq:PL_conv}
\eeq
A function satisfying \eqref{eq:PL_def} for some fixed $L >0$ is called a pseudo-Lipschitz function of order $2$.

\paragraph{Rotationally invariant generalized linear models.} The $n \times d$ design matrix $\bA $ is formed by stacking the sensing vectors $\ba_1, \ldots, \ba_n$, i.e., $\bA = [\ba_1, \ldots, \ba_n]^{\sT}$. We assume that $\bA$ is bi-rotationally invariant in law, i.e., $\bA = \bO^\sT \bLambda \bQ$, where $\bLambda={\rm diag}(\blambda)$ is an $n\times d$ diagonal matrix containing the singular values of $\bA$, and $\bO$, $\bQ$ are Haar orthogonal matrices independent of one another and also of $\bLambda$. As $d\to\infty$, we assume that $n/d = \delta$, for some constant $\delta > 0$. 
%\rv{check that $\delta >1$ is also OK.}\mm{Free cumulants are OK for $\delta>1$}
The  matrix $\bA$ is independent of the signal $\bx^* \in \reals^d$, and the noise vector $\bveps \in \reals^n$. The observation $\by \in \reals^n$ is generated according to \eqref{eq:GLM_def}. 
We assume that there exist random variables $X_*, \veps$ with finite second moments such that $\bx^* \stackrel{\mathclap{W_2}}{\longrightarrow} X_*$, and $\bveps \stackrel{\mathclap{W_2}}{\longrightarrow} \veps$ as $n\to \infty$. Furthermore, we assume that the empirical distribution of $\blambda$ converges weakly almost surely to a compactly supported random variable  $\Lambda$.
We denote by $\{\kappa_{2k}\}_{k\ge 1}$ the rectangular free cumulants associated with the moments $\{m_{2k}\}_{k\ge 1}$, where $m_{2k}$ is the $k$-th moment of the empirical eigenvalue distribution of $\bA\bA^\sT$ (for details, see \eqref{eq:mcrel1rect}-\eqref{eq:mcrel2rect} in Appendix \ref{subsec:freerect}). For $\delta>1$, let $\tilde{\Lambda}$ be a mixture of $\Lambda$ (w.p. $1/\delta$) and a point mass at $0$ (w.p. $1-1/\delta$); for $\delta\le 1$, we set $\tilde{\Lambda}=\Lambda$. Then, the assumptions above imply that as $n,d\to\infty$, $m_{2k} \to \bar{m}_{2k}=\mathbb E\{\tilde{\Lambda}^{2k}\}$ and $\kappa_{2k}\to \bar{\kappa}_{2k}$ almost surely, where $\{\bar{m}_{2k}\}_{k\ge 1}$ and $\{\bar{\kappa}_{2k}\}_{k\ge 1}$ are the even moments and rectangular free cumulants of $\tilde{\Lambda}$.

\section{Rotationally Invariant Generalized AMP}\label{sec:RIGAMP}

\paragraph{Algorithm.} We propose the following rotationally invariant generalized AMP (RI-GAMP) to estimate $\bx^* \in \reals^d$ from the observation $\by \in \reals^n$ and the design matrix $\bA \in \reals^{n \times d}$.  For $t \ge 1$, compute:
\begin{align}
    & \bx^t = \bA^{\sT} \bs^t - \sum_{i=1}^{t-1} \beta_{ti} \, \hbx^i, \qquad 
    \hbx^t  = f_t(\bx^1, \ldots, \bx^t), \label{eq:AMP_xt_update} \\
    & \br^t = \bA \hbx^t  - \sum_{i=1}^t \alpha_{ti}  \, \bs^i, \qquad \bs^{t+1} = h_{t+1}(\br^1, \ldots, \br^t, \,  \by).  \label{eq:AMP_rt_update}
\end{align}
The iteration is initialized with $\bs^1=h_1(\by)$ and $\bx^{1} = \bA^{\sT} \bs^1$. 
For $t \geq 1$, the functions $f_t: \reals^{t} \to \reals$ and $h_{t+1}: \reals^{t+1} \to \reals$ are applied row-wise to vectors and matrices. The scalars $\{ \alpha_{ti} \}_{i=1}^t$ and $\{ \beta_{ti} \}_{i=1}^{t-1}$ are obtained in terms of two lower-triangular matrices $\bPsi_{t+1}, \bPhi_{t+1} \in \reals^{(t+1) \times (t+1)}$. These matrices are defined as 
\begin{align}\label{eq:Psi_Phi_def}
    \bPsi_{t+1} = \begin{pmatrix}
    0 & 0 & \ldots & 0 & 0 \\
    0 & \< \partial_1 \hbx^1 \> & 0 & \ldots & 0 \\
    0  &  \< \partial_1 \hbx^2 \> &  \< \partial_2 \hbx^2 \> & \ldots & 0 \\
    \vdots & \vdots & \vdots & \ddots & \vdots \\
    0 & \< \partial_1 \hbx^t \> & \< \partial_2 \hbx^t \> & \ldots & \< \partial_t \hbx^t \>
    \end{pmatrix}, \nonumber\\
    \bPhi_{t+1} = \begin{pmatrix}
    0 & 0 & \ldots & 0 & 0 \\
    \< \partial_g \bs^1 \> &  0 &  0 & \ldots & 0 \\
    \< \partial_g \bs^2 \>  &  \< \partial_1 \bs^2 \> &  0 & \ldots & 0 \\
    \vdots & \vdots & \ddots & \vdots & \vdots \\
    \< \partial_g \bs^t \> & \< \partial_1 \bs^t \> & \ldots & \< \partial_{t-1} \bs^t \> 
    &  0
    \end{pmatrix},
\end{align}
where for $k \in [t]$, the vector $\partial_k \hbx^t \in \reals^d$  denotes the partial derivative  $\partial_{x_k} f_t(x_1, \ldots, x_t)$ applied row-wise to $\hbx^t=f_t(\bx^1, \ldots, \bx^t)$. Similarly,  the vector $\partial_k \bs^t \in \reals^n$  denotes the partial derivative  $\partial_{r_k} h_{t}(r_1, \ldots, r_{t-1}, y)$ applied row-wise to $\bs^t = h_t(\br^1, \ldots, \br^{t-1}, \by)$. Recalling that $y=q(g, \veps)$, we can view
$h_{t}(r_1, \ldots, r_{t-1}, q(g, \veps))$ as a function of $(t+1)$  variables, with $\partial_g h_{t}(r_1, \ldots, r_{t-1}, q(g, \veps))$ being the partial derivative with respect to $g$. For $t>1$, the vector $\partial_g \bs^t$ denotes this partial derivative applied row-wise to $\bs^t = h_t(\br^1, \ldots, \br^{t-1}, q(\bg, \bveps))$, and the vector $\partial_g \bs^1$ is  defined via the partial derivative $\partial_g h_{1}(q(g, \veps))$. Next, recalling that $\{ \kappa_{2k} \}_{k \ge 1}$ denote the rectangular free cumulants, define  matrices $\bM^\alpha_{t+1}, \bM^\beta_{t+1} \in \reals^{(t+1) \times (t+1)}$ as:
\begin{align}
    & \bM^\alpha_{t+1} =     \sum_{j=0}^{t+1} \, \kappa_{2(j+1)} \, \bPsi_{t+1} \Big(\bPhi_{t+1} \bPsi_{t+1}\Big)^j, \qquad \label{eq:Malpha_def} \\
    & \bM^\beta_{t+1} =    \delta \sum_{j=0}^{t} \, \kappa_{2(j+1)} \, \bPhi_{t+1} \Big(\bPsi_{t+1} \bPhi_{t+1}\Big)^j. \label{eq:Mbeta_def}
\end{align}
Then, the coefficients $\{ \alpha_{ti} \}_{i=1}^t$ and $\{ \beta_{ti} \}_{i=1}^{t-1}$ in \eqref{eq:AMP_xt_update}-\eqref{eq:AMP_rt_update} are obtained from the last row of $\bM^\alpha_{t+1}$ and $\bM^\beta_{t+1}$ as: %Specifically, 
\begin{align}
    \label{eq:alpha_def}
    & (\alpha_{t1}, \ldots, \alpha_{tt})= (\,
    (\bM^\alpha_{t+1})_{t+1, 2}\,, \, \ldots, \, (\bM^\alpha_{t+1})_{t+1, t+1} \, ), \\
     \label{eq:beta_def}
     & (\beta_{t1}, \ldots, \beta_{t,t-1}) = (\, (\bM^\beta_{t+1})_{t+1, 2}, \, \ldots, \, (\bM^\beta_{t+1})_{t+1, t} \,  ).
\end{align}

% \subsection{State Evolution}

\paragraph{Estimating the free cumulants.}  \label{para:freecum_est}
From the definitions of $\{ \alpha_{ti} \}_{i=1}^t$ and $\{ \beta_{ti} \}_{i=1}^{t-1}$ above, it follows that we need the free cumulants 
$\{ \kappa_{2(j+1)} \}_{j=0}^{t+1} $ 
to compute the first $t$ iterates of RI-GAMP in \eqref{eq:AMP_rt_update}. These free cumulants can be recursively computed from the moments 
$\{ m_{2(j+1)} \}_{j=0}^{t+1}$ of the spectral distribution of $\bA \bA^{\sT}$, using the formula \eqref{eq:mcrel2rect} in Appendix \ref{subsec:freerect}. The moments $\{ m_{2(j+1)} \}_{j=0}^{t+1}$  can be  estimated in $O(d^2)$ time via the following simple algorithm proposed in \cite{LiuMemoryAMP20}. Given $\bA \in \mathbb{R}^{n \times d}$, pick an independent standard Gaussian vector $\bs^0 \sim \normal(\bzero, \bI_n)$, and for $k \ge 1$, compute $\bs^k = \bA^{\sT} \bs^{k-1}$ for odd $k$  and $\bs^k = \bA \bs^{k-1}$ for even $k$.  Then, $\frac{\| \bs^k\|^2}{d}$ is a consistent estimate of the $k$-th moment of the spectral distribution of $\bA \bA^{\sT}$.  Thus,  the complexity of estimating the free cumulants is of the same order as one iteration of  RI-GAMP.

\paragraph{State evolution.}
The  coefficients   $\{ \alpha_{ti} \}_{i=1}^t$ and $\{ \beta_{ti} \}_{i=1}^{t-1}$ play a crucial role in debiasing the AMP iterates, ensuring that their limiting empirical distributions are accurately captured by state evolution. Indeed, Theorem \ref{thm:main} shows that the joint  empirical distribution of 
$(\bg, \br^1, \ldots, \br^t)$ converges to a ($t+1$)-dimensional Gaussian distribution $\normal(\bzero,\barbSigma_{t+1})$. Similarly, the joint  empirical distribution of 
$( \bx^1 - \bar{\mu}_{1} \bx, \ldots, \bx^t - \bar{\mu}_{t} \bx)$ converges to a $t$-dimensional  Gaussian  $\normal(\bzero, \barbOmega_t)$. We define the  covariance matrices $\barbOmega_{t}, \barbSigma_t \in \reals^{t \times t}$ and the vector 
$\barbmu_t  \equiv (\bar{\mu}_1, \ldots, \bar{\mu}_t)$ recursively for $t \ge 1$, starting with 
\beq
\label{eq:SE_init}
\begin{split}
&\barbSigma_1=\barkap_2 \E\{X_*^2\}, \quad \barbmu_1= \delta \barkap_2  \,  \E \{ \partial_g h_1(q(G, \veps))\},\\
& \barbOmega_1=\delta \barkap_2 \E\{h_1(q(G, \veps))^2 \}  +  \delta \barkap_4 \E\{ X_*^2\} (\E\{ \partial_g h_1(q(G, \veps))\})^2 , 
\end{split}
\eeq
where $G \sim \normal(0, \,  \barkap_2 \E\{X_*^2\})$ is independent of $\veps$. Here, $X_*$ is the law of  the limiting empirical distribution of the signal, as defined in Section \ref{sec:prel}.
For $t \ge 1$, given $\barbmu_t, \barbOmega_t, \barbSigma_t$, let 
\begin{align}
    & (G, R_1, \ldots, R_{t-1})   \sim \normal(\bzero, \barbSigma_t), \nonumber\\
    & S_t = h_t(R_1, \ldots, R_{t-1}, Y), \ \text{where } Y=q(G, \veps), \label{eq:GRi_def}\\
    & (X_1, \ldots X_t) = \barbmu_t X_*  +  (W_1, \ldots, W_t),
    \ \text{where }  (W_1, \ldots, W_t) \sim \normal(\bzero, \barbOmega_t) \text{ is  independent of } X_*, \nonumber  \\
    & \hat{X}_t = f_t(X_1, \ldots X_t).       \label{eq:Xi_def}
\end{align}
Let $\barbDelta_{t+1}, \barbGamma_{t+1} \in \reals^{(t+1) \times (t+1)}$ be symmetric matrices with entries given by 
\begin{align}
    &(\barbDelta_{t+1})_{1,i} = (\barbDelta_{t+1})_{i,1} =0, \nonumber\\ \
    & (\barbDelta_{t+1})_{i+1,j+1} = \E\{S_i S_j \}, 
    \quad  i, j \in [t], \label{eq:BarDelta_def} \\
   &  
  (\barbGamma_{t+1})_{1,1}=\E\{ X_*^2 \},\nonumber\\
  &  (\barbGamma_{t+1})_{1,i+1} = (\barbGamma_{t+1})_{i+1,1} =\E\{ X_* \hat{X}_i \}, \nonumber \\
    & (\barbGamma_{t+1})_{i+1,j+1} = \E\{ \hat{X}_i \hat{X}_j \}, \quad i, j \in [t].  \label{eq:BarGamma_def}
\end{align}
Furthermore, let $\barbPsi_{t+1}, \barbPhi_{t+1}$ denote the deterministic versions of the matrices $\bPsi_{t+1}, \bPhi_{t+1}$ in \eqref{eq:Psi_Phi_def}, obtained by replacing the empirical averages by expectations. Specifically, to obtain $\barbPsi_{t+1}, \barbPhi_{t+1}$ we replace the entries as follows:
\begin{equation}
    \begin{split}
       &  \<  \partial_k \hbx^t \> \to  \E\{ \partial_k \hat{X}_t\} = \E\{ \partial_{X_k}f_t(X_1, \ldots, X_t) \}, % \,\,   k \in [t],
       \\
       &  \<  \partial_k \bs^t \> \to \E\{ \partial_k S_t \} = 
        \E\{ \partial_{R_k} h_t(R_1, \ldots, R_{t-1}, q(G, \veps)) \}, %\,  k \in [t-1], \\
        \\
       &  \<  \partial_g \bs^t \> \hspace{-.1em}\to\hspace{-.1em} \E\{ \partial_g S_t \}\hspace{-.1em} = \hspace{-.1em}
        \E\{ \partial_{g} h_t(R_1, \ldots, R_{t-1}, q(g, \veps))  \vert_{g=G} \}.
    \end{split}
    \label{eq:emp_avg_shorthand}
\end{equation}

We now describe how $\barbSigma_{t+1}, \barbOmega_{t+1}, \barbmu_{t+1}$ are computed from $\barbSigma_t, \barbOmega_t, \barbmu_t$. Given $\barbSigma_t, \barbOmega_t, \barbmu_t$, we can evaluate the matrices $\barbDelta_{t+1}, \barbGamma_{t+1}, \barbPsi_{t+1}, \barbPhi_{t+1}$. 
From these, we compute 
$\barbSigma_{t+1} \in \reals^{(t+1) \times (t+1)}$ as
\beq
\barbSigma_{t+1} =  \sum_{j=0}^{2t+1} \barkap_{2(j+1)} \, \bXi_{t+1}^{(j)},
\label{eq:barSigma_def}
\eeq
where $\bXi_{t+1}^{(0)} = \barbGamma_{t+1}$, and for $j \ge 1$:
\begin{align}
\bXi_{t+1}^{(j)} &  = \sum_{i=0}^{j} (\barbPsi_{t+1} \barbPhi_{t+1})^i \, \barbGamma_{t+1} 
\Big((\barbPsi_{t+1} \barbPhi_{t+1})^{\sT}\Big)^{j-i}
\nonumber \\
&  +  \sum_{i=0}^{j-1} (\barbPsi_{t+1} \barbPhi_{t+1})^i  \barbPsi_{t+1}\barbDelta_{t+1} \barbPsi_{t+1}^{\sT}
\Big(\hspace{-.2em}(\barbPsi_{t+1} \barbPhi_{t+1})^{\sT}\hspace{-.1em}\Big)^{j-i-1}. \label{eq:Xi_tj_def} 
\end{align}
Recalling that $\barbSigma_{t+1}$ is the covariance of $(G, R_1, \ldots, R_t)$, we can now compute $\barbDelta_{t+2}, \barbPhi_{t+2} \in \reals^{(t+2) \times (t+2)}$.
Using these, we define a symmetric
$(t+2) \times (t+2)$ matrix $\bOmega'_{t+2}$, whose first row and column equal zero and whose lower right $(t+1) \times (t+1)$ submatrix equals $\barbOmega_{t+1}$. Specifically, 
\beq
\bOmega'_{t+2} =  \delta \sum_{j=0}^{2(t+1)} \barkap_{2(j+1)} \bTheta_{t+2}^{(j)},
\label{eq:Omega_pr_def}
\eeq
where $\bTheta_{t+2}^{(0)} = \barbDelta_{t+2}$, and for $j \ge 1$:
\beq
\begin{split}
\bTheta_{t+2}^{(j)} &  = \sum_{i=0}^j (\barbPhi_{t+2} \barbPsi_{t+2} )^i \, \barbDelta_{t+2}  \Big((\barbPhi_{t+2}\barbPsi_{t+2})^{\sT}\Big)^{j-i}  \\
&  +  \sum_{i=0}^{j-1} (\barbPhi_{t+2} \barbPsi_{t+2} )^i \, \barbPhi_{t+2} \barbGamma_{t+2}  \barbPhi_{t+2}^{\sT}
\Big(\hspace{-.1em}(\barbPhi_{t+2}\barbPsi_{t+2})^{\sT}\hspace{-.1em}\Big)^{j-i-1} \hspace{-0.2em}.
\end{split}
\label{eq:Theta_t1j_def}
\eeq
Then, the entries of the covariance matrix $\barbOmega_{t+1} \in \reals^{(t+1) \times (t+1)}$ are given by:
\beq
(\barbOmega_{t+1})_{ij}= (\bOmega'_{t+2})_{i+1, j+1}, \quad  i,j \in [t+1].
\eeq
Finally, we compute the mean parameter \\ $\bar{\mu}_{t+1} = \Big(\barbM^\beta_{t+2}\Big)_{t+2, 1}$, where
\beq
\label{eq:mut1_def}
%\bar{\mu}_{t+1} = \Big(\barbM^\beta_{t+2}\Big)_{t+2, 1}, \  \text{ where } \ 
\barbM^\beta_{t+2} = \delta \sum_{j=0}^{t+1} \, \bar{\kappa}_{2(j+1)} \, \barbPhi_{t+2} \Big(\barbPsi_{t+2} \barbPhi_{t+2}\Big)^j.
\eeq

Though the formulas for $\bTheta_{t+2}^{(j)}$ in \eqref{eq:Theta_t1j_def} and $\barbM^\beta_{t+2}$ in \eqref{eq:mut1_def} contain $\barbGamma_{t+2}$ and $\barbPsi_{t+2}$, the last rows and columns of these matrices are zeroed out in the computation (due to the form of $\barbDelta_{t+2}$ and  $\barbPhi_{t+2}$). Therefore the formulas depend only on the top left submatrices of $\barbGamma_{t+2}$ and $\barbPsi_{t+2}$, namely,  $\barbGamma_{t+1}$ and $\barbPsi_{t+1}$.  We also note that the matrices $ \barbOmega_t$ and $\barbSigma_t$ are the top left submatrices of $\barbOmega_{t+1}$ and $\barbSigma_{t+1}$, respectively. Similarly, the mean vector $\barbmu_{t+1}$ is obtained by appending $\bar{\mu}_{t+1}$ to $\barbmu_{t}$.

\paragraph{Main result.}
Having defined the state evolution recursion to  compute  $\barbmu_t, \barbOmega_t, \barbSigma_t$ (which specify the joint distributions in \eqref{eq:GRi_def}-\eqref{eq:Xi_def}), we are ready to state our main result.    We make the following assumption on the functions $f_t, h_t$ used in the AMP \eqref{eq:AMP_xt_update}-\eqref{eq:AMP_rt_update}, for $t \ge 1$:
\begin{enumerate}
    \item[\textbf{(A1)}]  The functions $f_t(X_1, \ldots, X_t)$ and $h_t(R_1, \ldots, R_{t-1}, \, q(G, \veps))$ are  Lipschitz in each of their arguments. The partial derivatives $\partial_{X_k} f_t((X_1, \ldots, X_t))$, 
    $\partial_G h_t(R_1, \ldots, R_{t-1}, \, q(G, \veps))$, and  $\partial_{R_\ell} h_t(R_1, \ldots, R_{t-1}, \, q(G, \veps))$ are all continuous on sets of probability $1$, under the laws of $(X_1 \ldots, X_t)$ and $(G, R_1,\ldots, R_{t-1})$ given in  \eqref{eq:GRi_def}-\eqref{eq:Xi_def}.
    \label{assump:A1}
\end{enumerate}

\begin{theorem}
Consider a rotationally invariant generalized linear model with the assumptions in Section \ref{sec:prel} and the AMP \eqref{eq:AMP_xt_update}-\eqref{eq:AMP_rt_update} with the assumption \textbf{(A1)} above.
Let  $\psi:\reals^{2t+1} \to \reals$ and $\phi: \reals^{2t+2} \to \reals$ be any pseudo-Lipschitz functions of order $2$. Then for each $t \ge 1$,  we almost surely have
\begin{align}
& \lim_{n \to \infty} \frac{1}{d} \sum_{i=1}^{d} \psi(x^1_i, \ldots, x^t_i, \,  \hat{x}^1_i, \ldots, \hat{x}^t_i, \, x^*_i) = \E\{ \psi(X_1, \ldots, X_t, \hat{X}_1, \ldots, \hat{X}_t, \, X_*) \}, \label{eq:PL2_main_resultx} \\
& \lim_{n \to \infty} \frac{1}{n} \sum_{i=1}^{n} \phi(r^1_i, \ldots, r^t_i, \,  s^1_i, \ldots, s^{t+1}_i, \, y_i) = \E\{ \phi(R_1, \ldots, R_t, \, S_1, \ldots, S_{t+1}, \, Y) \},
\label{eq:PL2_main_resultr}
\end{align}
where the random variables on the right are defined in \eqref{eq:GRi_def}-\eqref{eq:Xi_def}.
 Equivalently, as $n \to \infty$, the joint empirical distributions of $(\bx^1, \ldots, \bx^t, \hbx^1, \ldots, \hbx^t, \bx^*)$ and $(\br^1, \ldots, \br^t, \bs^1, \ldots, \bs^{t+1}, \by)$ converge almost surely in Wasserstein-2 distance to $(X_1, \ldots, X_t, \hat{X}_1, \ldots, \hat{X}_t, X_*)$ and $(R_1, \ldots, R_t, S_1, \ldots, S_{t+1}, Y)$, respectively.
\label{thm:main}
\end{theorem}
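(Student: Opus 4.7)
The plan is to carry out the proof sketched in the introduction: construct an auxiliary AMP iteration that (i) is an instance of the abstract rotationally invariant AMP analyzed in \cite{fan2020approximate, zhong2021approximate}, so that a state evolution for it is already available, and (ii) is asymptotically close, iterate-by-iterate and in normalized $\ell_2$ norm, to the RI-GAMP iteration \eqref{eq:AMP_xt_update}-\eqref{eq:AMP_rt_update}. Given such a coupling, the Wasserstein-$2$ conclusions \eqref{eq:PL2_main_resultx}-\eqref{eq:PL2_main_resultr} for RI-GAMP follow from the corresponding conclusions for the auxiliary AMP, since an order-$2$ pseudo-Lipschitz test function obeys the perturbation bound \eqref{eq:PL_def} and the second moments of the iterates are controlled uniformly by the state evolution itself.

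\textbf{Auxiliary AMP.} I would define an iteration with the same denoisers $f_t, h_t$ as in RI-GAMP and the same initialization $\tilde\bs^1 = h_1(\by)$, $\tilde\bx^1 = \bA^{\sT}\tilde\bs^1$, but with the empirical debiasing coefficients $\alpha_{ti}, \beta_{ti}$ replaced by their deterministic state-evolution counterparts $\bar\alpha_{ti}, \bar\beta_{ti}$, obtained from the population matrices $\barbPhi_{t+1}, \barbPsi_{t+1}$ via \eqref{eq:Malpha_def}-\eqref{eq:Mbeta_def} with the substitutions \eqref{eq:emp_avg_shorthand}. The iteration is analyzed with the additional side-information vectors $\bx^*$ on the signal side and $(\bg,\bveps)$ with $\bg = \bA\bx^*$ on the observation side; the use of $\bx^*$ is precisely what makes the auxiliary iteration a proof device rather than an implementable algorithm. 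Rotational invariance of $\bQ$ in $\bA = \bO^{\sT}\bLambda\bQ$ renders $\bg$ asymptotically Gaussian with variance $\barkap_2\E\{X_*^2\}$, matching \eqref{eq:SE_init}. The state evolution theorem of \cite{fan2020approximate, zhong2021approximate}, specialized to the rectangular rotationally invariant setting, then yields that the joint empirical distributions of the auxiliary iterates converge almost surely in $W_2$ to the Gaussian laws pinned down by $\barbSigma_t, \barbOmega_t, \barbmu_t$ as defined in \eqref{eq:barSigma_def}-\eqref{eq:Omega_pr_def}.

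\textbf{Inductive closeness.} The heart of the argument is an induction on $t$ showing that the differences between RI-GAMP iterates and their auxiliary counterparts vanish in normalized $\ell_2$ norm almost surely, i.e. $\|\bx^t-\tilde\bx^t\|/\sqrt d \to 0$ and analogously for $\br^t, \bs^t, \hbx^t$. Two error sources need to be controlled at each step: (a) $\alpha_{ti}, \beta_{ti}$ differ from $\bar\alpha_{ti},\bar\beta_{ti}$ because the entries of $\bPhi_{t+1}, \bPsi_{t+1}$ in \eqref{eq:Psi_Phi_def} are empirical averages rather than expectations; (b) the denoisers are evaluated at slightly perturbed inputs. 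For (b), the Lipschitz hypothesis in \textbf{(A1)} propagates prior-iterate errors linearly, and the operator norm of $\bA$ is a.s.\ bounded because $\Lambda$ is compactly supported, so the per-step amplification is bounded. For (a), I would combine the inductive $W_2$ convergence already established for the auxiliary iterates with the a.s.\ continuity of the partial derivatives required in \textbf{(A1)}: a variant of the continuous-mapping theorem for $W_2$ convergence (as in \cite{feng2021unifying}) then yields $\langle\partial_k\hbx^t\rangle\to\E\{\partial_k\hat X_t\}$ and the analogous statements for $\langle\partial_k\bs^t\rangle$ and $\langle\partial_g\bs^t\rangle$, hence $\alpha_{ti}\to\bar\alpha_{ti}$ and $\beta_{ti}\to\bar\beta_{ti}$.

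\textbf{Main obstacle and conclusion.} The most delicate point is the derivative-average concentration in step (a): the partial derivatives entering $\bPhi_{t+1}, \bPsi_{t+1}$ are only assumed continuous on a set of probability $1$ under the limiting law, and a priori only bounded a.e.\ by the Lipschitz constants of $f_t, h_t$. To upgrade the PL$2$ convergence of the iterates into convergence of these derivative averages I would truncate the derivatives on a large compact set, apply a Lipschitz-approximation/Portmanteau step for $W_2$ convergence on the truncated region, and control the tails using the uniform second-moment bounds supplied by the state evolution of the auxiliary AMP. Once the induction is closed in this way, \eqref{eq:PL2_main_resultx}-\eqref{eq:PL2_main_resultr} follow by applying the PL$2$ bound \eqref{eq:PL_def} to the difference between $\psi$ evaluated on the RI-GAMP iterates and on the (close) auxiliary iterates, whose limiting joint law is exactly that of $(X_1,\ldots,X_t,\hat X_1,\ldots,\hat X_t,X_*)$ or $(R_1,\ldots,R_t,S_1,\ldots,S_{t+1},Y)$ as in \eqref{eq:GRi_def}-\eqref{eq:Xi_def}.
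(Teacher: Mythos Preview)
Your high-level strategy---couple RI-GAMP to an auxiliary iteration that fits the abstract rotationally invariant AMP of \cite{fan2020approximate,zhong2021approximate}, then transfer state evolution via an inductive $\ell_2$-closeness argument---is exactly the paper's. The handling of the derivative averages in your step (a) is also right and matches the paper's Lemma~\ref{lem:lipderiv}. But the auxiliary AMP you actually write down does \emph{not} fit the abstract framework, and this is the genuine gap.

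Concretely: you take the auxiliary iteration to be RI-GAMP itself with the empirical $\alpha_{ti},\beta_{ti}$ swapped for $\bar\alpha_{ti},\bar\beta_{ti}$, same denoisers $f_t,h_t$, same initialization $\tilde\bs^1=h_1(\by)$. Two things go wrong. First, the state evolution in \cite{zhong2021approximate} delivers \emph{centered Gaussian} limits for the iterates, with all non-Gaussian structure entering only through side information that is independent of $\bA$. Your $\tilde\bx^t$ has limiting law $\bar\mu_tX_*+W_t$, and since $f_t$ does not see $\bx^*$, the framework cannot generate the $\bar\mu_tX_*$ part; declaring $\bx^*$ to be ``side information'' is vacuous if the denoiser does not use it. Second, and related, you feed $\by$ (equivalently $\bg=\bA\bx^*$) into the iteration as side information, but $\bg$ depends on $\bA$, violating the independence hypothesis required to invoke \cite{zhong2021approximate}.

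The paper fixes both issues simultaneously by a different coupling. Its auxiliary AMP starts from $\bv^1=\bx^*$, so that $\bm^1=\bA\bx^*=\bg$ is the \emph{first iterate} (and therefore legitimately Gaussian by the abstract theorem), not side information; the only side information is $(\bx^*,\bveps)$, both independent of $\bA$. The denoisers are then re-indexed as $\tilde f_{t+1}(z_1,\dots,z_{t+1},x)=f_t(z_2+\bar\mu_1x,\dots,z_{t+1}+\bar\mu_tx)$ and $\tilde h_{t+1}(m_1,\dots,m_t,\veps)=h_t(m_2,\dots,m_t,q(m_1,\veps))$, which bakes the signal shift into the nonlinearity and makes the iterates $\bz^t$ mean-zero Gaussian as required. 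The coupling to RI-GAMP is then $\bx^t\approx\bz^{t+1}+\bar\mu_t\bx^*$, $\hbx^t\approx\bv^{t+1}$, $\br^t\approx\bm^{t+1}$, $\bs^t\approx\bu^{t+1}$---a one-step index shift plus an affine correction, not the identity coupling you propose. A separate short argument (Lemma~\ref{lem:SE_equiv}) then checks that the auxiliary AMP's state evolution coincides with $(\barbSigma_t,\barbOmega_t,\barbmu_t)$. Once you make this change to the auxiliary construction, your inductive closeness argument and the derivative-concentration step go through essentially as you describe.
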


The proof of the theorem is given in Appendix \ref{app:full_proof}, and we provide a proof sketch in Section \ref{sec:proof_sketch}.
%\paragraph{Recovering GAMP \cite{RanganGAMP} as a special case.}
When the design matrix $\bA$ has i.i.d. $\normal(0, 1/n)$ entries, we have $\barkap_2=\delta$ and $\barkap_{2k}=0$ for $k  \ge 2$. In this case, the RI-GAMP \eqref{eq:AMP_xt_update}-\eqref{eq:AMP_rt_update}, with denoising functions of the form $f_t(\bx^t)$ and $h_{t+1}(\br^t,  \by)$, reduces to the existing GAMP algorithm \cite{RanganGAMP}. The  state evolution recursion also reduces to that of GAMP (see, e.g., Section 4 of \cite{feng2021unifying}). 
This opens up an exciting research direction on using RI-GAMP to generalize results where GAMP has been used as a proof technique under Gaussian model assumptions. One example is to determine the distributional properties of spectral estimators for rotationally invariant GLMs. %\cite{mondelli2021optimal, mondelli2021approximate}.

\paragraph{MSE and correlation.} The result \eqref{eq:PL2_main_resultx} readily leads to the evaluation of the usual quantities of interest, such as the mean squared error (MSE) and the normalized squared correlation. Indeed, by taking $\psi(\hat{x}_i^t, x_i^*)=(\hat{x}_i^t- x_i^*)^2$, we have that $\frac{1}{d}\|\hbx^t-\bx^*\|^2_2$  $ \to \mathbb E\{(\hat{X}_t-X_*)^2\}$ for each $t \ge 1$. Furthermore, by taking $\psi(\hat{x}_i^t, x_i^*)=\hat{x}_i^t\cdot x_i^*$, $\psi(\hat{x}_i^t)=(\hat{x}_i^t)^2$ and $\psi(x_i^*)=(x_i^*)^2$, we have that $|\langle\hbx^t,\bx^*\rangle|^2/(\|\hbx^t\|^2\|\bx^*\|^2)$ tends to $(\mathbb E\{X_t \, X_*\})^2/(\mathbb E\{X_t^2\}\mathbb E\{X_*^2\})$. 

\paragraph{Empirical state evolution parameters.}
We can define empirical versions of the state evolution parameters, denoted by $\bSigma_{t+1}, \bOmega_{t+1}, \bmu_{t+1}$, by replacing $\barbPsi_{k}, \barbPhi_{k}, \barbDelta_{k}, \barbGamma_{k}$ ($k\in \{t+1, t+2\}$) with  $\bPsi_{k}, \bPhi_{k}, \bDelta_{k}, \bGamma_{k}$ in \eqref{eq:Xi_tj_def}, \eqref{eq:Theta_t1j_def}, and \eqref{eq:mut1_def}. The latter matrices are computed using empirical averages instead of expectations: $\bPhi_k, \bPsi_k$ are defined in \eqref{eq:Psi_Phi_def} and for $\bDelta_k, \bGamma_k$, we replace the expectations $\E\{ S_i S_j \}$ and $\E\{ \hat{X}_i \hat{X}_j \}$ in \eqref{eq:BarDelta_def}-\eqref{eq:BarGamma_def} by $\< \bs^i, \bs^j \>/n$ and 
$\< \hbx^i, \hbx^j \>/d$. The expectations $\E\{ X_* \hat{X}_i \}$ in $\barbGamma_k$ can be estimated for the case of posterior mean denoisers using the identity in \eqref{eq:spest}. Theorem \ref{thm:main} gives that the empirical versions of these matrices converge to the deterministic ones almost surely, and therefore, $\bSigma_{t+1} \to \barbSigma_{t+1}$,  $\bOmega_{t+1} \to \barbOmega_{t+1}$, and  $\bmu_{t+1} \to \barbmu_{t+1}$. For the simulations in Section \ref{sec:simu}, RI-GAMP is implemented with state evolution parameters empirically estimated, as this choice leads to more stable numerical results. 
\label{para:emp_SE}

\paragraph{Initialization.} Note that the algorithm is initialized with $\bx^1=\bA^\sT h_1(\by)$. If this initialization is not effective, in the sense that $\frac{1}{d}\langle \bx^1, \bx^*\rangle\to 0$, then state evolution remains stuck at a trivial fixed point, i.e., $\bar{\mu}_t=0$ for all $t$, and all the iterates produced by RI-GAMP are not correlated with the signal.\footnote{For both  linear regression and 1-bit compressed sensing, the standard initialization $\bx^1=\bA^\sT h_1(\by)$ is effective. For a characterization of the GLMs for which this initialization  is not effective for a Gaussian design matrix $\bA$, see (3.13) in \cite{mondelli2021approximate} and the discussion therein. One important example is phase retrieval ($y_i=|\langle\ba_i, \bx^*\rangle|^2+\varepsilon_i$).} To address this issue, we can assume to be given an initialization $\bx^1$, which is correlated with $\bx^*$, i.e., $\frac{1}{d}\langle \bx^1, \bx^*\rangle\to \alpha>0$, and independent of $\bA$. Theorem \ref{thm:main} still holds for such an initialization, with the only change being in the initialization of the state evolution recursion. Suppose that  $\bx^1\stackrel{\mathclap{W_2}}{\longrightarrow} X_1$ for a random variable $X_1$ satisfying $\E\{ X_1 X_* \} =\alpha$. Then, in the state evolution initialization \eqref{eq:SE_init}, we set $\barbmu_1=\mathbb E\{ X_1 X_*\}=\alpha$ and $\barbOmega_1=\mathbb E\{ (X_1- \barbmu_1 X_*)^2\}$ (the parameter $\barbSigma_1$ is unchanged). This ensures that state evolution is not stuck at a trivial fixed point. A practical alternative to assuming an informative initialization is to initialize AMP with a spectral estimator. Analyzing RI-GAMP  with  spectral initialization is an interesting direction for future research. 

%Then, the result of Theorem \ref{thm:main} still holds, and state evolution does not remain stuck at the trivial fixed point at $0$. This solution is rather common in the AMP literature, see e.g. \cite{fan2020approximate}. We regard the study of practical initialization methods as an interesting avenue for future research. \mm{Maybe I am highlighting too much this point.}

\paragraph{Choice of denoisers.} The performance of RI-GAMP is determined by the functions $\{f_t, h_{t+1}\}_{t\ge 1}$. A key question is how to choose these functions to optimize the estimation performance.  Given any choice of $\{f_t, h_{t+1}\}$  satisfying Assumption \textbf{(A1)}, Theorem \ref{thm:main} implies
\begin{align}
 \label{eq:Rt1_dist}
 & \br^{t-1} \, \stackrel{\mathclap{W_2}}{\longrightarrow} \,  R_{t-1} \equiv \frac{(\bar{\bSigma}_t)_{t,1}}{(\bar{\bSigma}_t)_{1,1}} G  \,  + \,  W'_{t-1}, \nonumber\\
& W'_{t-1} \sim \normal\left( 0, \,  (\bar{\bSigma}_t)_{t,t} - \frac{((\bar{\bSigma}_t)_{t,1})^2}{(\bar{\bSigma}_t)_{1,1}}  \right), \quad t \ge 2, \\
 \label{eq:Xt_dist}
 & \bx^{t} \,   \stackrel{\mathclap{W_2}}{\longrightarrow} \,  X_t \equiv \barmu_t X_* \, + \, W_t, \nonumber \\
 & W_t \sim \normal(0, (\bar{\bOmega}_t)_{t,t}) \text{ independent of } X_*, \quad t \ge 1, 
\end{align} 
where the RHS of \eqref{eq:Rt1_dist} and \eqref{eq:Xt_dist} follow from the joint distributions specified in \eqref{eq:GRi_def}-\eqref{eq:Xi_def}. From \eqref{eq:Xt_dist}, we see that the quality of the estimate in each iteration $t$ is governed by the ratio $\barmu_t^2/(\bar{\bOmega}_t)_{t,t}$. Thus, having fixed $\{f_k, h_k\}_{k \le t-1}$, the Bayes-optimal choice for $h_t(r_1, \ldots, r_{t-1},  y)$ maximizes $\barmu_t^2/(\bar{\bOmega}_t)_{t,t}$. Similarly, given $\{f_k\}_{k \le t-1}$ and $\{h_k\}_{k \le t}$, the Bayes-optimal choice for $f_t(x_1, \ldots, x_t)$ maximizes the normalized squared correlation of $R_t$ and $G$, which is proportional to $\frac{(\bar{\bSigma}_{t+1})_{t+1,1}^2}{(\bar{\bSigma}_{t+1})_{t+1,t+1}}$. We remark that, even when the signal prior (i.e., the law of $X_*$) is known,  finding these optimal denoisers is challenging due to the complicated nature of the state evolution recursion \eqref{eq:GRi_def}-\eqref{eq:mut1_def}. However, for the special case of an i.i.d. Gaussian design, the state evolution is considerably simpler and the Bayes-optimal choices are (cf. Section 4.2 of \cite{feng2021unifying}):
\begin{align}
        & f_t(x_t) =c_1\mathbb E\{X_*\mid  X_t = x_t\}, \label{eq:BftG}\\
        & h_{t+1}(r_t, y) =\hspace{-.1em}c_2( \mathbb E\{G | R_t = r_t, Y = y\}\hspace{-.1em}-\hspace{-.1em}\mathbb E\{G | R_t = r_t\}),
        \nonumber
\end{align}
where $c_1, c_2$ are arbitrary non-zero constants. Here, for a general rotationally invariant $\bA$, we propose the following denoisers: 
\begin{align}
       &  f_t(x_1, \ldots, x_t)  =\mathbb E\{X_*\mid X_1 = x_1, \ldots, X_t = x_t\}, \label{eq:Bft}\\
       &  h_{t+1}(r_1, \ldots, r_t, y) \hspace{-.1em}=\hspace{-.1em}\mathbb E\{G | R_1 = r_1, \ldots, R_t = r_t, Y = y\} -\mathbb E\{G\mid R_1 = r_1, \ldots, R_t = r_t\}. \label{eq:Bht} 
\end{align}

For an i.i.d. Gaussian $\bA$, \eqref{eq:Bft}-\eqref{eq:Bht} reduce to \eqref{eq:BftG}, which is provably Bayes-optimal.
When $\bA$ is not Gaussian,  using denoisers that depend on all the preceding iterates (instead of only the most recent one)  can have a remarkable impact on the performance of RI-GAMP. In fact, in the setting of Section \ref{sec:simu} where the eigenvalues of $\bA$ follow a Beta distribution, taking \eqref{eq:BftG} does not improve much over the performance of the existing GAMP algorithm that assumes $\bA$ to be Gaussian (green curves in Figures \ref{fig:linfixdelta}-\ref{fig:1bitCS}). In contrast, taking \eqref{eq:Bft}-\eqref{eq:Bht} leads to a performance close to VAMP (blue curves). Though we do not expect the choices in \eqref{eq:Bft}-\eqref{eq:Bht} to be optimal iteration-by-iteration, based on the simulation results we conjecture that they achieve the same fixed point as the Bayes-optimal denoisers.

\begin{figure}[t]
    \centering
    \subfloat[Linear regression \label{fig:AMPSElin}]{\includegraphics[width=.47\columnwidth]{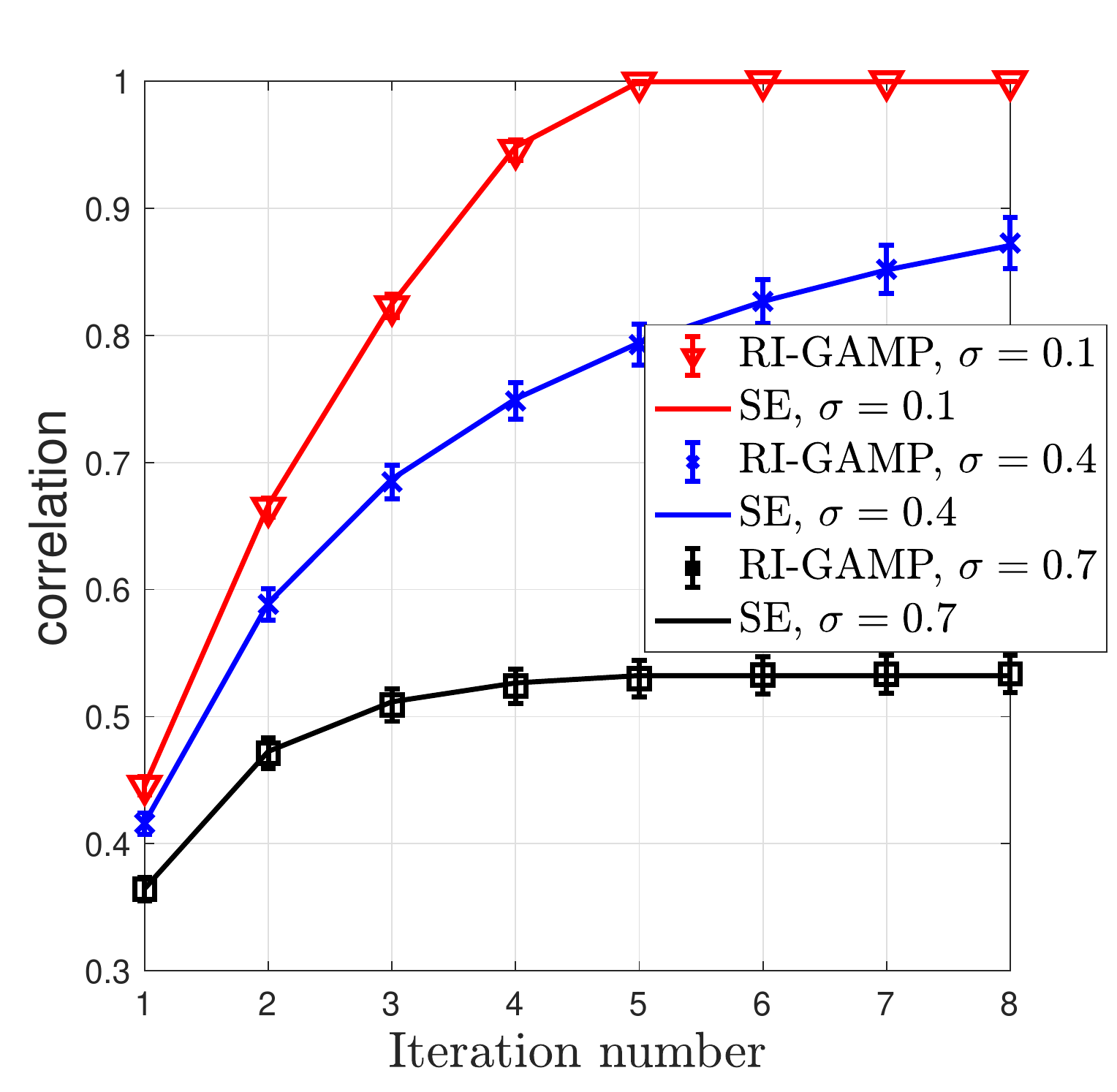}}
  \   \subfloat[1-bit    compressed sensing\label{fig:AMPSERad}]{\includegraphics[width=.47\columnwidth]{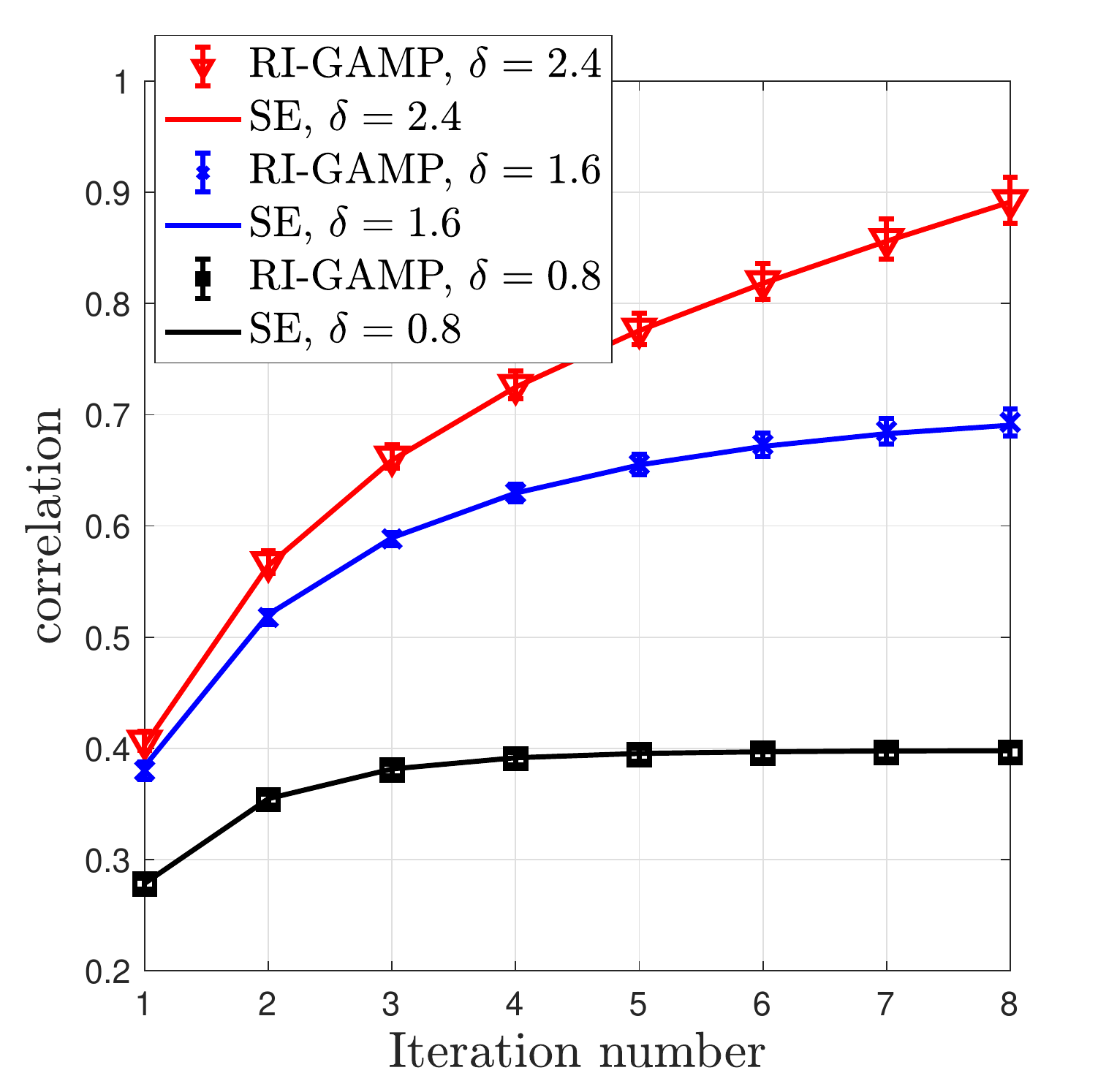}}
\caption{Normalized squared correlation between iterate $\hbx^t$ and signal $\bx^*$, as a function of the number of iterations $t$. Solid lines represent state evolution predictions, and the markers represent the empirical performance of RI-GAMP.}
\label{fig:AMPvsSE}
\end{figure}

\section{Numerical Simulations}\label{sec:simu}

For synthetic data, we consider two models: \emph{(i)} linear regression, i.e., $y_i = \langle \ba_i, \bx^*\rangle+\varepsilon_i$, with $\varepsilon_i\sim\normal(0, \sigma^2)$; \emph{(ii)} noiseless 1-bit compressed sensing, i.e., $y_i = {\rm sign}(\langle \ba_i, \bx^*\rangle)$. The design matrix $\bA$ is rotationally invariant in law, i.e., $\bA = \bO^\sT \bLambda \bQ$, where $\bO$, $\bQ$ are Haar orthogonal matrices, and $\bLambda$ has i.i.d. $\sqrt{6}\cdot {\rm Beta}(1, 2)$ diagonal entries. (The normalization of the ${\rm Beta}(1, 2)$ distribution is chosen to ensure a unit second moment.) In the simulations, the free cumulants $\kappa_{2k}$ are replaced by their limits $\bar{\kappa}_{2k}$, which can be obtained in closed-form (see Appendix \ref{subsec:freerect}). We set $d=8000$, repeat each experiment for $10$ independent runs, and  report the average and error bars at $1$ standard deviation.

We implement the RI-GAMP given in \eqref{eq:AMP_xt_update}-\eqref{eq:AMP_rt_update}, with initialization $\bs^1 = \by$ and $\bx^1 = \bA^\sT \bs^1$.
The denoisers $f_t$ and $h_{t+1}$, for $t\ge 1$, are given by \eqref{eq:Bft}-\eqref{eq:Bht}.  The expressions for these denoisers  and the associated calculations  are  given in Appendix \ref{sec:compBayes}. The denoisers $f_t, h_{t+1}$ and their derivatives depend on the state evolution parameters, which can be estimated consistently from the data. The implementation details are described at the end of Appendix \ref{sec:compBayes}.

\begin{figure}[t]
    \centering
    \subfloat[$\delta=1$\label{fig:delta1}]{\includegraphics[width=.47\columnwidth]{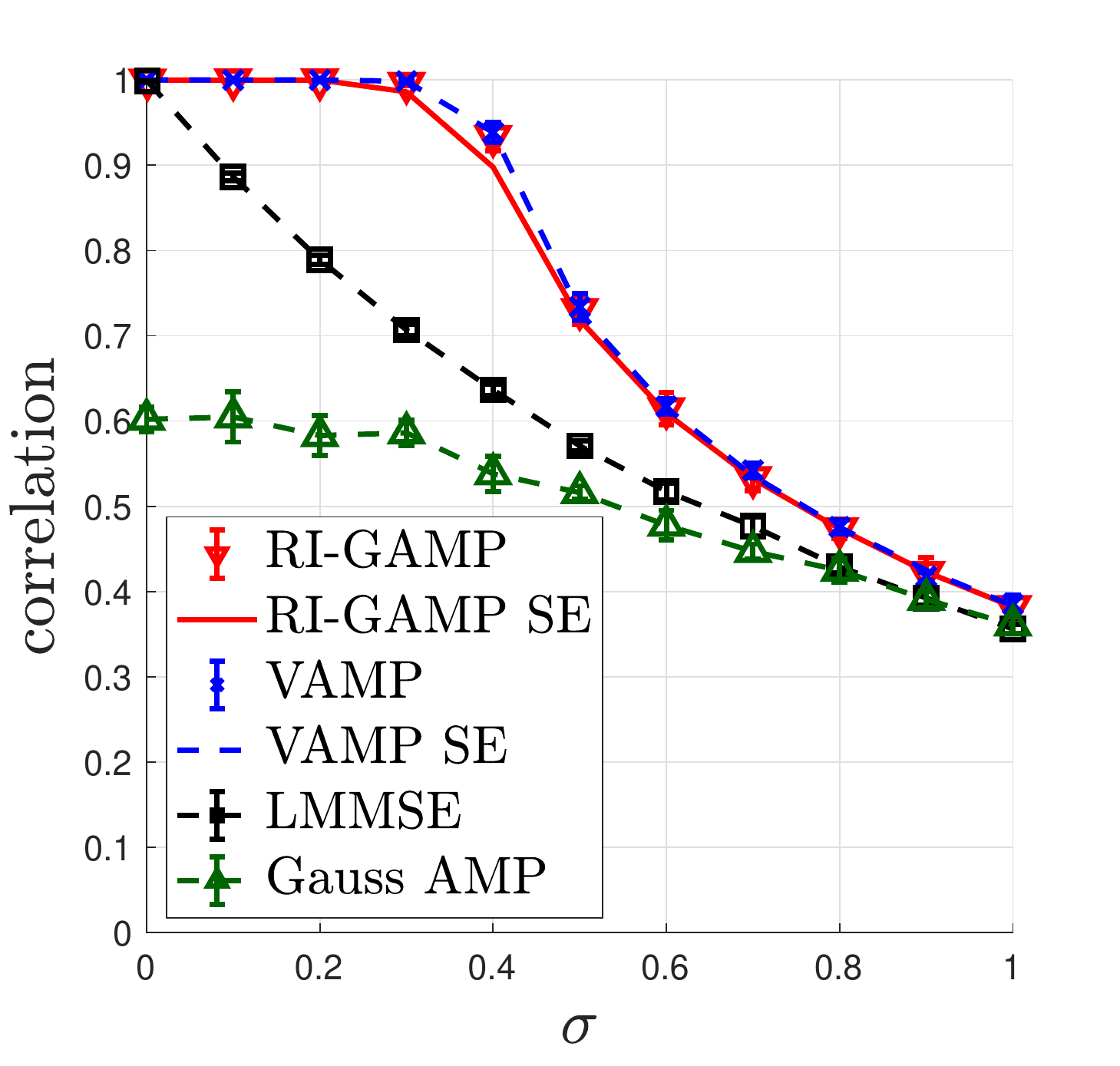}} \
            \subfloat[$\sigma=0.1$\label{fig:sigma0dot1}]{\includegraphics[width=.47\columnwidth]{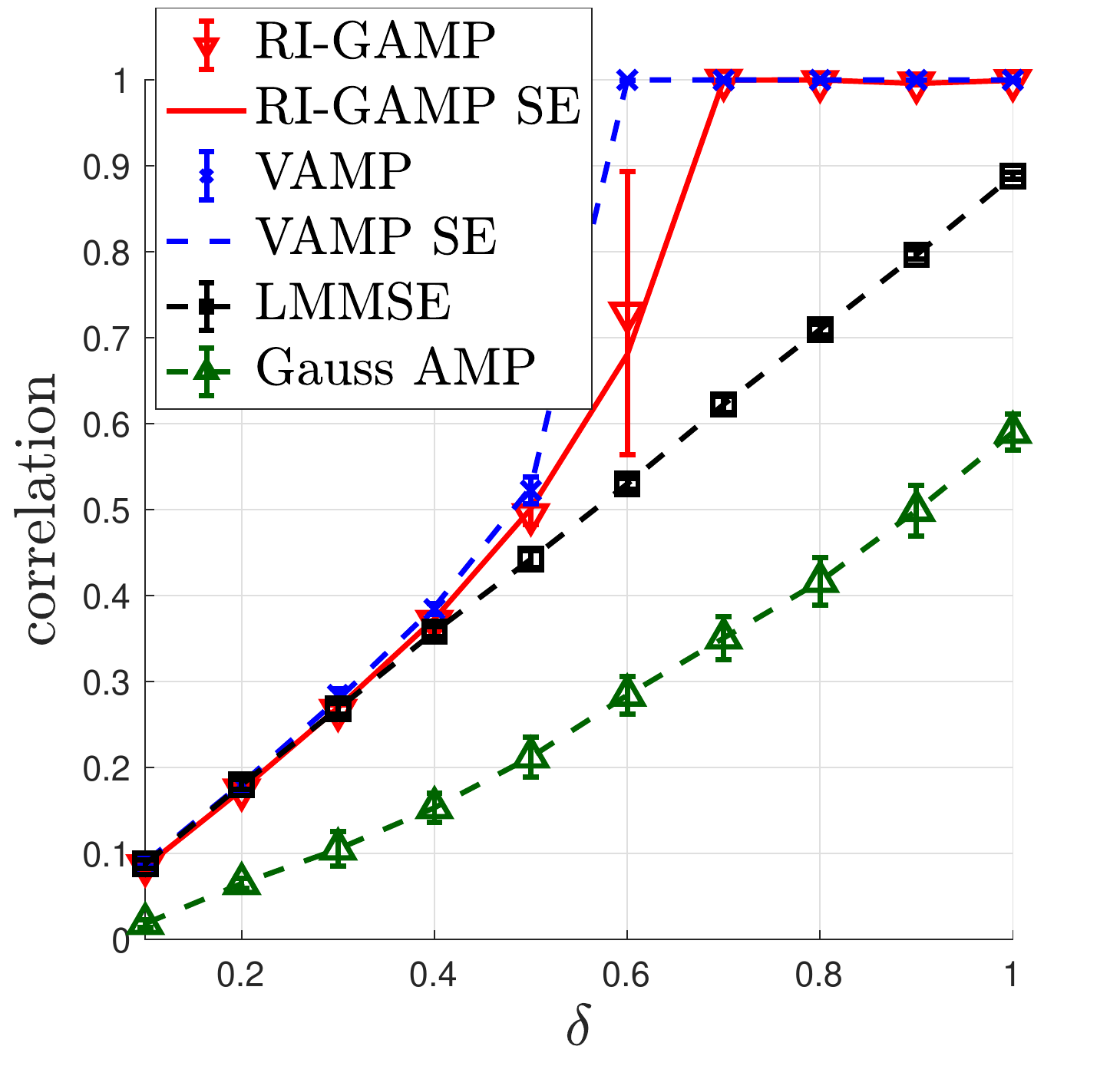}}
\caption{Linear regression with a Rademacher prior: normalized squared correlation vs. noise level $\sigma$ (on the left) and vs. aspect ratio $\delta$ (on the right).}
\label{fig:linfixdelta}
\end{figure}

Figure \ref{fig:AMPvsSE} shows that the state evolution predictions closely match the performance of RI-GAMP for practical values of $d$ and $n$, validating the result of Theorem \ref{thm:main}. We plot the normalized squared correlation $\langle \hbx^t, \bx^*\rangle^2/(\|\hbx^t\|^2 \|\bx^*\|^2)$ as a function of the iteration number. In (a), we consider linear regression with a Rademacher prior, $\delta=1$ and $\sigma\in \{0.1, 0.4, 0.7\}$; and in (b), noiseless 1-bit compressed sensing with a Rademacher prior and $\delta\in \{0.8, 1.6, 2.4\}$. %; and in (c), 1-bit compressed sensing with a Gaussian prior and $\delta\in \{1, 2, 4\}$.
In all cases, the agreement between RI-GAMP and its SE is excellent.  The next two figures show that the performance of RI-GAMP closely matches that of VAMP in a variety of settings. The results for linear regression with a Rademacher prior are shown in Figure \ref{fig:linfixdelta}: %and \ref{fig:linfixsigma}:
on the left, we plot the normalized squared correlation as a function of $\sigma$, for $\delta=1$; on the right, we plot the same metric as a function of $\delta$, for $\sigma=0.1$. Additional results for a different choice of $\delta$ and $\sigma$ are reported in Figure \ref{fig:linfixsigma} in Appendix \ref{app:addnum}. Figure \ref{fig:1bitCS} shows the performance for noiseless 1-bit compressed sensing: we plot  the normalized squared correlation as a function of $\delta$, for two signal priors (Rademacher in (a), and Gaussian in (b)). The \emph{red curve} in each plot corresponds to \emph{RI-GAMP}, together with the related SE.  The \emph{blue curve} corresponds to \emph{VAMP}, together with the related SE. The implementation details for VAMP are given at the end of Appendix \ref{sec:compBayes}. The \emph{green curve} corresponds to the standard \emph{GAMP} algorithm which is derived based on the (incorrect) assumption that $\bA$ is i.i.d. Gaussian. The denoisers $f_t$ and $h_{t+1}$ are given by \eqref{eq:BftG}, which would be Bayes-optimal were the design matrix $\bA$ Gaussian. 
The implementation of GAMP is a special case of our proposed RI-GAMP (obtained by setting all the rectangular free cumulants except $\bar{\kappa}_2$ to $0$).  The GAMP state evolution predictions (not shown in the plots) do not match the performance of the algorithm, since $\bA$ is not Gaussian. Finally, the \emph{black curve} corresponds to \emph{(i)} the \emph{linear minimum mean squared error} (LMMSE) estimator $\hat{\bx} = \bA^\sT (\bA \bA^\sT+\sigma \bI)^{-1} \by$ for linear regression, and \emph{(ii)} a \emph{subgradient} method for 1-bit compressed sensing. This last method minimizes $\norm{[y \odot {\rm sign}(\bA \bx)]_-}_1$ via subgradient descent (here, $\odot$ denotes the Hadamard product and $[a]_- = \max\{-a, 0\}$ is applied component-wise).  The  algorithm was proposed in  \cite{jacques2013robust}  for the recovery of sparse signals, and the original version includes a sparsity enforcing step. For our setup (with no sparsity), we run it without the sparsity enforcing step, and the method reduces to subgradient descent.

\begin{figure}[t]
    \centering
    \subfloat[Rademacher prior\label{fig:1bitCSRad}]{\includegraphics[width=.47\columnwidth]{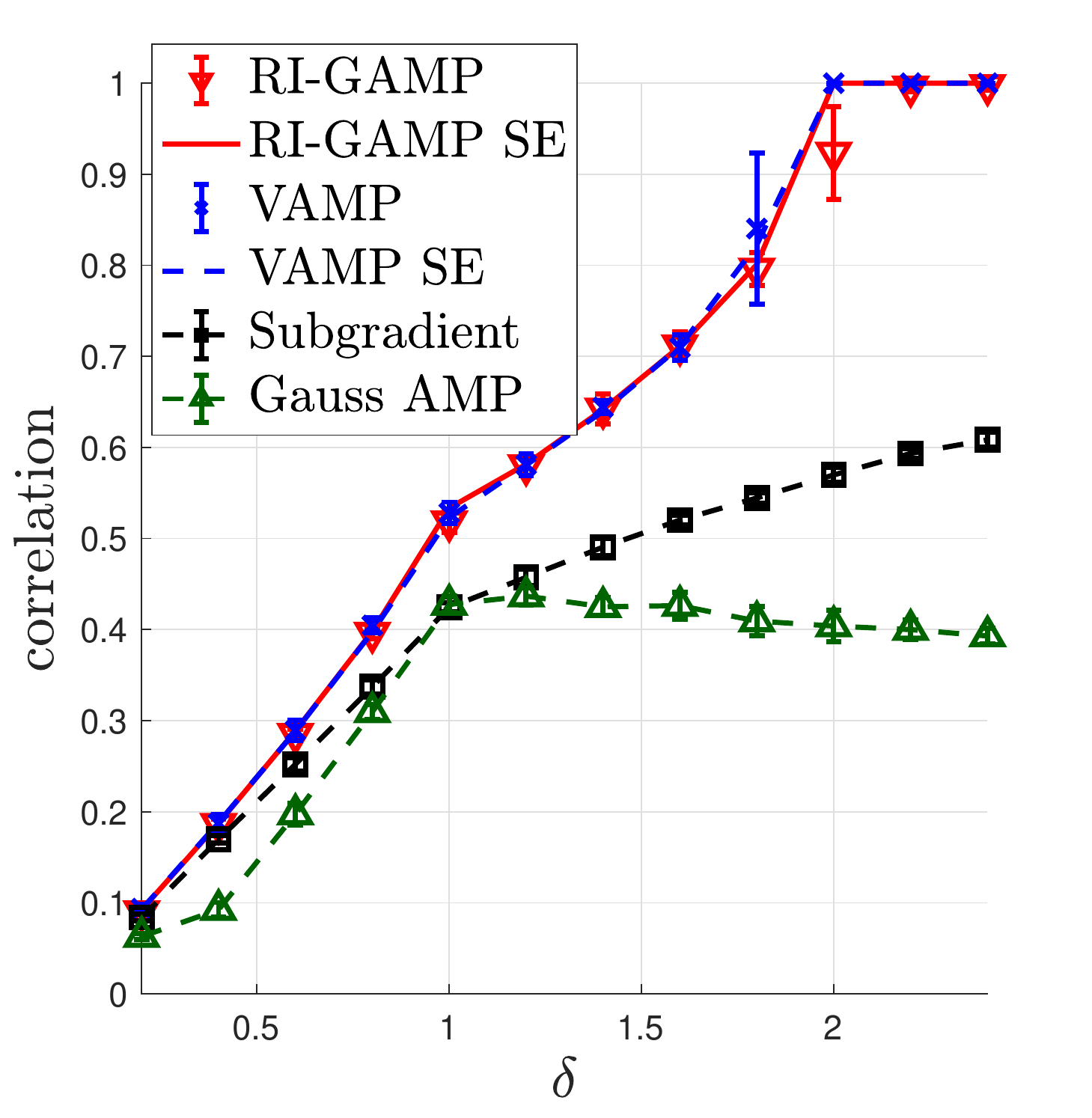}}
    \     \subfloat[Gaussian prior\label{fig:1bitCSGauss}]{\includegraphics[width=.47\columnwidth]{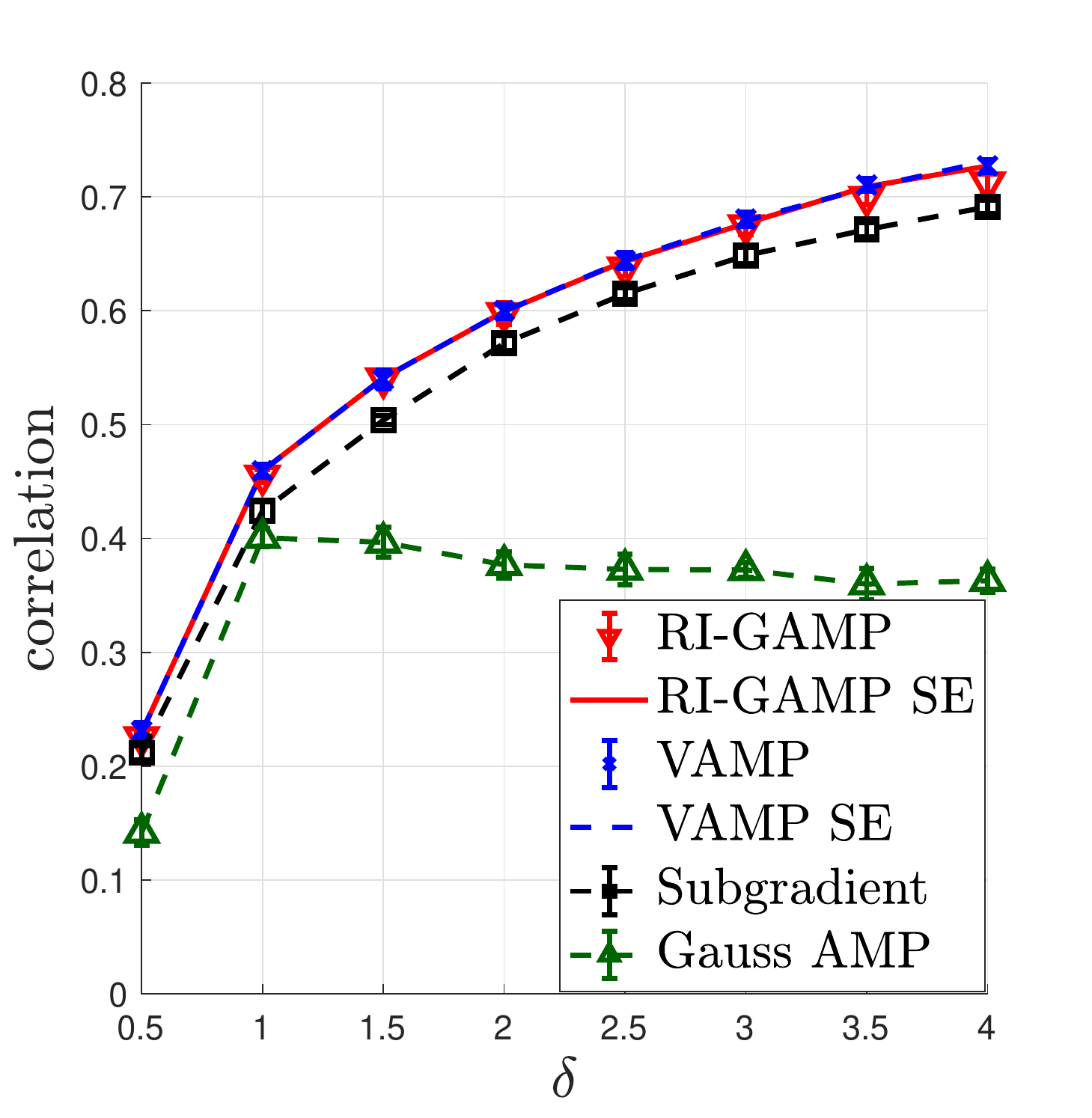}}
\caption{Noiseless 1-bit compressed sensing: normalized squared correlation vs. aspect ratio $\delta$, for two priors.}
\label{fig:1bitCS}
\end{figure}

\begin{figure}[t]
    \centering
    \subfloat[$d=2000$]{\includegraphics[width=0.33\columnwidth]{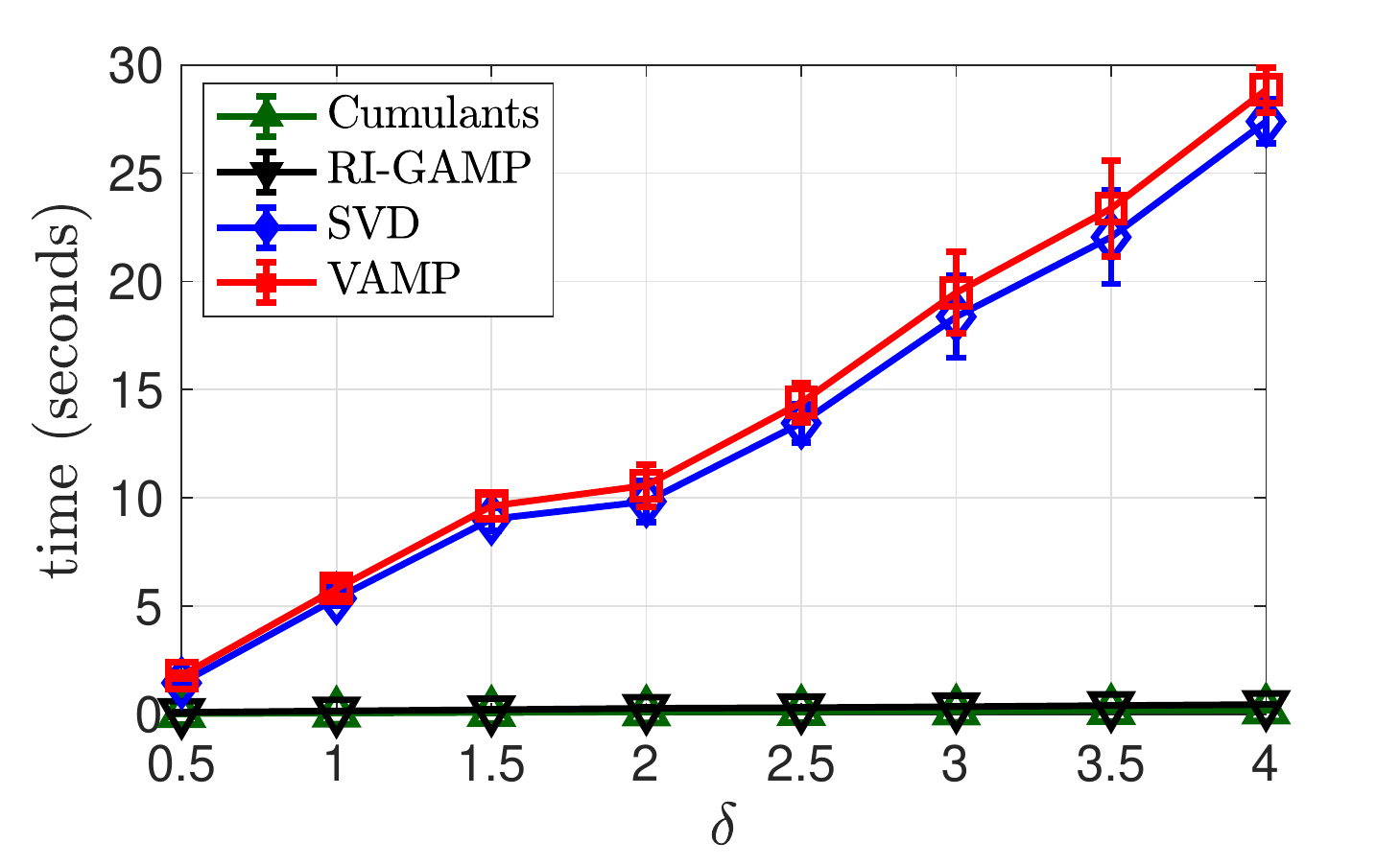}}
        \subfloat[$d=4000$]{\includegraphics[width=.33\columnwidth]{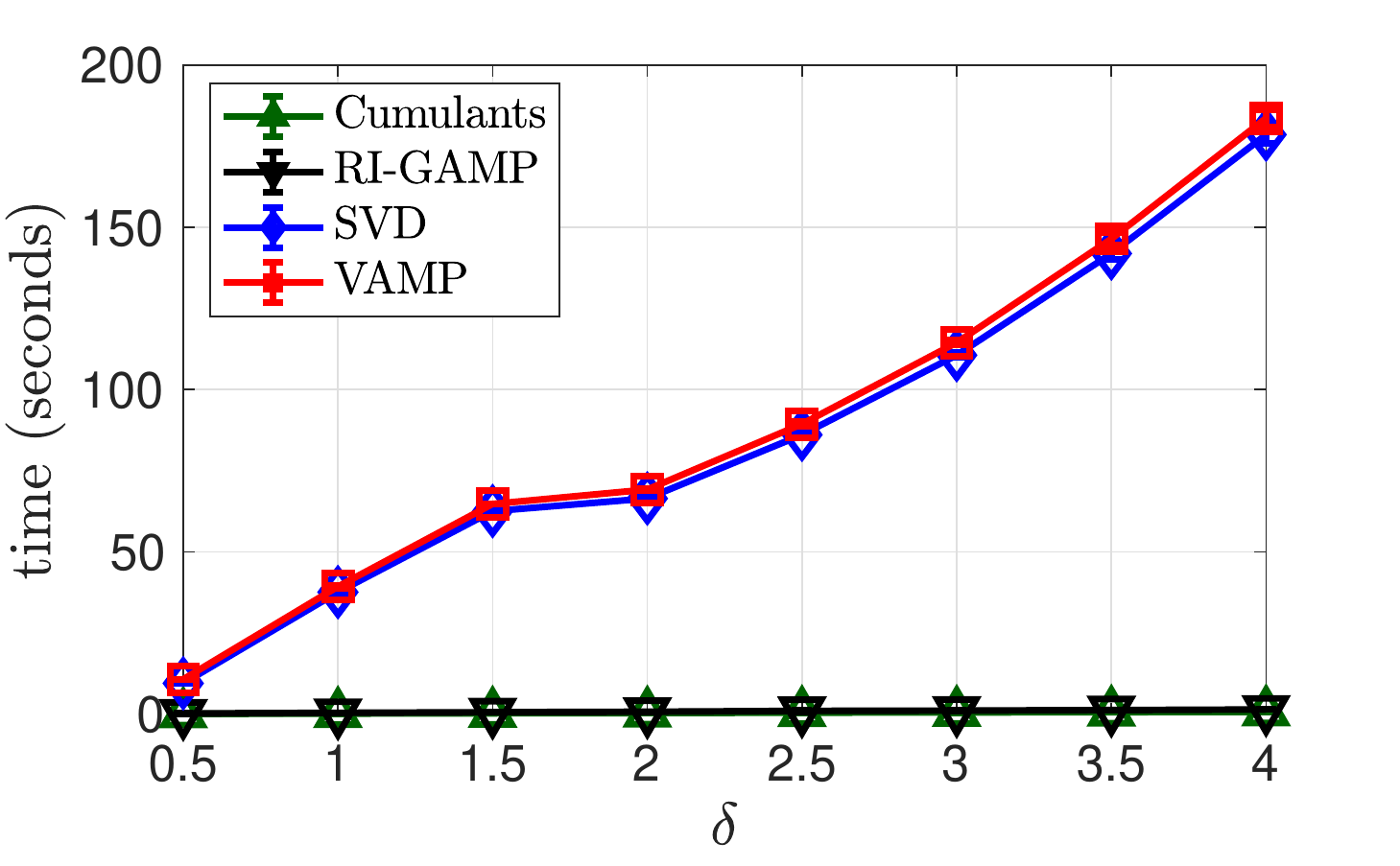}}
        \subfloat[$d=8000$]{\includegraphics[width=.33\columnwidth]{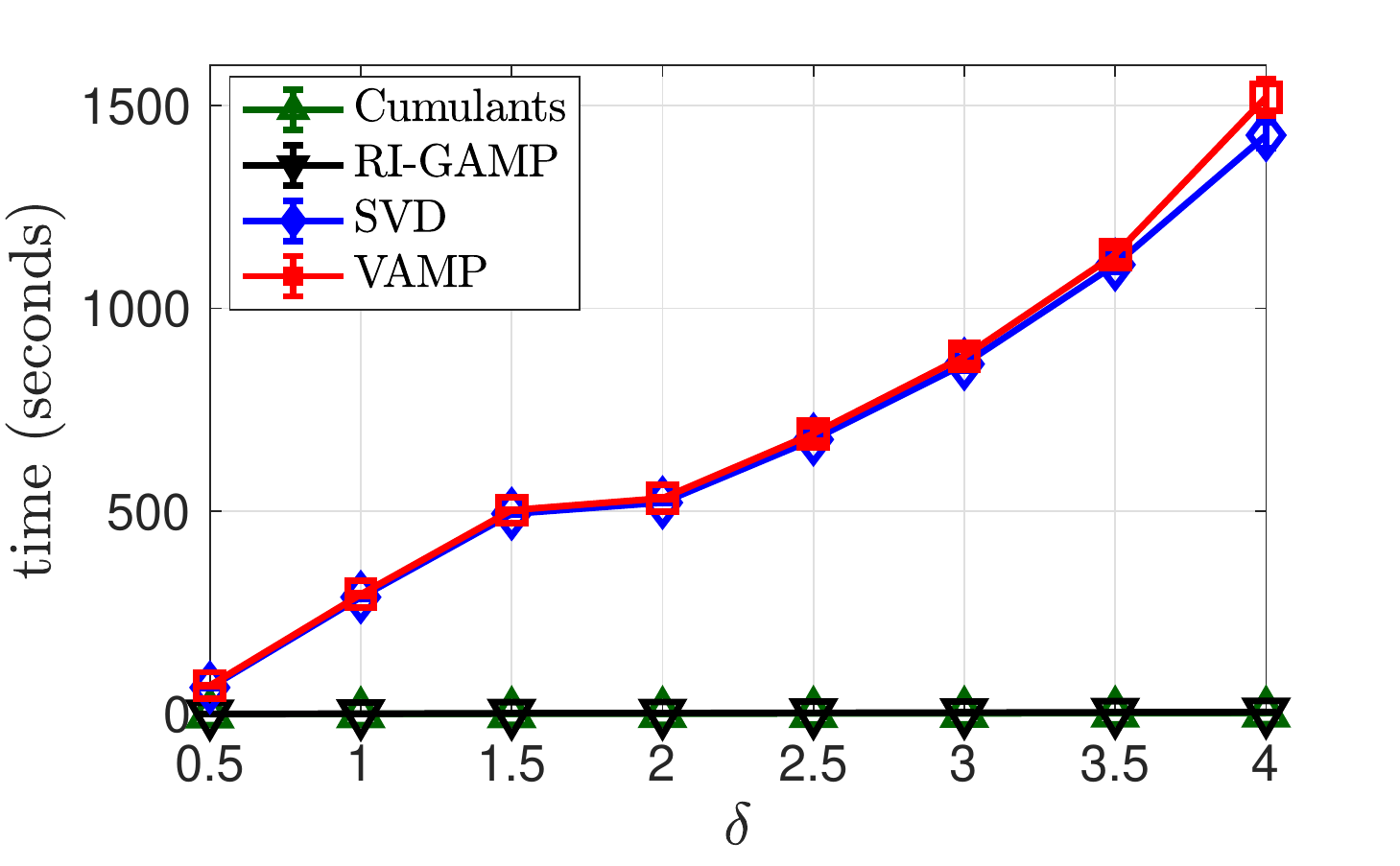}}
   \caption{Running time of RI-GAMP vs. VAMP. The green curve shows the time for computing the free cumulants in RI-GAMP, and the blue curve  the time for the SVD in the initial step of VAMP.}
\label{fig:running_time_comp}
\end{figure}

\paragraph{Performance of RI-GAMP vs. VAMP.} Taking the results of Figures \ref{fig:linfixdelta}-\ref{fig:1bitCS} together, we highlight that RI-GAMP exhibits a performance close to VAMP. Recall that the fixed points of the VAMP state evolution satisfy the replica equation, whose solution is conjectured to give the Bayes-optimal mean squared error; see Theorem 3 of \cite{rangan2019vector} for linear regression, and Theorem 4 of \cite{pandit2020inference} for the general case.  Thus, this conjecture implies that VAMP is Bayes-optimal when its state evolution has a unique fixed point, and our numerical simulations suggest that RI-GAMP is also near-optimal. We remark that, close to the phase transition for exact recovery (see %$\sigma\approx 0.2$ in Figure \ref{fig:delta0dot7}, 
$\delta\approx 0.6$ in Figure \ref{fig:sigma0dot1}, and $\delta\approx 1.8$ in Figure \ref{fig:1bitCSRad}), there is a (small) performance gap between RI-GAMP and VAMP. This is due to the fact that, for both RI-GAMP and VAMP, the number of iterations needed for convergence grows when the algorithm operates close to this phase transition. We note that VAMP shows a larger error bar for $\delta\approx 1.8$ in Figure \ref{fig:1bitCSRad}. For RI-GAMP, the issue is that the expressions of $f_{t}$ and $h_{t+1}$ depend on covariance matrices whose dimension grows with $t$. For large $t$, these covariance matrices become ill-conditioned and RI-GAMP is unstable. However, the stability displayed by VAMP comes at the cost of requiring the computationally expensive SVD of $\bA$. 
As shown next, RI-GAMP is significantly faster than VAMP, and is therefore an appealing alternative in many practical settings.

\paragraph{Complexity of RI-GAMP vs. VAMP.} The computational complexity of VAMP is dominated by the initial SVD which has $O(d^3)$ running time.  In contrast, the free cumulants required for RI-GAMP can be estimated in $O(d^2)$ time, as described on p.\pageref{para:freecum_est}. Each iteration of RI-GAMP also takes $O(d^2)$ time, and the algorithm typically converges in a few tens of iterations.

Figure \ref{fig:running_time_comp}
shows the running times for VAMP (including the initial SVD) and RI-GAMP (including the estimation of the free cumulants from the data), for noiseless 1-bit compressed sensing. The running time of VAMP is dominated by the SVD, and the computational advantage of RI-GAMP increases quickly with the problem dimension. For $\delta := \frac{n}{d} \in [0.5, 4]$, RI-GAMP is $20$-$60\times$ faster than VAMP at $d=2000$, $40$-$120\times$ faster at $d=4000$, and $80$-$240\times$ faster at $d=8000$.

\paragraph{Impact of eigenvalue distribution and prior.}  RI-GAMP exploits the spectral distribution of $\bA$, which gives a large performance improvement over the GAMP designed for a Gaussian $\bA$. Furthermore, RI-GAMP  also takes advantage of   the signal prior, which cannot be exploited by either the LMMSE estimator (for linear regression) or the subgradient method (for 1-bit compressed sensing). In fact, the subgradient method has roughly the same correlation for the two choices of the prior (cf. the black curves in Figure \ref{fig:1bitCSRad} and \ref{fig:1bitCSGauss}), and  is outperformed by RI-GAMP in both settings.

\paragraph{1-bit compressed sensing on a sparse image.} In Figure \ref{fig:satimgfig}, we consider noiseless 1-bit compressed sensing with the input $\bx^*$ being the sparse grayscale image considered in \cite{schniter2014compressive}, with $d=225^2=50625$ and a sparsity (fraction of non-black pixels) of $8645/50625$. The design matrix $\bA$ is  $\bA=\bQ_n\boldsymbol{\Pi}_n\bLambda \boldsymbol{\Pi}_d \bQ_d$, where $\bQ_n$, $\bQ_d$ are orthonormal Discrete Cosine Transform (DCT) matrices in $n$, $d$ dimensions, $\boldsymbol{\Pi}_n, \boldsymbol{\Pi}_d$ are random permutation matrices, and $\bLambda$ has  i.i.d. $\sqrt{6}\cdot {\rm Beta}(1, 2)$ diagonal entries. This choice of $\bA$ significantly speeds up matrix multiplications, %and therefore of the reconstruction algorithms
as in \cite{tian2021generalized}. We report the average and error bars at 1 standard deviation for 100 independent trials. For RI-GAMP, we use a non-negative Bernoulli-Gaussian prior (cf. \cite{schniter2014compressive} and \cite{vila2014empirical}); the expression for the corresponding denoiser $f_t$ is in Appendix \ref{sec:compBayes}.  As shown in Figure \ref{fig:corrsat}, RI-GAMP improves on the subgradient method in \cite{jacques2013robust} up until $\delta = 1.5$ (this improvement is clearly visibile in the reconstructions for $\delta=0.8$, see Figures \ref{fig:amprec} and \ref{fig:gdrec}). For larger  $\delta$, the performance of RI-GAMP does not improve further, due to the aforementioned 
numerical instabilities. 
%We also note that VAMP is not a computationally feasible option in this set-up, as it requires the SVD of the design matrix.
Additional experiments on RGB images when the input $\bx^*$ is obtained via a wavelet transform are reported in Appendix \ref{app:addnum}.

\begin{figure}[t]
    \centering
    \subfloat[Correlation vs. $\delta$\label{fig:corrsat}]{\includegraphics[width=.25\columnwidth]{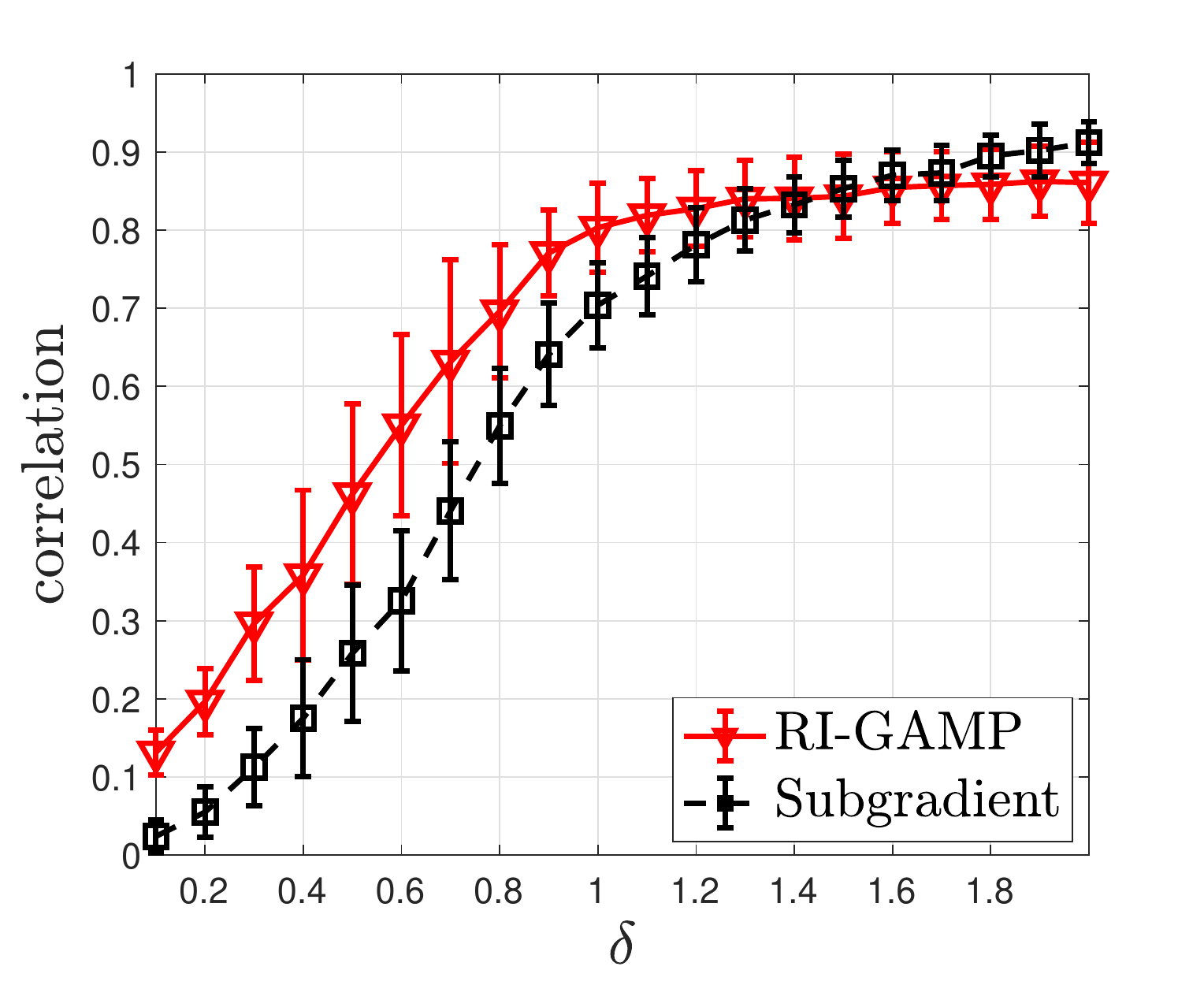}}
        \subfloat[Original image\label{fig:orim}]{\includegraphics[width=.25\columnwidth]{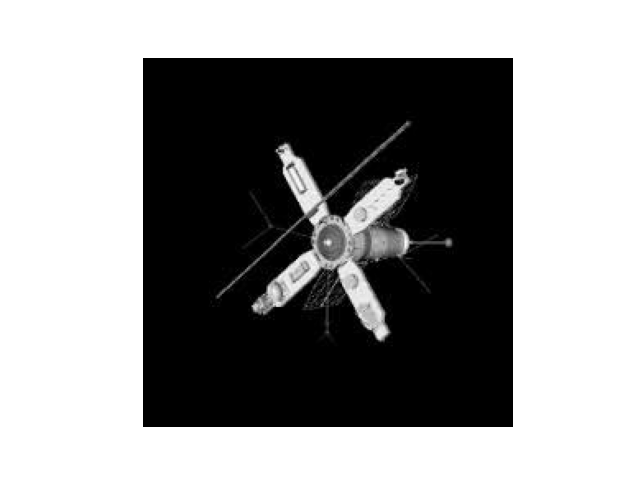}}
        \subfloat[RI-GAMP, $\delta=0.8$\label{fig:amprec}]{\includegraphics[width=.25\columnwidth]{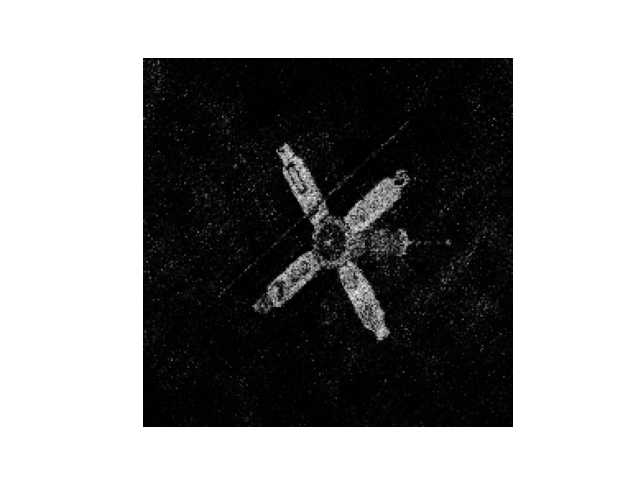}}
        \subfloat[Subgradient,  $\delta=0.8$\label{fig:gdrec}]{\includegraphics[width=.25\columnwidth]{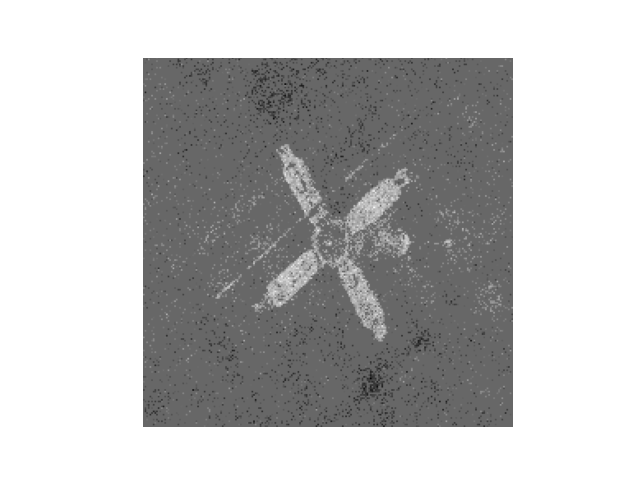}}
\caption{RI-GAMP versus subgradient method for the recovery of a sparse image from 1-bit measurements.}
\label{fig:satimgfig}
\end{figure}

\section{Proof Sketch of Theorem \ref{thm:main}} \label{sec:proof_sketch}

The proof is based on an auxiliary AMP algorithm whose iterates mimic the true AMP in \eqref{eq:AMP_xt_update}-\eqref{eq:AMP_rt_update}.  The iterates of the auxiliary AMP, denoted by $\bz^t, \bv^t \in \reals^d$ and $\bv^t, \bu^{t+1} \in \reals^n$, are computed as follows, for $t \ge 1$:
\vspace{-7pt}
\begin{align}
    \bz^t & = \bA^{\sT} \bu^t   -  \sum_{i=1}^{t-1} \ssb_{ti} \bv^i,
    \qquad  \bv^t = \tf_t(\bz^1, \ldots, \bz^t, \, \bx^*),\hspace{-1em} \label{eq:auxAMPz} \\
\hspace{-.5em}    \bm^{t} &\hspace{-.2em} = \hspace{-.2em}\bA \bv^{t} \hspace{-.2em} - \hspace{-.2em}\sum_{i=1}^{t}\hspace{-.2em} \sa_{ti} \bu^i, \qquad  \bu^{t+1} \hspace{-.2em}=\hspace{-.2em} \th_{t+1}(\bm^1, \ldots, \bm^t\hspace{-.2em}, \bveps).\hspace{-.5em}
    \label{eq:auxAMPm}
\end{align}
The iteration is initialized with $\bu^1=\bzero, \quad \bz^1 =\bzero$. The function $\tf_1: \reals^2 \to \reals$ is defined as $\tf_1(z_1, x) = x$, which yields $\bv^1= \bx^*$ and 
$\bm^1=\bA \bx^* = \bg$.  
For $t \ge 1 $, the functions $\tf_{t+1}: \reals^{t+2} \to \reals$ and $\th_{t+1}: \reals^{t+1} \to \reals$, which act row-wise on matrices, are defined as
\begin{align}
            & \tf_{t+1}(z_1, \ldots, z_{t+1}, \, x)  = f_t(z_2 + \bar{\mu}_1 x, \,  z_3 + \bar{\mu}_2 x, \,  \ldots, \, z_{t+1} + \bar{\mu}_t x),   \label{eq:tf_defs} \\
    & \th_{t+1}(m_1, \, \ldots, m_t,  \, \veps) = h_t(m_2, \ldots, m_t, \, q(m_1, \veps)). \nonumber 
\end{align}
Here, $f_t, h_t$ are the functions from the original AMP, and 
$(\barmu_1, \ldots, \barmu_t)$ are the state evolution parameters computed according to \eqref{eq:mut1_def}. The debiasing coefficients $\{\sa_{ti}\}_{i=1}^t$ and $\{\ssb_{ti}\}_{i=1}^{t-1}$
for the auxiliary AMP are given in Appendix \ref{sec:debiasing_aux}.

\paragraph{Idea of the proof.} 
The auxiliary AMP \eqref{eq:auxAMPz}-\eqref{eq:auxAMPm} is an instance of the abstract AMP recursion for non-symmetric rotationally invariant matrices, which was analyzed in \cite{zhong2021approximate}. The state evolution result in Theorem 2.6 of \cite{zhong2021approximate} implies that the joint empirical distribution of $(\bm^1, \ldots, \bm^t)$ converges to a $t$-dimensional Gaussian $\normal(\bzero, 
\tbSigma_t)$. Similarly, the joint empirical distribution of $(\bz^1, \ldots, \bz^t)$ also converges to a a $t$-dimensional Gaussian $\normal(\bzero, \tbOmega_t)$. The covariance matrices $\tbSigma_t, \tbOmega_t$ are recursively defined via the state evolution for the auxiliary AMP (for details, see Section \ref{sec:SE_aux}).  

The proof of Theorem \ref{thm:main} consists of two steps. First, we show  that the state evolution parameters of the auxiliary AMP match those of the true AMP. In particular,  Lemma \ref{lem:SE_equiv} proves that  $\tbSigma_t = \barbSigma_t$ and $\tbOmega_{t+1} = \bOmega'_{t+1}$, where the matrices on the right are defined in \eqref{eq:barSigma_def} and \eqref{eq:Omega_pr_def}. Next, we show in Lemma \ref{lem:PLdiff} that the true  AMP iterates \eqref{eq:AMP_xt_update}-\eqref{eq:AMP_rt_update}  are close to the auxiliary AMP iterates \eqref{eq:auxAMPz}-\eqref{eq:auxAMPm} in the following sense. For $t \ge 1$:
\begin{align}
&  \frac{\| \bx^t - (\bz^{t+1} + \barmu_t \bx^*) \|^2}{d} \to 0, \ \ 
\frac{\| \hbx^{t} - \bv^{t+1} \|^2}{d} \to 0, \nonumber \\
& \frac{\| \br^t - \bm^{t+1} \|^2}{n}\to 0, \ \ 
 \frac{\| \bs^t - \bu^{t+1} \|^2}{n} \to 0.
\label{eq:aux_true_conv}
\end{align}
Lemma \ref{lem:PLdiff} actually proves a more general convergence statement which implies \eqref{eq:aux_true_conv}. It shows that the empirical joint distribution of the iterates of the true AMP converges to that of the auxiliary AMP. The result of Theorem \ref{thm:main} then follows from Lemmas \ref{lem:SE_equiv} and \ref{lem:PLdiff}.

\section*{Acknowledgements}

The authors would like to thank the anonymous reviewers for their helpful comments. KK and MM were partially supported by the 2019 Lopez-Loreta Prize.

{\small{
\bibliographystyle{amsalpha}
\bibliography{paper_arxiv}

\newcommand{\etalchar}[1]{$^{#1}$}
\providecommand{\bysame}{\leavevmode\hbox to3em{\hrulefill}\thinspace}
\providecommand{\MR}{\relax\ifhmode\unskip\space\fi MR }
% \MRhref is called by the amsart/book/proc definition of \MR.
\providecommand{\MRhref}[2]{%
  \href{http://www.ams.org/mathscinet-getitem?mr=#1}{#2}
}
\providecommand{\href}[2]{#2}
\begin{thebibliography}{PSAR{\etalchar{+}}20}

\bibitem[BB08]{boufounos20081}
Petros~T Boufounos and Richard~G Baraniuk, \emph{1-bit compressive sensing},
  Conference on Information Sciences and Systems (CISS), 2008, pp.~16--21.

\bibitem[BG09]{benaych2009rectangular}
Florent Benaych-Georges, \emph{Rectangular random matrices, related
  convolution}, Probability Theory and Related Fields \textbf{144} (2009),
  no.~3-4, 471--515.

\bibitem[BKM{\etalchar{+}}19]{barbier2019optimal}
Jean Barbier, Florent Krzakala, Nicolas Macris, L{\'e}o Miolane, and Lenka
  Zdeborov{\'a}, \emph{Optimal errors and phase transitions in high-dimensional
  generalized linear models}, Proceedings of the National Academy of Sciences
  \textbf{116} (2019), no.~12, 5451--5460.

\bibitem[BLM{\etalchar{+}}15]{bayati2015universality}
Mohsen Bayati, Marc Lelarge, Andrea Montanari, et~al., \emph{Universality in
  polytope phase transitions and message passing algorithms}, The Annals of
  Applied Probability \textbf{25} (2015), no.~2, 753--822.

\bibitem[BM11]{BM-MPCS-2011}
Mohsen Bayati and Andrea Montanari, \emph{{The dynamics of message passing on
  dense graphs, with applications to compressed sensing}}, IEEE Transactions on
  Information Theory \textbf{57} (2011), 764--785.

\bibitem[BM12]{BayatiMontanariLASSO}
\bysame, \emph{{The LASSO risk for {G}aussian matrices}}, IEEE Transactions on
  Information Theory \textbf{58} (2012), 1997--2017.

\bibitem[BMR20]{BarbierMR20}
Jean Barbier, Nicolas Macris, and Cynthia Rush, \emph{All-or-nothing
  statistical and computational phase transitions in sparse spiked matrix
  estimation}, Neural Information Processing Systems (NeurIPS), 2020.

\bibitem[Bol14]{bolthausen2014iterative}
Erwin Bolthausen, \emph{An iterative construction of solutions of the {TAP}
  equations for the {S}herrington--{K}irkpatrick model}, Communications in
  Mathematical Physics \textbf{325} (2014), no.~1, 333--366.

\bibitem[CL21]{ChenLam21}
Wei-Kuo Chen and Wai-Kit Lam, \emph{{Universality of approximate message
  passing algorithms}}, Electronic Journal of Probability \textbf{26} (2021), 1
  -- 44.

\bibitem[CLS15]{candes2015wirt}
Emmanuel~J. Cand{\`e}s, Xiaodong Li, and Mahdi Soltanolkotabi, \emph{{Phase
  retrieval via Wirtinger flow: Theory and algorithms}}, IEEE Transactions on
  Information Theory \textbf{61} (2015), no.~4, 1985--2007.

\bibitem[{\c{C}}O19]{ccakmak2019memory}
Burak {\c{C}}akmak and Manfred Opper, \emph{Memory-free dynamics for the
  {T}houless-{A}nderson-{P}almer equations of {I}sing models with arbitrary
  rotation-invariant ensembles of random coupling matrices}, Physical Review E
  \textbf{99} (2019), no.~6, 062140.

\bibitem[{\c{C}}OFW16]{ccakmak2016self}
Burak {\c{C}}akmak, Manfred Opper, Bernard~H Fleury, and Ole Winther,
  \emph{Self-averaging expectation propagation}, {\sf arXiv:1608.06602} (2016).

\bibitem[CSV13]{candes2013phaselift}
Emmanuel~J. Cand{\`e}s, Thomas Strohmer, and Vladislav Voroninski,
  \emph{Phaselift: Exact and stable signal recovery from magnitude measurements
  via convex programming}, Communications on Pure and Applied Mathematics
  \textbf{66} (2013), no.~8, 1241--1274.

\bibitem[CT07]{Dantzig}
E.~Cand\'es and T.~Tao, \emph{{The Dantzig selector: statistical estimation
  when p is much larger than n}}, Annals of Statistics \textbf{35} (2007),
  2313--2351.

\bibitem[DBMM20]{dudeja2020analysis}
Rishabh Dudeja, Milad Bakhshizadeh, Junjie Ma, and Arian Maleki, \emph{Analysis
  of spectral methods for phase retrieval with random orthogonal matrices},
  IEEE Transactions on Information Theory \textbf{66} (2020), no.~8,
  5182--5203.

\bibitem[DJM13]{DonSpatialC13}
David~L. Donoho, Adel Javanmard, and Andrea Montanari,
  \emph{Information-theoretically optimal compressed sensing via spatial
  coupling and approximate message passing}, IEEE Transactions on Information
  Theory \textbf{59} (2013), no.~11, 7434--7464.

\bibitem[DM14]{deshpande2014information}
Yash Deshpande and Andrea Montanari, \emph{Information-theoretically optimal
  sparse {PCA}}, IEEE International Symposium on Information Theory (ISIT),
  2014, pp.~2197--2201.

\bibitem[DMM09]{DMM09}
David~L. Donoho, Arian Maleki, and Andrea Montanari, \emph{{Message Passing
  Algorithms for Compressed Sensing}}, Proceedings of the National Academy of
  Sciences \textbf{106} (2009), 18914--18919.

\bibitem[DMM20]{dudeja2020information}
Rishabh Dudeja, Junjie Ma, and Arian Maleki, \emph{Information theoretic limits
  for phase retrieval with subsampled haar sensing matrices}, IEEE Transactions
  on Information Theory \textbf{66} (2020), 8002 -- 8045.

\bibitem[Don06]{Donoho1}
D.~L. Donoho, \emph{Compressed sensing}, IEEE Transactions on Information
  Theory \textbf{52} (2006), 489--509.

\bibitem[EK12]{eldar2012compressed}
Yonina~C Eldar and Gitta Kutyniok, \emph{Compressed sensing: Theory and
  applications}, Cambridge University Press, 2012.

\bibitem[Fan21]{fan2020approximate}
Zhou Fan, \emph{Approximate message passing algorithms for rotationally
  invariant matrices}, Annals of Statistics (2021), to appear.

\bibitem[FR18]{fletcher2018iterative}
Alyson~K. Fletcher and Sundeep Rangan, \emph{Iterative reconstruction of
  rank-one matrices in noise}, Information and Inference: A Journal of the IMA
  \textbf{7} (2018), no.~3, 531--562.

\bibitem[FS20]{fannjiang2020numerics}
Albert Fannjiang and Thomas Strohmer, \emph{The numerics of phase retrieval},
  Acta Numerica \textbf{29} (2020), 125–228.

\bibitem[FVRS21]{feng2021unifying}
Oliver~Y. Feng, Ramji Venkataramanan, Cynthia Rush, and Richard~J. Samworth,
  \emph{A unifying tutorial on {A}pproximate {M}essage {P}assing}, {\sf
  arXiv:2105.02180} (2021).

\bibitem[GAK20a]{gerbelot2020asymptotic1}
C{\'e}dric Gerbelot, Alia Abbara, and Florent Krzakala, \emph{Asymptotic errors
  for high-dimensional convex penalized linear regression beyond {G}aussian
  matrices}, Conference on Learning Theory (COLT), 2020, pp.~1682--1713.

\bibitem[GAK20b]{gerbelot2020asymptotic2}
\bysame, \emph{Asymptotic errors for teacher-student convex generalized linear
  models (or: How to prove {K}abashima's replica formula)}, {\sf
  arXiv:2006.06581} (2020).

\bibitem[HTW19]{hastie2019statistical}
Trevor Hastie, Robert Tibshirani, and Martin Wainwright, \emph{Statistical
  learning with sparsity: the {L}asso and generalizations}, Chapman and
  Hall/CRC, 2019.

\bibitem[JLBB13]{jacques2013robust}
Laurent Jacques, Jason~N Laska, Petros~T Boufounos, and Richard~G Baraniuk,
  \emph{Robust 1-bit compressive sensing via binary stable embeddings of sparse
  vectors}, IEEE Transactions on Information Theory \textbf{59} (2013), no.~4,
  2082--2102.

\bibitem[JM13]{javanmard2013state}
Adel Javanmard and Andrea Montanari, \emph{State evolution for general
  approximate message passing algorithms, with applications to spatial
  coupling}, Information and Inference (2013), 115--144.

\bibitem[Kab03]{Kabashima_2003}
Yoshiyuki Kabashima, \emph{A {CDMA} multiuser detection algorithm on the basis
  of belief propagation}, Journal of Physics A: Mathematical and General
  \textbf{36} (2003), no.~43, 11111--11121.

\bibitem[KKM{\etalchar{+}}16]{kabashima2016phase}
Yoshiyuki Kabashima, Florent Krzakala, Marc M{\'e}zard, Ayaka Sakata, and Lenka
  Zdeborov{\'a}, \emph{Phase transitions and sample complexity in
  {B}ayes-optimal matrix factorization}, IEEE Transactions on Information
  Theory \textbf{62} (2016), no.~7, 4228--4265.

\bibitem[KMS{\etalchar{+}}12]{krzakala2012}
Florent Krzakala, Marc M{\'{e}}zard, Francois Sausset, Yifan Sun, and Lenka
  Zdeborov{\'{a}}, \emph{{Probabilistic reconstruction in compressed sensing:
  algorithms, phase diagrams, and threshold achieving matrices}}, Journal of
  Statistical Mechanics: Theory and Experiment \textbf{2012} (2012), no.~08,
  P08009.

\bibitem[LAL19]{luo2019optimal}
Wangyu Luo, Wael Alghamdi, and Yue~M. Lu, \emph{Optimal spectral initialization
  for signal recovery with applications to phase retrieval}, IEEE Transactions
  on Signal Processing \textbf{67} (2019), no.~9, 2347--2356.

\bibitem[LGW{\etalchar{+}}19]{Lee2019}
Gregory~R. Lee, Ralf Gommers, Filip Waselewski, Kai Wohlfahrt, and Aaron
  O'Leary, \emph{Pywavelets: A python package for wavelet analysis}, Journal of
  Open Source Software \textbf{4} (2019), no.~36, 1237.

\bibitem[LHK20]{LiuMemoryAMP20}
Lei Liu, Shunqi Huang, and Brian~M. Kurkoski, \emph{Memory approximate message
  passing}, {\sf arXiv:2012.10861} (2020).

\bibitem[LKZ17]{lesieur2017constrained}
Thibault Lesieur, Florent Krzakala, and Lenka Zdeborov{\'a}, \emph{Constrained
  low-rank matrix estimation: Phase transitions, approximate message passing
  and applications}, Journal of Statistical Mechanics: Theory and Experiment
  \textbf{2017} (2017), no.~7, 073403.

\bibitem[LL20]{lu2020phase}
Yue~M. Lu and Gen Li, \emph{Phase transitions of spectral initialization for
  high-dimensional non-convex estimation}, Information and Inference: A Journal
  of the IMA \textbf{9} (2020), no.~3, 507--541.

\bibitem[MAYB13]{maleki2013asymptotic}
Arian Maleki, Laura Anitori, Zai Yang, and Richard~G Baraniuk, \emph{Asymptotic
  analysis of complex lasso via complex approximate message passing {(CAMP)}},
  IEEE Transactions on Information Theory \textbf{59} (2013), no.~7,
  4290--4308.

\bibitem[MLKZ20]{maillard2020phase}
Antoine Maillard, Bruno Loureiro, Florent Krzakala, and Lenka Zdeborov{\'a},
  \emph{Phase retrieval in high dimensions: Statistical and computational phase
  transitions}, Neural Information Processing Systems (NeurIPS), 2020.

\bibitem[MM19]{mondelli2017fundamental}
Marco Mondelli and Andrea Montanari, \emph{Fundamental limits of weak recovery
  with applications to phase retrieval}, Foundations of Computational
  Mathematics \textbf{19} (2019), 703--773.

\bibitem[MN89]{mccullagh2018generalized}
Peter McCullagh and John Nedler, \emph{Generalized linear models}, Chapman and
  Hall/CRC, 1989.

\bibitem[MP17]{ma2017orthogonal}
Junjie Ma and Li~Ping, \emph{Orthogonal {AMP}}, IEEE Access \textbf{5} (2017),
  2020--2033.

\bibitem[MTV21]{mondelli2021optimal}
Marco Mondelli, Christos Thrampoulidis, and Ramji Venkataramanan, \emph{Optimal
  combination of linear and spectral estimators for generalized linear models},
  Foundations of Computational Mathematics (2021), 1--54.

\bibitem[MV21a]{mondelli2021approximate}
Marco Mondelli and Ramji Venkataramanan, \emph{Approximate message passing with
  spectral initialization for generalized linear models}, International
  Conference on Artificial Intelligence and Statistics (AISTATS), PMLR, 2021,
  pp.~397--405.

\bibitem[MV21b]{mondelli21Rank1PCA}
\bysame, \emph{{PCA} initialization for approximate message passing in
  rotationally invariant models}, Neural Information Processing Systems
  (NeurIPS), 2021.

\bibitem[MV21c]{montanari2017estimation}
Andrea Montanari and Ramji Venkataramanan, \emph{Estimation of low-rank
  matrices via approximate message passing}, Annals of Statistics \textbf{45}
  (2021), no.~1, 321--345.

\bibitem[MXM19]{ma2019optimization}
Junjie Ma, Ji~Xu, and Arian Maleki, \emph{Optimization-based amp for phase
  retrieval: The impact of initialization and $\ell_2$ regularization}, IEEE
  Transactions on Information Theory \textbf{65} (2019), no.~6, 3600--3629.

\bibitem[MXM21]{ma2021analysis}
Junjie Ma, Ji~Xu, and Arian Maleki, \emph{Impact of sensing spectrum for signal
  recovery under a generalized linear model}, Neural Information Processing
  Systems (NeurIPS), 2021.

\bibitem[NJS13]{netrapalli2013phase}
Praneeth Netrapalli, Prateek Jain, and Sujay Sanghavi, \emph{Phase retrieval
  using alternating minimization}, Neural Information Processing Systems
  (NIPS), 2013, pp.~2796--2804.

\bibitem[OCW16]{opper2016theory}
Manfred Opper, Burak Cakmak, and Ole Winther, \emph{A theory of solving {TAP}
  equations for {I}sing models with general invariant random matrices}, Journal
  of Physics A: Mathematical and Theoretical \textbf{49} (2016), no.~11,
  114002.

\bibitem[PSAR{\etalchar{+}}20]{pandit2020inference}
Parthe Pandit, Mojtaba Sahraee-Ardakan, Sundeep Rangan, Philip Schniter, and
  Alyson~K. Fletcher, \emph{Inference with deep generative priors in high
  dimensions}, IEEE Journal on Selected Areas in Information Theory \textbf{1}
  (2020), no.~1, 336--347.

\bibitem[PV12]{plan2012robust}
Yaniv Plan and Roman Vershynin, \emph{Robust 1-bit compressed sensing and
  sparse logistic regression: A convex programming approach}, IEEE Transactions
  on Information Theory \textbf{59} (2012), no.~1, 482--494.

\bibitem[PV13]{plan2013one}
\bysame, \emph{One-bit compressed sensing by linear programming},
  Communications on Pure and Applied Mathematics \textbf{66} (2013), no.~8,
  1275--1297.

\bibitem[Ran11]{RanganGAMP}
Sundeep Rangan, \emph{{Generalized Approximate Message Passing for Estimation
  with Random Linear Mixing}}, IEEE International Symposium on Information
  Theory (ISIT), 2011.

\bibitem[RSF19]{rangan2019vector}
Sundeep Rangan, Philip Schniter, and Alyson~K. Fletcher, \emph{Vector
  approximate message passing}, IEEE Transactions on Information Theory
  \textbf{65} (2019), no.~10, 6664--6684.

\bibitem[SC19]{sur2019modern}
Pragya Sur and Emmanuel~J. Cand{\`e}s, \emph{A modern maximum-likelihood theory
  for high-dimensional logistic regression}, Proceedings of the National
  Academy of Sciences \textbf{116} (2019), no.~29, 14516--14525.

\bibitem[SEC{\etalchar{+}}15]{shechtman2015phase}
Yoav Shechtman, Yonina~C. Eldar, Oren Cohen, Henry~N. Chapman, Jianwei Miao,
  and Mordechai Segev, \emph{Phase retrieval with application to optical
  imaging: a contemporary overview}, IEEE Signal Processing Magazine
  \textbf{32} (2015), no.~3, 87--109.

\bibitem[SR14]{schniter2014compressive}
Philip Schniter and Sundeep Rangan, \emph{Compressive phase retrieval via
  generalized approximate message passing}, IEEE Transactions on Signal
  Processing \textbf{63} (2014), no.~4, 1043--1055.

\bibitem[SRF16]{schniter2016vector}
Philip Schniter, Sundeep Rangan, and Alyson~K. Fletcher, \emph{Vector
  approximate message passing for the generalized linear model}, 50th Asilomar
  Conference on Signals, Systems and Computers, IEEE, 2016, pp.~1525--1529.

\bibitem[Tak20]{takeuchi2020rigorous}
Keigo Takeuchi, \emph{Rigorous dynamics of expectation-propagation-based signal
  recovery from unitarily invariant measurements}, IEEE Transactions on
  Information Theory \textbf{66} (2020), no.~1, 368--386.

\bibitem[Tak21a]{takeuchi2021bayes}
\bysame, \emph{Bayes-optimal convolutional {AMP}}, IEEE Transactions on
  Information Theory \textbf{67} (2021), no.~7, 4405--4428.

\bibitem[Tak21b]{takeuchi2021convergence}
\bysame, \emph{On the convergence of {Orthogonal/Vector AMP}: Long-memory
  message-passing strategy}, {\sf arXiv:2111.05522} (2021).

\bibitem[Tib96]{Tibs96}
R.~Tibshirani, \emph{{Regression shrinkage and selection with the Lasso}}, J.
  Royal. Statist. Soc B \textbf{58} (1996), 267--288.

\bibitem[TLC21]{tian2021generalized}
Feiyan Tian, Lei Liu, and Xiaoming Chen, \emph{Generalized memory approximate
  message passing}, {\sf arXiv:2110.06069} (2021).

\bibitem[VS14]{vila2014empirical}
Jeremy~P Vila and Philip Schniter, \emph{An empirical-{B}ayes approach to
  recovering linearly constrained non-negative sparse signals}, IEEE
  Transactions on Signal Processing \textbf{62} (2014), no.~18, 4689--4703.

\bibitem[ZSF21]{zhong2021approximate}
Xinyi Zhong, Chang Su, and Zhou Fan, \emph{{A}pproximate {M}essage {P}assing
  for orthogonally invariant ensembles: Multivariate non-linearities and
  spectral initialization}, {\sf arXiv:2110.02318} (2021).

\end{thebibliography}
%\addcontentsline{toc}{section}{References}
}}

\vspace{2in}

\appendix

\section{Background on Rectangular Free Cumulants}\label{subsec:freerect}

Let $X$ be a random variable of finite moments of all orders, and denote its even moments by $m_{2k} =\mathbb E\{X^{2k}\}$. In this paper, $X^2$ represents either the empirical eigenvalue distribution of $\bA\bA^\sT\in\mathbb R^{n\times n}$, or its limit law $\Lambda^2$ (in the latter case, the moments and rectangular free cumulants are denoted by $\{\bar{m}_{2k}\}_{k\ge 1}$ and $\{\bar{\kappa}_{2k}\}_{k\ge 1}$, respectively).
The rectangular free cumulants $\{\kappa_{2k}\}_{k\ge 1}$ of $X$ are defined recursively by the moment-cumulant relations (cf. Section 3 of \cite{benaych2009rectangular})
\begin{equation}\label{eq:mcrel1rect}
    m_{2k} =\delta \sum_{\pi\in {\rm NC}'(2k)}\prod_{\substack{S\in \pi\\\min S \,\,\,\mbox{\scriptsize is odd}}}\kappa_{|S|}\prod_{\substack{S\in \pi\\\min S \,\,\,\mbox{\scriptsize is even}}}\kappa_{|S|},
\end{equation}
where ${\rm NC}'(2k)$ is the set of non-crossing partitions $\pi$ of $\{1, \ldots, 2k\}$ such that each set $S\in \pi$ has even cardinality. Furthermore, by exploiting the connection between the formal power series with coefficients $\{m_{2k}\}_{k\ge 1}$ and $\{\kappa_{2k}\}_{k\ge 1}$, each rectangular free cumulant $\kappa_{2k}$ can be computed from $m_2, \ldots, m_{2k}$ and $\kappa_2, \ldots, \kappa_{2(k-1)}$ as (cf. Lemma 3.4 of \cite{benaych2009rectangular})
\begin{equation}\label{eq:mcrel2rect}
    \kappa_{2k} = m_{2k} - [z^k]\sum_{j=1}^{k-1}\kappa_{2j}\left(z(\delta M(z)+1)(M(z)+1)\right)^j,
\end{equation}
where $M(z)=\sum_{k=1}^\infty m_{2k}z^k$ and $[z^k](q(z))$ denotes the coefficient of $z^k$ in the polynomial $q(z)$.

In the numerical simulations of Section \ref{sec:simu}, the singular values of $\bA$ are i.i.d. $\sqrt{6}\cdot {\rm Beta}(1, 2)$. Hence, for $\delta\in (0, 1)$, $X$ has distribution $\sqrt{6}\cdot {\rm Beta}(1, 2)$ and consequently $\bar{m}_{2k} = \frac{6^k}{(k+1)(2k+1)}$; for $\delta\ge 1$, $X$ has distribution $\sqrt{6}\cdot {\rm Beta}(1, 2)$ w.p. $1/\delta$ and it is equal to $0$ w.p. $1-1/\delta$, and consequently $\bar{m}_{2k} = \frac{1}{\delta}\frac{6^k}{(k+1)(2k+1)}$. Then, given the moments $\{\bar{m}_{2k}\}_{k\ge 1}$, the rectangular free cumulants $\{\bar{\kappa}_{2k}\}_{k\ge 1}$ are computed recursively using \eqref{eq:mcrel2rect}.

\section{Computation of Denoisers, and Implementation Details}\label{sec:compBayes}

%We start with the computation of $f_t$ defined as in \eqref{eq:Bft} for $t\ge 1$. 

\paragraph{Computation of $f_t$ for Rademacher prior.} Here, $\mathbb P(X_*=1)=\mathbb P(X_*=-1)=1/2$. Hence, \eqref{eq:Bft} can be specialized as:
\begin{equation}\label{eq:ftcomp1}
    f_t(x_1, \ldots, x_t) = \E\{ X_* \mid X_1 = x_1, \ldots, X_t = x_t  \} = 2\cdot \mathbb P(X_*=1 \mid X_1 = x_1, \ldots, X_t = x_t)-1.
\end{equation}
From \eqref{eq:Xi_def}, we have that $(X_1, \ldots, X_t) = \barbmu_t X+(W_1, \ldots, W_t)$, with $(W_1, \ldots, W_t)\sim \normal(\bzero, \barbOmega_t)$. Thus,
\begin{equation}\label{eq:ftcomp2}
\begin{split}
   & P(X_*=1 \mid X_1 = x_1, \ldots, X_t = x_t) \\
    & = \frac{\exp\left(\displaystyle\frac{-(\bx-\barbmu_t)^\sT(\barbOmega_t)^{-1}(\bx-\barbmu_t)}{2}\right)}{\exp\left(\displaystyle\frac{-(\bx-\barbmu_t)^\sT(\barbOmega_t)^{-1}(\bx-\barbmu_t)}{2}\right)+\exp\left(\displaystyle\frac{-(\bx+\barbmu_t)^\sT(\barbOmega_t)^{-1}(\bx+\barbmu_t)}{2}\right)},
    \end{split}
\end{equation}
where $\bx=(x_1, \ldots, x_t)^{\sT}$. (All vectors in this section, including $\bx$ and $\barbmu_t$, are treated as column vectors, unless otherwise mentioned.)
Combining \eqref{eq:ftcomp1} and \eqref{eq:ftcomp2}, we obtain 
\begin{equation}\label{eq:ftRad}
        f_t(x_1, \ldots, x_t)  = \tanh\left(\barbmu_t^\sT(\barbOmega_t)^{-1}\bx\right).
\end{equation}
Furthermore, the partial derivatives of $f_t$ can be expressed in the following compact form:
\begin{equation}\label{eq:dftRad}
    \partial_{x_i}f_t(x_1, \ldots, x_t) = \left(1-\tanh^2\left(\barbmu_t^\sT(\barbOmega_t)^{-1}\bx\right)\right)\barbmu_t^\sT(\barbOmega_t)^{-1}\be_i, \quad \mbox{ for }i\in [t],
\end{equation}
where $\be_i$ is the vector corresponding to the $i$-th element of the canonical basis of $\mathbb R^t$.

\paragraph{Computation of $f_t$ for Gaussian prior.} Here, $X_*\sim\normal(0, 1)$. By evaluating explicitly the conditional expectation, one readily obtains that 
\begin{equation}\label{eq:ftGauss}
        f_t(x_1, \ldots, x_t) = \E\{ X_* \mid X_1 = x_1, \ldots, X_t = x_t  \} = \frac{\barbmu_t^\sT(\barbOmega_t)^{-1}\bx}{1+\barbmu_t^\sT(\barbOmega_t)^{-1}\barbmu_t},
\end{equation}
which leads to the following expressions for the partial derivatives:
\begin{equation}\label{eq:dftGauss}
    \partial_{x_i}f_t(x_1, \ldots, x_t) = \frac{\barbmu_t^\sT(\barbOmega_t)^{-1}\be_i}{1+\barbmu_t^\sT(\barbOmega_t)^{-1}\barbmu_t}, \quad \mbox{ for }i\in [t],
\end{equation}

\paragraph{Computation of $f_t$ for non-negative Bernoulli-Gaussian prior.}
Here, $X_*$ is equal to $0$ with probability $1-\lambda$ and it is distributed according to the modulus of a Gaussian with $0$ mean and variance $\sigma^2$ with probability $\lambda$, i.e., $X_*\sim (1-\lambda)\delta_0 + \lambda \,\mathcal{N}_+(0, \sigma^2)$. The parameter $\lambda$ is taken to be $1/6$, which is close to the actual sparsity of the image given by $8645/50625$; the parameter $\sigma^2$ is taken to be $1/\lambda$, which gives $\mathbb E\{X_*^2\}=1$, and the image is normalized to have unit second moment. Now, we can write 
\begin{equation}\label{eq:BernGauss}
     f_t(x_1, \ldots, x_t) = \E\{ X_* \mid X_1 = x_1, \ldots, X_t = x_t  \} \\
     =\frac{\E_{X_*} \{X_* \, \mathbb{P}(X_1 = x_1, \ldots, X_t = x_t \mid X_*) \}}
     {\E_{X_*}\, \{ \mathbb{P}(X_1 = x_1, \ldots, X_t = x_t \mid X_*) \}},
\end{equation}
where $\E_{X_*}$ denotes the expected value with respect to $X_*$.
Using that $(X_1, \ldots, X_t) = \barbmu_t X_* +(W_1, \ldots, W_t)$ with $(W_1, \ldots, W_t)\sim \normal(\bzero, \barbOmega_t)$, it is straightforward to compute the expectations on the RHS, which yields
\begin{equation}\label{eq:ftBernGauss}
          f_t(x_1, \ldots, x_t) = \frac{\frac{\lambda}{\sqrt{2 \pi \sigma^2}} \left[ \frac{\sqrt{\pi}b}{\sqrt{2 a^3}}\exp\left(\frac{b^2}{8a}\right) \left( 1 + \text{Erf}\left(\frac{b}{\sqrt{8a}} \right) \right) + \frac{2}{a} \right]}
      {1- \lambda + \frac{\lambda}{\sqrt{a \sigma^2}}\exp\left(\frac{b^2}{8a}\right) \left( 1 + \text{Erf}\left(\frac{b}{\sqrt{8a}} \right) \right) }, 
\end{equation}
where $a = 1/\sigma^2 + \barbmu_t^\sT(\barbOmega_t)^{-1}\barbmu_t$, $b=2 \barbmu_t^\sT(\barbOmega_t)^{-1}\bx$, $\bx = (x_1, \ldots, x_t)$, and $\text{Erf}$ is the error function. \\
To compute the derivative, we write
$\partial_{x_i}f_t(x_1, \ldots, x_t) = \partial_{b}f_t(x_1, \ldots, x_t) \partial_{x_i} b$. Since $\partial_{x_i} b  =  2 \barbmu_t^\sT(\barbOmega_t)^{-1}\be_i$, after some manipulations, one obtains
\begin{align}
    \partial_{x_i}f_t(x_1, \ldots, x_t) &= \frac{2 \frac{\lambda}{\sqrt{2\pi \sigma^2}}  \left[  \frac{b}{2a^2} + \frac{\sqrt{\pi}}{\sqrt{2a^3}}\left( 1 + \frac{b^2}{4a} \right)\exp\left(\frac{b^2}{8a}\right) \left( 1 + \text{Erf}\left(\frac{b}{\sqrt{8a}} \right) \right)  \right]}
    {1- \lambda + \frac{\lambda}{\sqrt{a \sigma^2}}\exp\left(\frac{b^2}{8a}\right) \left( 1 + \text{Erf}\left(\frac{b}{\sqrt{8a}} \right) \right)} \barbmu_t^\sT(\barbOmega_t)^{-1}\be_i \nonumber \\
    &-
    \frac{ \frac{\lambda}{\sqrt{2\pi \sigma^2}}\left(\frac{\sqrt{\pi}b}{\sqrt{2 a^3}} \exp\left(\frac{b^2}{8a}\right) \left( 1 + \text{Erf}\left(\frac{b}{\sqrt{8a}} \right) \right) + \frac{2}{a} \right)} {\left(1- \lambda + \frac{\lambda}{\sqrt{a \sigma^2}}\exp\left(\frac{b^2}{8a}\right) \left( 1 + \text{Erf}\left(\frac{b}{\sqrt{8a}} \right) \right) \right)^2 }   \nonumber\\
    & \ \cdot 2 \left( \frac{\lambda}{a \sqrt{2 \pi \sigma^2}} + \frac{\lambda b}{4 \sqrt{a^3 \sigma^2}} \exp\left(\frac{b^2}{8a}\right) \left( 1 + \text{Erf}\left(\frac{b}{\sqrt{8a}} \right) \right) \right)  \barbmu_t^\sT(\barbOmega_t)^{-1}\be_i.
    \label{eq:dftBernGauss}
\end{align}

\paragraph{Computation of $h_{t+1}$ for linear regression.} In this case, we have $Y = G+W$, where $W\sim\normal(0, \sigma^2)$. For $t=0$, we set $h_1(y)=y$ and consequently 
\begin{equation}\label{eq:part1lin}
    \partial_g h_1(y) = 1.
\end{equation}
For $t>0$, $h_{t+1}$ is defined as in \eqref{eq:Bht}. From \eqref{eq:GRi_def}, we have that $(G, R_1, \ldots, R_{t})   \sim \normal(\bzero, \barbSigma_{t+1})$. Thus, the second conditional expectation in \eqref{eq:Bht} can be expressed as
\begin{equation}\label{eq:htcomp1}
    \mathbb E\{G\mid R_1=r_1, \ldots, R_t = r_t\} = (\barbSigma_{t+1})_{[1, 2:t+1]}\left((\barbSigma_{t+1})_{[2:t+1, 2:t+1]}\right)^{-1}\br,
\end{equation}
where $\br = (r_1, \ldots, r_t)^\sT$.  Note that $(G, R_1, \ldots, R_{t}, Y)   \sim \normal(\bzero, \bar{\bS}_{t+2})$, where 
\begin{equation*}
    \bar{\bS}_{t+2} := \left[\begin{array}{cc}
    \barbSigma_{t+1} & (\barbSigma_{t+1})_{[1:t+1, 1]} \\
    (\barbSigma_{t+1})_{[1, 1:t+1]}     & \mathbb E\{Y^2\}
    \end{array}\right].
\end{equation*}
Here, we denote by $(\bA)_{[i_1:i_2, j_1:j_2]}$ the submatrix obtained by taking the rows of $\bA$ from $i_1$ to $i_2$ and the columns of $\bA$ from $j_1$ to $j_2$ (if $i_1=i_2$ or $j_1=j_2$, the second index is omitted). Thus, the first conditional expectation in \eqref{eq:Bht} can be expressed as
\begin{equation}\label{eq:htcomp2}
      \mathbb E\{G\mid R_1=r_1, \ldots, R_t = r_t, Y=y\} = (\bar{\bS}_{t+2})_{[1, 2:t+2]}\left((\bar{\bS}_{t+2})_{[2:t+2, 2:t+2]}\right)^{-1}\left[\begin{array}{c}
       \br      \\
            y
      \end{array}\right].
\end{equation}
 By combining \eqref{eq:htcomp1} and \eqref{eq:htcomp2}, we obtain 
\begin{equation}  \label{eq:htlin}      
\begin{split}
        h_{t+1}(r_1&, \ldots, r_t, y) =  E\{G\mid R_1=r_1, \ldots, R_t = r_t, Y=y\} - E\{G\mid R_1=r_1, \ldots, R_t = r_t\} \\
        &= (\bar{\bS}_{t+2})_{[1, 2:t+2]}\left((\bar{\bS}_{t+2})_{[2:t+2, 2:t+2]}\right)^{-1}\left[\begin{array}{c}
       \br      \\
            y
      \end{array}\right] - (\barbSigma_{t+1})_{[1, 2:t+1]}\left((\barbSigma_{t+1})_{[2:t+1, \,  2:t+1]}\right)^{-1}\br.
\end{split}
\end{equation}
Furthermore, the partial derivatives of $h_{t+1}$ can be expressed in the following compact form:
\begin{equation}\label{eq:dhtlin}
\begin{split}
    \partial_{r_i}&h_{t+1}(r_1, \ldots, r_t, y) 
    = (\bar{\bS}_{t+2})_{[1, 2:t+2]}\left((\bar{\bS}_{t+2})_{[2:t+2, 2:t+2]}\right)^{-1} \begin{bmatrix} \be_i \\ 0 \end{bmatrix}
    \\
    &\hspace{13em}- (\barbSigma_{t+1})_{[1, 2:t+1]}\left((\barbSigma_{t+1})_{[2:t+1, 2:t+1]}\right)^{-1}\be_i, \, \mbox{ for } i\in [t],\\
        \partial_{g}&h_{t+1}(r_1, \ldots, r_t, y)   
    = (\bar{\bS}_{t+2})_{[1, 2:t+2]}\left((\bar{\bS}_{t+2})_{[2:t+2, 2:t+2]}\right)^{-1}\be_{t+1}. 
\end{split}
\end{equation}

\paragraph{Computation of $h_{t+1}$ for noiseless 1-bit compressed sensing.}
In this case, we have that $Y={\rm sign}(G)$, which implies that 
\begin{equation}\label{eq:1bitY}
    \begin{split}
 \mathbb P(Y=1\mid G=g) &=\frac{1+{\rm sign}(g)}{2},       \qquad \qquad
 \mathbb P(Y=-1\mid G=g) =\frac{1-{\rm sign}(g)}{2} .
    \end{split}
\end{equation}
For $t=0$, we set $h_1(y)=y$ and consequently
\begin{equation}\label{eq:part1CS}
\mathbb E\{ \partial_g h_1(Y)\} =\mathbb E\{ \partial_g {\rm sign}(G)\} = \frac{\mathbb E\{G\,{\rm sign}(G)\}}{\mathbb E\{G^2\}} = \sqrt{\frac{2}{\pi\mathbb E\{G^2\}}},
\end{equation}
where the first equality follows from the definition of $h_1$ and $y$, and the second equality is obtained by recalling that $G$ is Gaussian with zero mean and by applying Stein's lemma.
For $t>0$, $h_{t+1}$ is defined as in \eqref{eq:Bht}. Since  $(G,R_1, \ldots, R_t) \sim \normal(\bzero, \barbSigma_{t+1})$, the conditional distribution of $G$ given $(R_1=r_1, \ldots, R_t=r_t)$ is $\normal(\hat{r}_t, \hat{\sigma}^2_t)$ where
\begin{equation}\label{eq:defsigmaht}
\begin{split}
 \hat{r}_t & =\mathbb E\{G\mid R_1=r_1, \ldots, R_t=r_t\}
 = (\barbSigma_{t+1})_{[1, 2:t+1]}\left((\barbSigma_{t+1})_{[2:t+1, 2:t+1]}\right)^{-1}\br,\\
 \hat{\sigma}^2_t&=\mathbb E\{G^2\}-(\barbSigma_{t+1})_{[1, 2:t+1]}\left((\barbSigma_{t+1})_{[2:t+1, \,  2:t+1]}\right)^{-1}(\barbSigma_{t+1})_{[2:t+1, 1]}.
\end{split}    
\end{equation}
We therefore have
\begin{equation}\label{eq:1bitht1}
\begin{split}
       &   \mathbb E\{G\mid R_1=r_1, \ldots, R_t = r_t, Y=y\} =      \mathbb E\{G\mid \hat{R}_t=\hat{r}_t, Y=y\} \\
         &  = \frac{\mathbb E_Z\left\{(\hat{r}_t+\hat{\sigma}_t Z)\displaystyle\frac{1+{\rm sign}(y(\hat{r}_t+\hat{\sigma}_t Z))}{2}\right\}}{\mathbb E_Z\left\{\displaystyle\frac{1+{\rm sign}(y(\hat{r}_t+\hat{\sigma}_t Z))}{2}\right\}},
\end{split}
\end{equation}
where $Z\sim\normal(0, 1)$, $\mathbb E_Z$ indicates that the expectation is taken over $Z$, and in the second line we use \eqref{eq:1bitY}. As $\mathbb E\{G\mid R_1=r_1, \ldots, R_t = r_t\}=\hat{r}_t$, \eqref{eq:1bitht1} readily implies that
\begin{equation}
    h_{t+1}(r_1, \ldots, r_t, y) = \frac{\hat{\sigma}_t\mathbb E_Z\left\{ Z\displaystyle\frac{1+{\rm sign}(y(\hat{r}_t+\hat{\sigma}_t Z))}{2}\right\}}{\mathbb E_Z\left\{\displaystyle\frac{1+{\rm sign}(y(\hat{r}_t+\hat{\sigma}_t Z))}{2}\right\}} = \frac{\hat{\sigma}_t\,\phi\left(\displaystyle\frac{\hat{r}_t}{\hat{\sigma}_t}\right)}{\displaystyle\frac{y+1}{2}-\Phi\left(-\displaystyle\frac{\hat{r}_t}{\hat{\sigma}_t}\right)},
    \label{eq:htCS}
\end{equation}
where $\phi(x)=\frac{1}{\sqrt{2\pi}}\exp(-x^2/2)$, and $\Phi(x) = \int_{-\infty}^x\phi(t)\,{\rm d}t$. The second equality in 
\eqref{eq:htCS}  is obtained by computing the expectations and using the fact that $y\in \{-1, 1\}$. 

For the partial derivatives of $h_{t+1}$, we note that $\partial_{r_i} h_{t+1} = \frac{\partial h_{t+1}}{\partial \hat{r}_t}\frac{\partial \hat{r}_t}{\partial r_i}$, and 
\[ \frac{\partial \hat{r}_t}{\partial r_i}=(\barbSigma_{t+1})_{[1, 2:t+1]}\left((\barbSigma_{t+1})_{[2:t+1, 2:t+1]}\right)^{-1}\be_i, \quad i \in [t]. \] Thus, by using \eqref{eq:htCS} to compute $ \frac{\partial h_{t+1}}{\partial \hat{r}_t}$, after some manipulations, we have that 
\begin{equation}\label{eq:dhtCS}
\begin{split}
    &   \partial_{r_i} h_{t+1}(r_1, \ldots, r_t, y) \\
     & = \, \frac{-\phi^2\left(\displaystyle\frac{\hat{r}_t}{\hat{\sigma}_t}\right)-\displaystyle\frac{\hat{r}_t}{\hat{\sigma}_t}\phi\left(\displaystyle\frac{\hat{r}_t}{\hat{\sigma}_t}\right)\left(\displaystyle\frac{y+1}{2}-\Phi\left(-\displaystyle\frac{\hat{r}_t}{\hat{\sigma}_t}\right)\right)}{\left(\displaystyle\frac{y+1}{2}-\Phi\left(-\displaystyle\frac{\hat{r}_t}{\hat{\sigma}_t}\right)\right)^2}\cdot(\barbSigma_{t+1})_{[1, 2:t+1]}\left((\barbSigma_{t+1})_{[2:t+1, 2:t+1]}\right)^{-1}\be_i.
\end{split}
\end{equation}
Finally, for the partial derivative with respect to $g$, we have
\begin{align}
    \mathbb E\left\{\partial_g h_{t+1}(R_1, \ldots, R_t, Y)\right\} &= \mathbb E\left\{h_{t+1}(R_1, \ldots, R_t, Y)\, \frac{\mathbb E\left\{G\mid R_1, \ldots, R_t, Y\right\}-\mathbb E\left\{G\mid R_1, \ldots, R_t\right\}}{{\rm Var}(G\mid R_1, \ldots R_t)}\right\} \nonumber \\
    &= \frac{1}{\hat{\sigma}_t^2}\mathbb E\left\{h_{t+1}(R_1, \ldots, R_t, Y)^2\right\},
    \label{eq:dhtCS2}
\end{align}
where the first equality follows from Stein's lemma (see e.g. (A.8) of \cite{mondelli2021approximate}), and in the second equality we use the definition \eqref{eq:Bht} of $h_{t+1}$ and that ${\rm Var}(G\mid R_1, \ldots R_t) = \hat{\sigma}_t^2$ (see \eqref{eq:defsigmaht}). 

\paragraph{Implementation details.} \emph{RI-GAMP}: We use consistent empirical estimates for the state evolution parameters required for the posterior mean denoisers and their partial derivatives. These estimates are computed as described on p.\pageref{para:emp_SE}. To estimate the first row  and column of $\barbGamma_{t+1}$, we use the definition \eqref{eq:Bft} and the tower property of conditional expectation: 
\begin{equation}\label{eq:spest}
    (\barbGamma_{t+1})_{1, i+1} = \mathbb E\left\{X_*f_i(X_1, \ldots, X_i)\right\} = \mathbb E\left\{ f_i(X_1, \ldots, X_i)^2\right\}, \quad i \in [t]. 
\end{equation}
Therefore, one can consistently estimate $(\barbGamma_{t+1})_{1, i+1}$ via $ \| f_i(\bx_1, \ldots, \bx_i) \|^2/d$. The partial derivatives for the matrix $\bPsi_{t+1}$ (defined in  \eqref{eq:Psi_Phi_def}) are computed using 
\eqref{eq:dftRad} (for Rademacher prior) and \eqref{eq:dftGauss} (for Gaussian prior).
The partial derivatives for the matrix $\bPhi_{t+1}$ (again, defined in  \eqref{eq:Psi_Phi_def}) are computed using 
\eqref{eq:dhtlin} (for linear regression) and \eqref{eq:dhtCS}-\eqref{eq:dhtCS2} (for 1-bit compressed sensing). For the quantity $\langle\partial_g h_1(\by)\rangle$, we use the deterministic limit $\mathbb E\{\partial_g h_1(Y)\}$ which is given in the two settings by \eqref{eq:part1lin} and \eqref{eq:part1CS}, respectively.

\emph{VAMP}: Our implementation of VAMP is based on Algorithm 2 in \cite{schniter2016vector} and the corresponding state evolution is derived from
\cite{pandit2020inference}\footnote{See also the code available at \texttt{https://sourceforge.net/projects/gampmatlab/}.}. To ensure numerical stability, we clipped the $\alpha_i$ and $\beta_i$ in Algorithm 2 to lie in $[\texttt{tol}, 1-\texttt{tol}]$, where $\texttt{tol} =10^{-11}$.

%%%%%%%%%%

\section{Additional Numerical Results}\label{app:addnum}

\begin{figure}[t]
\centering
\begin{minipage}{.62\textwidth}
  \centering
    \subfloat[$\delta=0.7$\label{fig:delta0dot7}]{\includegraphics[width=.5\columnwidth]{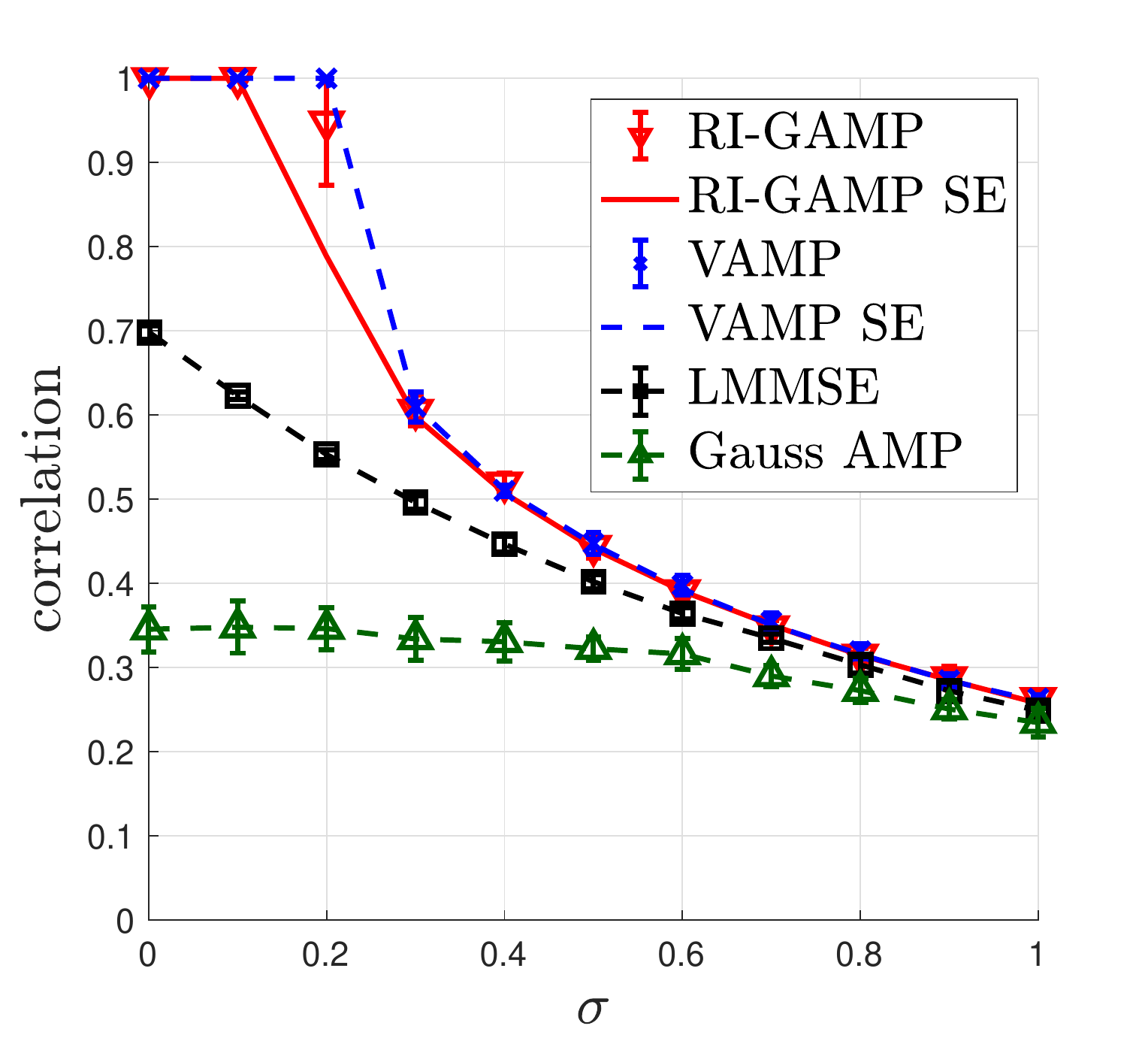}}
    \subfloat[$\sigma=0.4$\label{fig:sigma0dot4}]{\includegraphics[width=.5\columnwidth]{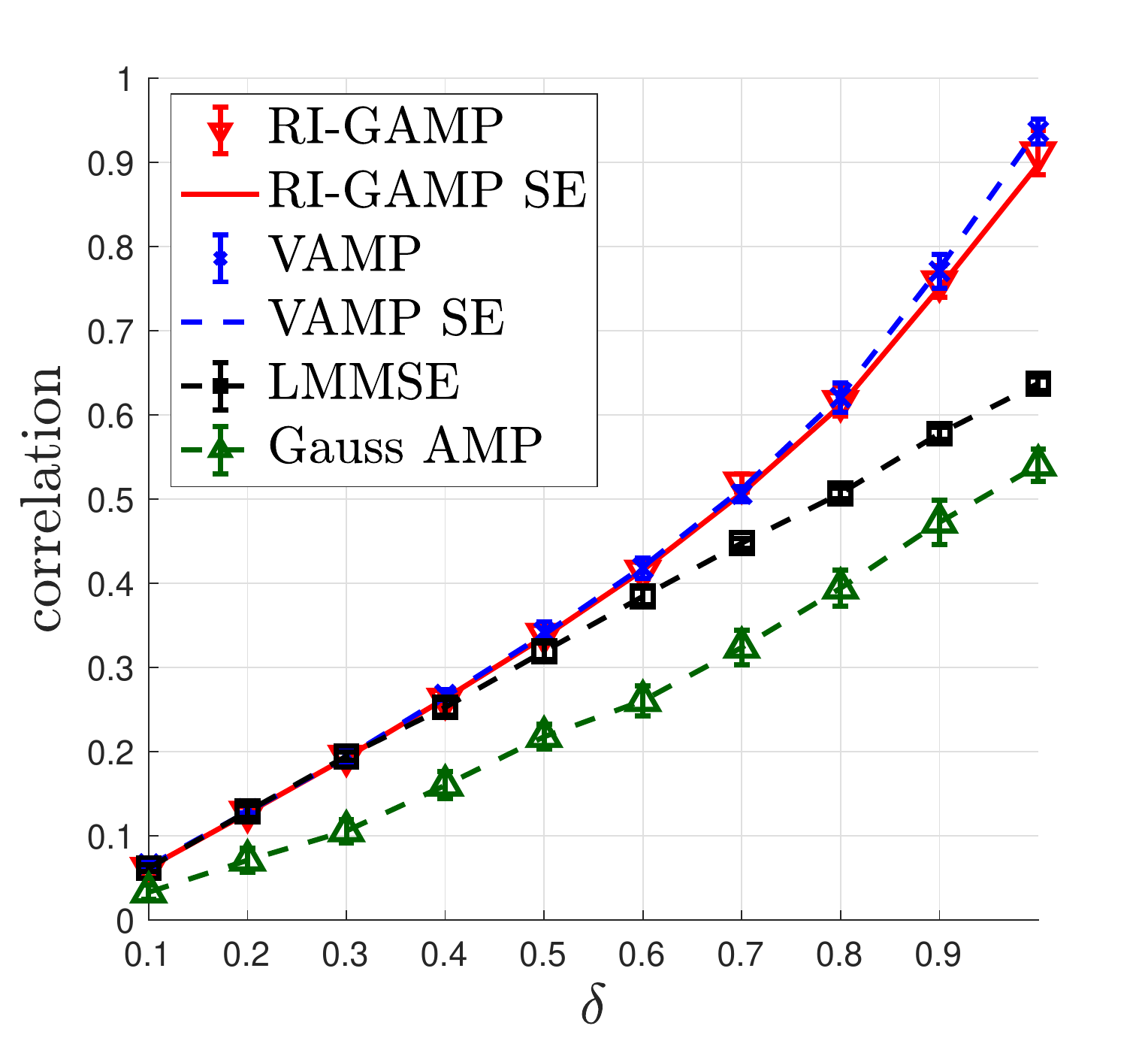}}
\captionof{figure}{\small Additional numerical results for linear regression with a Rademacher prior: normalized squared correlation vs. noise level $\sigma$ (on the left) and vs. aspect ratio $\delta$ (on the right).}
\label{fig:linfixsigma}
\end{minipage}
\hspace{2em}
\begin{minipage}{.3\textwidth}
  \centering
  \includegraphics[width=\columnwidth]{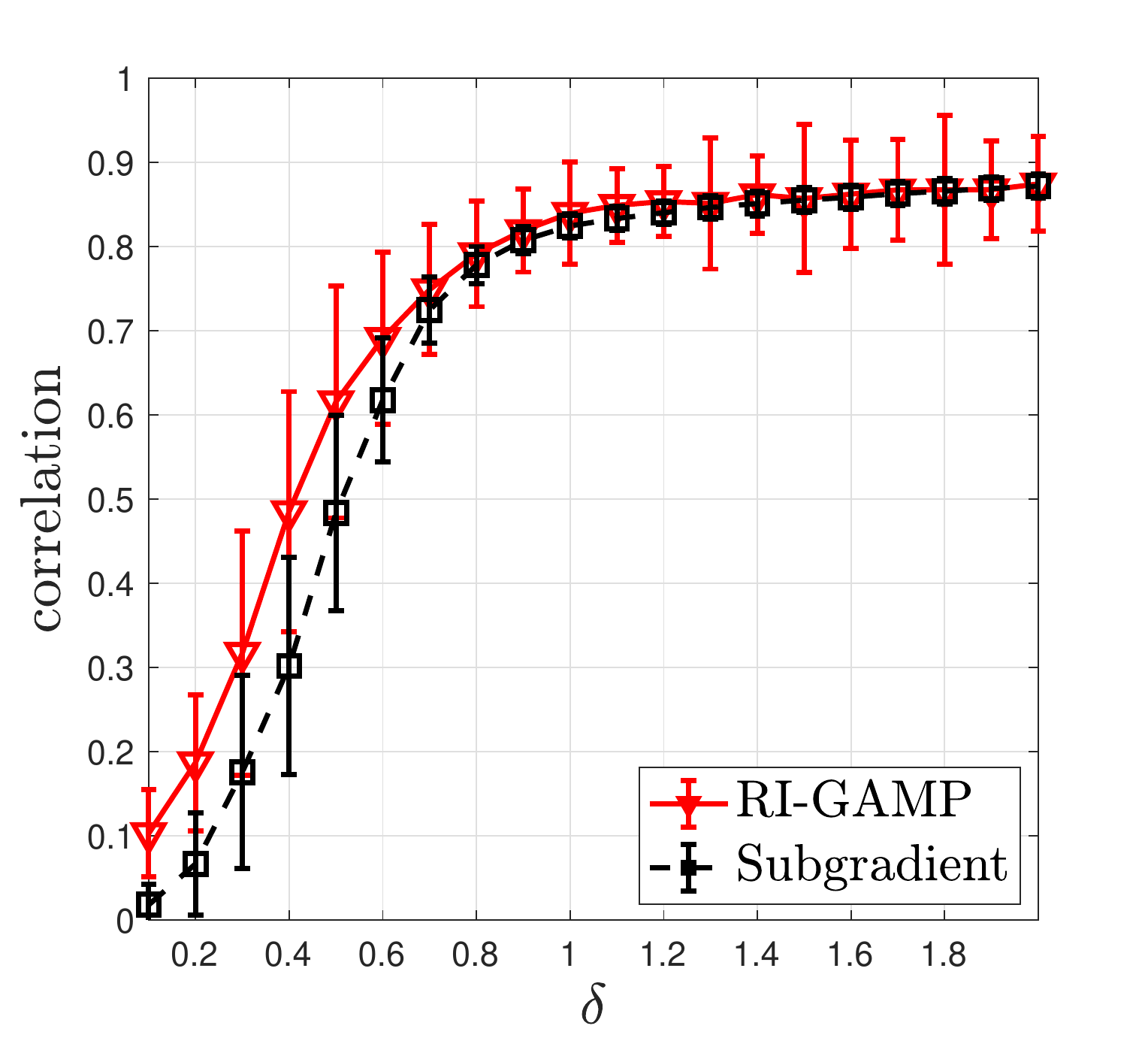}
  \captionof{figure}{\small Comparison between the normalized squared correlation of RI-GAMP and of a subgradient method for the recovery of an image from 1-bit measurements of its wavelet transform.}
  \label{fig:test2}
\end{minipage}
\end{figure}

In Figure \ref{fig:linfixsigma}, we provide additional numerical results for the model of linear regression with a Rademacher signal prior: on the left, we plot the normalized squared correlation as a function of $\sigma$, for $\delta=0.7$; and on the right, we plot the same metric as function of $\delta$, for $\sigma=0.4$. The results showcase a similar qualitative behavior as discussed in Section \ref{sec:simu}: the performance of RI-GAMP is close to that of VAMP, except when approaching the phase transition for exact recovery ($\sigma\approx 0.2$ in Figure \ref{fig:delta0dot7}); and RI-GAMP significantly improves upon algorithms that do not take into account the signal prior (LMMSE) or the spectrum of the noise (Gauss AMP). 

In Figure \ref{fig:test2}, we consider noiseless 1-bit compressed sensing.  The input $\bx^*$ is the  Haar wavelet transform of the RGB image in Figure \ref{fig:orimven}. We process the three  channels (R, G, B) separately and the input dimension is $d = 820 \times 1280 =1049600$.  The design matrix $\bA$ is given by $\bA=\bQ_n\boldsymbol{\Pi}_n\bLambda \boldsymbol{\Pi}_d \bQ_d$, where $\bQ_n$, $\bQ_d$ are orthonormal Discrete Cosine Transform (DCT) matrices  in $n$, $d$ dimensions, $\boldsymbol{\Pi}_n, \boldsymbol{\Pi}_d$ are random permutation matrices, and $\bLambda$ has i.i.d. $\sqrt{6}\cdot {\rm Beta}(1, 2)$ diagonal entries.
We compare the performance of RI-GAMP against the subgradient method from \cite{jacques2013robust}.
For the wavelet transform we use the implementation given in \cite{Lee2019}.
For RI-GAMP we use the same non-negative Bernoulli-Gaussian prior employed for the satellite image (cf. Figure \ref{fig:satimgfig} in Section \ref{sec:simu}).
Since there is no clear way to define the true sparsity of the signal in this setting, we fix $\delta=0.5$ and optimize over the sparsity rate for both algorithms, which yields a sparsity of $1/10$ (i.e., assuming that $1/10$ entries are non-zero) for RI-GAMP and of $1/20$ for the subgradient method.
We also note that the performance of the algorithms around these values is quite stable, so we don't expect the precise choice of the sparsity rate to matter much for the chosen range of $\delta$.
In Figure \ref{fig:test2}, we report the normalized squared correlation averaged over the 3 channels and error bars at 1 standard deviation  for 100 random trials. We remark that RI-GAMP improves upon the subgradient method for $\delta$ up to $1$ and, for larger $\delta$, its performance does not increase noticeably due to the already discussed numerical instabilities. The reconstructions provided by RI-GAMP and by the subgradient method for $\delta =0.5$ are also compared in Figure \ref{fig:venimgfig}. 

\begin{figure}[t]
    \centering
        \subfloat[Original image\label{fig:orimven}]{\includegraphics[width=.33\columnwidth]{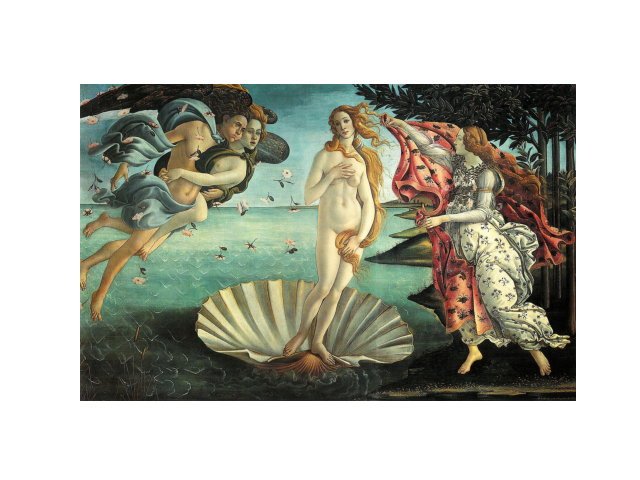}}
        \subfloat[RI-GAMP, $\delta=0.5$\label{fig:ampven}]{\includegraphics[width=.33\columnwidth]{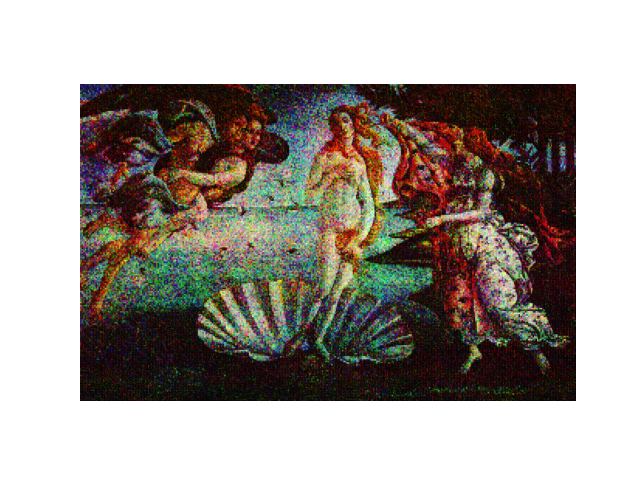}}
        \subfloat[Subgradient,  $\delta=0.5$\label{fig:gdven}]{\includegraphics[width=.33\columnwidth]{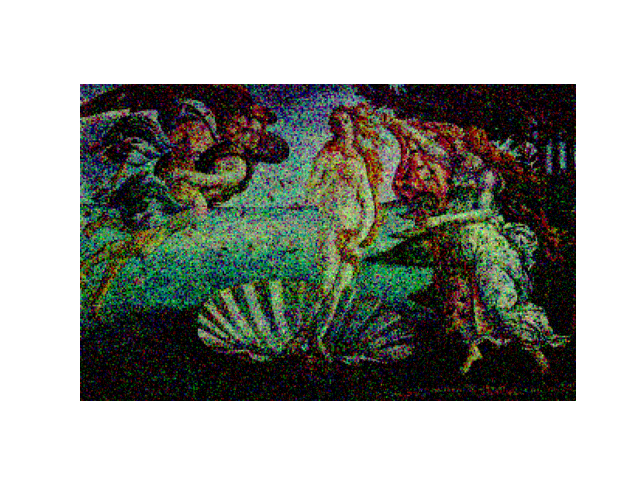}}
\caption{Reconstruction provided by RI-GAMP and by a subgradient method from 1-bit measurements of the wavelet transform of an RGB image.}
\label{fig:venimgfig}
\end{figure}

\section{Proof of Theorem \ref{thm:main}} \label{app:full_proof}

\subsection{Debiasing Coefficients for the Auxiliary AMP} \label{sec:debiasing_aux}

The debiasing coefficients $\{\sa_{ti}\}_{i=1}^t$ and $\{\ssb_{ti}\}_{i=1}^{t-1}$ for the auxiliary AMP in \eqref{eq:auxAMPz}-\eqref{eq:auxAMPm} are defined in terms of two $t \times t$ lower triangular matrices, $\hbPsi_t$ and $\hbPhi_t$, given by
\beq
\begin{split}
    \hbPsi_t & = \begin{pmatrix}
    0 & 0 & \ldots& \ldots  & 0 \\
  0  &  \< \partial_2 \bv^2 \> & 0 &   \ldots  & 0 \\
    0 & \< \partial_2 \bv^3 \> &  \< \partial_3 \bv^3 \> & \ldots & 0   \\
    \vdots & \vdots & \vdots  & \ddots & \vdots  \\
    0 & \< \partial_2 \bv^t \> & \< \partial_3 \bv^t \> & \ldots &  \< \partial_t \bv^t \>
    \end{pmatrix}, \qquad
    \hbPhi_t  = \begin{pmatrix}
    0 & 0 & \ldots & 0 & 0 \\
   \< \partial_1 \bu^2 \> &  0 & \ldots & 0  & 0 \\
    \< \partial_1 \bu^3 \> &   \< \partial_2 \bu^3 \> & \ldots & 0  & 0 \\
    \vdots & \vdots & \ddots & \vdots & \vdots  \\
     \< \partial_1 \bu^t \> & \< \partial_2 \bu^t \> & \ldots & \< \partial_{t-1} \bu^t \> & 0
    \end{pmatrix},
\label{eq:bPQ_def}
\end{split}
\eeq
where
\begin{align}
&  \partial_k \bv^t = \partial_{z_k} \tilde{f}_t(\bz^1, \ldots, \bz^t, \bx^*)  = \partial_{k-1} f_{t-1}(\bz^2 + \bar{\mu}_1 \bx, \,  \ldots, \, \bz^{t} + \bar{\mu}_{t-1} \bx^*), \qquad  k \ge 2, \nonumber \\
& \partial_1 \bu^t = \partial_g h_{t-1}(\bm^2, \ldots, \bm^{t-1}, q(\bg, \bveps) )
\Big\vert_{\bg = \bm^1}, \nonumber \\
& \partial_k \bu^t = \partial_{m_k} \tilde{h}_t(\bm^1, \ldots, \bm^{t-1}, \bveps)   = \partial_{k-1} h_{t-1}( \bm^2, \ldots, \bm^{t-1}, \, q(\bm^1, \bveps) ),  \qquad  k \ge 2.
\label{eq:part_uv}
\end{align}
Here, $\partial_{k-1}f_{t-1}$ and $\partial_{k-1}h_{t-1}$ denote the partial derivatives with respect to the $(k-1)$-th input variable. Next,  define matrices $\bM_t^{\sa}, \bM_t^{\ssb}\in \mathbb R^{t\times t}$ as:
\begin{align}
    \bM_t^{\sa} = \sum_{j=0}^t \kappa_{2(j+1)} \hbPsi_t (\hbPhi_t \hbPsi_t)^j,\nonumber  \\
    \bM_t^{\ssb}= \delta \sum_{j=0}^{t-1} \kappa_{2(j+1)} \hbPhi_t (\hbPsi_t \hbPhi_t)^j.
    \label{eq:Mta_Mtb}
\end{align}
Then, the coefficients $\{\sa_{ti}\}_{i=1}^t$ and $\{\ssb_{ti}\}_{i=1}^{t-1}$ are  obtained from the last row of $\bM_t^{\sa}$ and $\bM_t^{\ssb}$ as: 
\beq
\begin{split}
    (\sa_{t1}, \ldots, \sa_{tt}) &= (\,  (\bM_t^{\sa})_{t,1}, \ldots, (\bM_t^{\sa})_{t,t} \, ),\\
    (\ssb_{t1}, \ldots, \ssb_{t,t-1}) &= (\,  (\bM_t^{\ssb})_{t,1}, \ldots, (\bM_t^{\ssb})_{t, t-1} \, ).
\end{split}
\label{eq:atbt_coeffs}
\eeq

\subsection{State Evolution for Auxiliary AMP} \label{sec:SE_aux}

Using Theorem 2.6 of \cite{zhong2021approximate}, we first establish a state evolution result for the auxiliary AMP \eqref{eq:auxAMPz}-\eqref{eq:auxAMPm}. We will show in Proposition \ref{prop:auxSE} that the joint empirical distribution of $(\bm^1, \ldots, \bm^t)$ converges to a $t$-dimensional Gaussian $\normal(\bzero, 
\tbSigma_t)$, and the joint empirical distribution of $(\bz^1, \ldots, \bz^t)$ converges to a a $t$-dimensional Gaussian $\normal(\bzero, \tbOmega_t)$.

The covariance matrices are  defined recursively for $t \ge 1$, starting with 
$\tbOmega_1 =0$ and $\tbSigma_1 = \bar{\kappa}_2 \E\{ X_*^2\}$.  Given $(\tbOmega_t, \tbSigma_t)$,  let
\begin{align}
 & (Z_1=0,Z_2, \ldots, Z_t) \sim \normal(\bzero, \tbOmega_t),  \quad 
 V_t=f_{t-1}(Z_2 + \barmu_1 X_*, \, Z_3 + \barmu_2 X_*, \, \ldots, \, Z_{t} + \barmu_{t-1}, X_* ),  \label{eq:Zi_def}\\
 & (M_1, \ldots, M_t) \sim \normal(\bzero, \tbSigma_t), \quad U_{t+1}
=h_t(M_2, \ldots, M_t, \, q(M_1, \veps)). \label{eq:Mi_def}
\end{align}
In \eqref{eq:Zi_def},  $(Z_2, \ldots, Z_t)$ and $X_*$ are independent, and  we define $V_1=X_*$ and $U_1=0$.

Let $\tbDelta_{t+1}, \tbGamma_{t+1} \in \reals^{(t+1) \times (t+1)}$ be symmetric matrices with entries given by 
\begin{align}
\label{eq:infDeltaGamma_def}
    &(\tbDelta_{t+1})_{ij} = \E\{U_i U_j \}, 
   \quad 
  (\tbGamma_{t+1})_{ij}= \E\{ V_i V_j \} , \quad  1\le i, j \le (t+1).
\end{align}
Furthermore, we define $\tbPsi_{t+1}, \tbPhi_{t+1} \in \reals^{(t+1) \times (t+1)}$ as the deterministic versions of the matrices  $\hbPsi_{t+1}, \hbPhi_{t+1}$. Specifically,
\begin{equation}
\begin{split}
& \tbPsi_{t+1} =
\begin{pmatrix}
    0 & 0 & 0& \ldots & 0 \\
  0  &  \E\{ \partial_1 f_1 \} & 0 & \ldots  & 0 \\
      0  &  \E\{ \partial_1 f_2 \} &  \E\{ \partial_2 f_2 \}  & \ldots  & 0 \\
    \vdots & \vdots & \vdots   & \ddots & \vdots  \\
    0 & \E\{  \partial_1 f_{t} \} & \E\{ \partial_2 f_{t} \} & \ldots &  \E\{ \partial_{t} f_{t} \}
    \end{pmatrix}, \quad  \tbPhi_{t+1} = \begin{pmatrix}
    0 & 0 & \ldots & 0 & 0 \\
    \E\{ \partial_g h_1 \} &  0 & \ldots & 0  & 0 \\
     \E\{ \partial_g h_2 \} &   \E\{ \partial_1 h_2 \} & \ldots & 0  & 0 \\
    \vdots & \vdots & \ddots & \vdots & \vdots  \\
     \E\{ \partial_g h_{t} \} & \E\{ \partial_1 h_{t} \} & \ldots &  \E\{ \partial_{t-1} h_{t} \} & 0
    \end{pmatrix},
    \label{eq:tbPQ_def}
    \end{split}
\end{equation}
where we have used the shorthand
\begin{equation}
\begin{split}
        & \E\{ \partial_k f_\ell  \} \equiv \E\{  \partial_k f_{\ell}(Z_2 + \barmu_1 X_*, \, Z_3 + \barmu_2 X_*, \, \ldots, \, Z_{\ell+1} + \barmu_{\ell} X_*) \}, \\
        & \E\{ \partial_g h_\ell \} \equiv \E\{  \partial_g h_{\ell}(M_2, \ldots, M_\ell, q(g , \veps)) \vert_{g=M_1} \}, \\
        & \E\{ \partial_{k} h_\ell \} \equiv \E\{  \partial_{k} h_{\ell}(M_2, \ldots, M_\ell, q(M_1, \veps)) \}.
\end{split}
\label{eq:shorthand_def}
\end{equation}
From these matrices, we compute the covariances $\tbSigma_{t+1}, \tbOmega_{t+1} \in \reals^{(t+1) \times (t+1)}$ as:
 \beq
 \tbSigma_{t+1} =  \sum_{j=0}^{2t+1} \barkap_{2(j+1)} \, \tbXi_{t+1}^{(j)}, \qquad  \tbOmega_{t+1} = \delta \sum_{j=0}^{2t} \barkap_{2(j+1)} \, \tbTheta_{t+1}^{(j)},
 \label{eq:tSigmaOmega_def}
 \eeq
where $\tbXi_{t+1}^{(0)} = \tbGamma_{t+1}$, $\tbTheta_{t+1}^{0} = \tbDelta_{t+1}$, and for $j \ge 1$:
\beq
\begin{split}
\tbXi_{t+1}^{(j)} & = \sum_{i=0}^{j} (\tbPsi_{t+1} \tbPhi_{t+1})^i \, \tbGamma_{t+1} 
\Big((\tbPsi_{t+1} \tbPhi_{t+1})^{\sT}\Big)^{j-i}  +  \sum_{i=0}^{j-1} (\tbPsi_{t+1} \tbPhi_{t+1})^i \, \tbPsi_{t+1}\tbDelta_{t+1} \tbPsi_{t+1}^{\sT}
\Big((\tbPsi_{t+1} \tbPhi_{t+1})^{\sT}\Big)^{j-i-1}, \\
\tbTheta^{(j)}_{t+1} & = \sum_{i=0}^j (\tbPhi_{t+1} \tbPsi_{t+1} )^i \, \tbDelta_{t+1}  \Big((\tbPhi_{t+1}\tbPsi_{t+1})^{\sT}\Big)^{j-i}   +   \sum_{i=0}^{j-1} (\tbPhi_{t+1} \tbPsi_{t+1} )^i \, \tbPhi_{t+1} \tbGamma_{t+1}  \tbPhi_{t+1}^{\sT}
 \Big((\tbPhi_{t+1}\tbPsi_{t+1})^{\sT}\Big)^{j-i-1}.
\label{eq:tXi_Theta_t1j_def}
\end{split}
\eeq
We note that computing the last column and row of $\tbGamma_{t+1}, \tbPsi_{t+1}$ requires knowledge of $\tbOmega_{t+1}$. However, these entries are zeroed out in the computation of
$\tbXi_{t+1}^{(j)}, \tbTheta^{(j)}_{t+1}$ in \eqref{eq:tXi_Theta_t1j_def}, and hence only $\tbGamma_{t}, \tbPsi_{t}$ are required to compute $\tbSigma_{t+1}, \tbOmega_{t+1}$ via \eqref{eq:tSigmaOmega_def}.

\begin{proposition}[State evolution for auxiliary AMP]
Consider the auxiliary AMP in \eqref{eq:auxAMPz}-\eqref{eq:auxAMPm} and the state evolution random variables defined in \eqref{eq:Zi_def}-\eqref{eq:Mi_def}. Let $\tilde{\psi}:\reals^{2t+1} \to \reals$ and $\tilde{\phi}:\reals^{2t+2} \to \reals$ be any pseudo-Lipschitz functions of order 2. Then for each $t\ge 1$, we almost surely have
\begin{align}
& \lim_{n \to \infty}  \frac{1}{d} \sum_{i=1}^d  \tilde{\psi}(z^2_i, \ldots, z^{t+1}_i, v^2_i, \ldots, v^{t+1}_i, x^*_i) 
= \E\{ \tilde{\psi}(Z_2, \ldots Z_{t+1}, V_2, \ldots, V_{t+1}, X_*) \}, \label{eq:PL2z_conv}\\
& \lim_{n \to \infty}  \frac{1}{n} \sum_{i=1}^n \tilde{\phi}(m^1_i, \ldots, m^t_i, u^1_i, \ldots, u^{t+1}_i, \, \veps_i ) = \E\{ \tilde{\phi}(M_1, \ldots, M_t, U_1, \ldots, U_{t+1}, \, \veps) \}.
\label{eq:PL2m_conv}
\end{align}
Equivalently, as $n\to\infty$, almost surely:
\begin{align*}
    & (\bz^2, \ldots, \bz^{t+1}, \,  \bv^2, \ldots, \bv^{t+1}, \, \bx^*) \, \stackrel{\mathclap{W_2}}{\longrightarrow}  \,  (Z_2, \ldots, Z_t, \, V_2, \ldots, V_{t+1}, \, X_*),  \\
    &    (\bm^1, \ldots, \bm^t, \,  \bu^1, \ldots, \bu^{t+1}, \, \bveps) \, \stackrel{\mathclap{W_2}}{\longrightarrow}  \,  (M_1, \ldots, M_t, \, U_1, \ldots, U_{t+1}, \, \veps).
\end{align*}
\label{prop:auxSE}
\end{proposition}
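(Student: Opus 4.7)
The plan is to recognize the auxiliary AMP \eqref{eq:auxAMPz}-\eqref{eq:auxAMPm} as a direct instance of the abstract AMP recursion for non-symmetric (rectangular) rotationally invariant matrices analyzed in \cite{zhong2021approximate}, and then invoke their Theorem~2.6 (state evolution for such recursions) as a black box. The bulk of the work is therefore translation and bookkeeping rather than probabilistic argument; the genuine analytic content—concentration of trace-polynomials of $\bO$, $\bQ$ and $\bLambda$, and identification of the limiting Gaussian structure in terms of rectangular free cumulants—is already carried out in \cite{zhong2021approximate}.

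First I would set up the correspondence. The abstract recursion in \cite{zhong2021approximate} takes the generic form of alternating ``left'' and ``right'' updates of the type $\bz^{t}=\bA^{\sT}\bu^{t}-\sum \ssb_{ti}\bv^{i}$ and $\bm^{t}=\bA\bv^{t}-\sum \sa_{ti}\bu^{i}$, where the new iterates are produced by row-wise nonlinearities applied to the history and to side information. I would check line by line that \eqref{eq:auxAMPz}-\eqref{eq:auxAMPm} fit this template, with side information $\bx^{\ast}$ on the $d$-side and $\bveps$ on the $n$-side (both satisfying the assumed $W_{2}$ convergence in Section~\ref{sec:prel}). The nonlinearities are $\tilde f_{t}$ and $\tilde h_{t}$ defined in \eqref{eq:tf_defs}: since $f_{t}$ and $h_{t}$ satisfy assumption \textbf{(A1)} (Lipschitz in each argument with a.s.\ continuous partials), and $\tilde f_{t},\tilde h_{t}$ are obtained from them by affine precomposition in the iterate variables and the composition $h_{t}(\cdot,q(m_{1},\veps))$, the regularity requirements on the nonlinearities assumed in \cite{zhong2021approximate} transfer immediately. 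The singular-value spectrum of $\bA$ has a compactly supported limit $\Lambda$, so the required moment/free-cumulant hypotheses on the spectrum are also in force.

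Next I would verify that the debiasing coefficients $\{\sa_{ti}\},\{\ssb_{ti}\}$ defined in \eqref{eq:bPQ_def}-\eqref{eq:atbt_coeffs} are exactly the Onsager coefficients prescribed by \cite{zhong2021approximate}: they are built from the empirical averages of $\partial_{k}\bu^{t},\partial_{k}\bv^{t}$ arranged in lower-triangular matrices $\hbPsi_{t},\hbPhi_{t}$, and combined with the rectangular free cumulants $\kappa_{2(j+1)}$ through polynomial expressions of the type $\sum_{j}\kappa_{2(j+1)}\hbPsi_{t}(\hbPhi_{t}\hbPsi_{t})^{j}$, which is precisely the form of the Onsager correction dictated by the rectangular free-probability calculation of the limit of $\bO^{\sT}p(\bLambda\bLambda^{\sT})\bO$ style expressions used in \cite{zhong2021approximate}. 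Once this matching is in place, their Theorem~2.6 yields the almost sure $W_{2}$-convergence of the joint empirical distribution of $(\bm^{1},\ldots,\bm^{t},\bu^{1},\ldots,\bu^{t+1},\bveps)$ to a Gaussian-plus-side-information law, and the analogous statement on the $d$-side; the pseudo-Lipschitz test-function form \eqref{eq:PL2z_conv}-\eqref{eq:PL2m_conv} follows from the characterization of $W_{2}$ convergence recalled in Section~\ref{sec:prel}.

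The last step is to confirm that the covariance matrices delivered by that black-box theorem coincide with $\tbSigma_{t},\tbOmega_{t}$ as defined in \eqref{eq:tSigmaOmega_def}. Here I would argue by induction on $t$: the base case $\tbOmega_{1}=0$, $\tbSigma_{1}=\bar\kappa_{2}\E\{X_{\ast}^{2}\}$ is immediate from $\bv^{1}=\bx^{\ast}$ and $\bm^{1}=\bA\bx^{\ast}$, since $\bm^{1}$ is asymptotically Gaussian with variance $\bar\kappa_{2}\E\{X_{\ast}^{2}\}$ by the moments/free cumulants of $\bA\bA^{\sT}$. For the inductive step, I would use the definitions \eqref{eq:Zi_def}-\eqref{eq:Mi_def} of $V_{t},U_{t+1}$ together with the induction hypothesis (which by the pseudo-Lipschitz test gives $\<\partial_{k}\bu^{t}\>\to\E\{\partial_{k}U_{t}\}$ and $\<\partial_{k}\bv^{t}\>\to\E\{\partial_{k}V_{t}\}$) to conclude that $\hbPsi_{t}\to\tbPsi_{t}$ and $\hbPhi_{t}\to\tbPhi_{t}$, after which \eqref{eq:tSigmaOmega_def} and \eqref{eq:tXi_Theta_t1j_def} are precisely the deterministic recursion obtained by substituting these limits into \eqref{eq:Mta_Mtb} and passing to the expectation form of the covariance recursion of \cite{zhong2021approximate}.

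The main obstacle I expect is not probabilistic but notational: making the correspondence tight enough that Theorem~2.6 of \cite{zhong2021approximate} applies verbatim, in particular matching (i) the indexing convention for the iterates and the Onsager coefficients, (ii) the handling of the side-information columns $\bx^{\ast}$ and $\bveps$ (which effectively play the role of extra deterministic rows in the abstract denoisers), and (iii) the reduction of the two-sided iteration to the form in which the cited state evolution is stated. Once the dictionary is fixed, the convergence and the identification of $\tbSigma_{t},\tbOmega_{t}$ both reduce to straightforward inductive verification that the empirical Onsager matrices converge to their deterministic counterparts.
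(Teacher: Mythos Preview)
Your approach is correct and essentially identical to the paper's: recognize the auxiliary AMP as an instance of the abstract rectangular AMP of \cite{zhong2021approximate}, check that the model assumptions and \textbf{(A1)} yield the hypotheses there, and invoke Theorem~2.6 as a black box. The only point you omit is that Theorem~2.6 in \cite{zhong2021approximate} is stated for $\delta=n/d\le 1$; the paper handles $\delta>1$ by rewriting the iteration in terms of $\bA'\equiv\bA^{\sT}$ and reapplying the same result. Your proposed inductive verification that the covariances match $\tbSigma_t,\tbOmega_t$ is more than the paper actually does---the formulas \eqref{eq:tSigmaOmega_def}--\eqref{eq:tXi_Theta_t1j_def} are transcribed directly from the output of Theorem~2.6, so no separate matching argument is required.
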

The proposition follows directly from Theorem 2.6 in \cite{zhong2021approximate} as the auxiliary AMP in \eqref{eq:auxAMPz}-\eqref{eq:auxAMPm}  is of the standard form for which that state evolution result applies. That result is proved for  $\delta = \frac{n}{d} \le 1$ under two sets of assumptions (cf. Assumptions 2.4 and 2.5 in \cite{zhong2021approximate}).  The first set of assumptions concerns the design matrix, and these coincide with the ones we describe in Section \ref{sec:prel}. The second set concerns the empirical distribution of the signal and noise vectors, and the functions $\tilde{f}_t, \tilde{h}_{t}$ used in the auxiliary AMP. This set of assumptions is also satisfied since $\bx^* \in \reals^d$ and $\bveps \in \reals^n$ are both independent of the design matrix $\bA$ and satisfy $\bx^* \stackrel{W_2}{\to} X_*$ and $\bveps \stackrel{W_2}{\to} \veps$. Furthermore, our assumption \textbf{(A1)} (see p.\pageref{assump:A1}) ensures that the required Lipschitz and continuity conditions on $\tf_t, \th_t$ and their partial derivatives are satisfied.  Therefore,  for $\delta \le 1$, the iteration in \eqref{eq:auxAMPz}-\eqref{eq:auxAMPm} satisfies all the assumptions under which Theorem 2.6 in \cite{zhong2021approximate} holds.
Finally, for the case $\delta >1$, we can rewrite  the auxiliary AMP in terms of $\bA' \equiv \bA^{\sT}$ and then apply Theorem 2.6 in  \cite{zhong2021approximate}.

We conclude this section by showing that the state evolution of the auxiliary AMP described above is equivalent to the state evolution of the proposed AMP algorithm described in Section \ref{sec:RIGAMP}. 

\begin{lemma}[Equivalence of state evolution between true and  auxiliary AMP]
For $t \ge 1$, we have that
\begin{equation}
\begin{split}
& (M_1, \ldots, M_{t}) \stackrel{\rm d}{=} (G, R_1, \ldots, R_{t-1}) \ \sim \ \normal(\bzero, \barbSigma_t), \\
&      (Z_2, \ldots, Z_{t+1}) \, \stackrel{\rm d}{=} \, (X_1 - \barmu_1 X_*, \ldots, X_t -\barmu_t X_*) \ \sim \ \normal(\bzero, \barbOmega_t), 
\end{split}
\end{equation}
where the random variables on the left are defined in \eqref{eq:Zi_def}-\eqref{eq:Mi_def}, and the random variables on the right are defined in \eqref{eq:GRi_def}-\eqref{eq:Xi_def}.
\label{lem:SE_equiv}
\end{lemma}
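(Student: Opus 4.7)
The plan is to proceed by induction on $t$, exploiting that the auxiliary AMP was constructed precisely so that its state evolution is a re-indexed, mean-centered copy of the true AMP's state evolution. The natural identifications between the two sets of random variables are $M_1 \leftrightarrow G$, $M_{i+1}\leftrightarrow R_i$, $U_1 \leftrightarrow 0$, $U_{i+1}\leftrightarrow S_i$, $Z_1 \leftrightarrow 0$, $Z_{i+1} \leftrightarrow X_i - \bar{\mu}_i X_*$, $V_1 \leftrightarrow X_*$, and $V_{i+1} \leftrightarrow \hat X_i$. Under these identifications, the shifted form of $\tf_{t+1}$ in \eqref{eq:tf_defs} is engineered so that $\tf_{t+1}(Z_1,\ldots,Z_{t+1},X_*) = f_t(Z_2+\bar\mu_1 X_*,\ldots,Z_{t+1}+\bar\mu_t X_*)$ coincides in law with $f_t(X_1,\ldots,X_t)=\hat X_t$, and similarly $\th_{t+1}$ makes $U_{t+1} \stackrel{\rm d}{=} S_t$.

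For the base case $t=1$, the initialization $\tbSigma_1 = \bar\kappa_2 \E\{X_*^2\}$ matches $\barbSigma_1$ from \eqref{eq:SE_init}, and expanding \eqref{eq:tSigmaOmega_def} for $\tbOmega_2$ using $V_1 = X_*$ and $U_2 \stackrel{\rm d}{=} h_1(Y) = S_1$ produces $(\tbOmega_2)_{2,2} = \barbOmega_1$. For the inductive step, assuming the claim at $t$, the inductive hypothesis ensures that the arguments of $\tf_{t+1}$ and $\th_{t+1}$ have the claimed joint law, so $V_{i+1} \stackrel{\rm d}{=} \hat X_i$ and $U_{i+1}\stackrel{\rm d}{=} S_i$ for $i\le t$. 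Comparing entry-by-entry using \eqref{eq:infDeltaGamma_def} with \eqref{eq:BarDelta_def}-\eqref{eq:BarGamma_def} then yields $\tbGamma_{t+1} = \barbGamma_{t+1}$ and $\tbDelta_{t+1} = \barbDelta_{t+1}$. The chain rule applied to the shift in \eqref{eq:tf_defs} gives $\partial_k \tf_{\ell+1} = \partial_{k-1}f_\ell$ for $k\ge 2$, which after taking expectations shows $\tbPsi_{t+1} = \barbPsi_{t+1}$; an analogous identity for $\th_{t+1}$ gives $\tbPhi_{t+1} = \barbPhi_{t+1}$.

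With these four matrix equalities in hand, the recursion \eqref{eq:tSigmaOmega_def} for $\tbSigma_{t+1}$ and the recursion \eqref{eq:barSigma_def} for $\barbSigma_{t+1}$ are the same algebraic expression applied to the same inputs, so $\tbSigma_{t+1} = \barbSigma_{t+1}$. This lets me extend the identification to $M_{t+1}\leftrightarrow R_t$, whence $U_{t+2} \stackrel{\rm d}{=} S_{t+1}$, and the preceding matching argument then also delivers $\tbDelta_{t+2} = \barbDelta_{t+2}$ and $\tbPhi_{t+2} = \barbPhi_{t+2}$. Comparing \eqref{eq:tSigmaOmega_def} for $\tbOmega_{t+2}$ against \eqref{eq:Omega_pr_def} for $\bOmega'_{t+2}$---using that, as noted after \eqref{eq:tSigmaOmega_def}, the last rows and columns of $\tbGamma_{t+2}$ and $\tbPsi_{t+2}$ (which would require the as-yet-unknown $\tbOmega_{t+2}$) are annihilated inside both recursions---gives $\tbOmega_{t+2} = \bOmega'_{t+2}$, so its lower-right $(t+1)\times(t+1)$ block equals $\barbOmega_{t+1}$. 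I do not anticipate any genuine analytical obstacle: the two recursions are the same object, and the content of the lemma is simply that the mean-centering in \eqref{eq:tf_defs} aligns the non-centered true-AMP variables with the centered auxiliary ones. The only care needed is careful bookkeeping of the index offset of $+1$ (from the dummy initial coordinates $Z_1=0$, $V_1=X_*$, $U_1=0$) and checking at each step that the shifts $\bar\mu_i X_*$ used in defining $\tf_{t+1}$ are exactly the quantities produced by the true-AMP recursion \eqref{eq:mut1_def} evaluated on the matched matrices $\barbPsi_{t+2}, \barbPhi_{t+2}$.
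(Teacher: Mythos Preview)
Your proposal is correct and follows essentially the same approach as the paper: an induction establishing $\tbSigma_t=\barbSigma_t$ and $\tbOmega_{t+1}=\bOmega'_{t+1}$ by first matching the building-block matrices $\tbDelta_{t+1},\tbGamma_{t+1},\tbPhi_{t+1},\tbPsi_{t+1}$ with their barred counterparts, then feeding these identities into the identical algebraic recursions \eqref{eq:tSigmaOmega_def} and \eqref{eq:barSigma_def}--\eqref{eq:Omega_pr_def}. The only superfluous remark is your final caveat about ``checking'' the shifts $\bar\mu_i$: these are fixed deterministic constants defined by the true-AMP recursion and inserted verbatim into the definition of $\tf_{t+1}$, so there is nothing further to verify about them.
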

\begin{proof}
We will prove by induction that $\tbSigma_{t} = \barbSigma_{t}$ and $\tbOmega_{t+1} = \bOmega'_{t+1}$, where the matrices on the left are defined via \eqref{eq:tSigmaOmega_def} and the matrices on the right are  defined via  \eqref{eq:barSigma_def} and \eqref{eq:Omega_pr_def}. The result of the lemma then follows since $(M_1, \ldots, M_{t}) \sim \normal(\bzero, \, \tbSigma_{t})$,    $(Z_1=0, Z_2, \ldots, Z_{t+1}) \sim \normal(\bzero, \tbOmega_{t+1})$ and  $\barbOmega_t$ is the lower right $t \times t$ submatrix of $\bOmega'_{t+1} \in \reals^{ (t+1) \times (t+1)}$.

For $t=1$, by the initialization in \eqref{eq:Mi_def}, $M_1 \sim \normal(0, \tbSigma_1)$ where $\tbSigma_1 =\kappa_2 \E\{ X_*^2 \} =\barbSigma_1$, where the last equality is from  \eqref{eq:SE_init}.  From  \eqref{eq:tSigmaOmega_def}, the matrix $\tbOmega_2$ can be computed as
\beq
\tbOmega_2 = \begin{pmatrix}
    0 & 0 \\
    0  & \delta \barkap_2 \E\{h_1(q(M_1, \veps))^2 \} \, + \, \delta \barkap_4 \E\{ V_1^2\} (\E\{ \partial_g h_1(q(M_1, \veps))\})^2
\end{pmatrix}.
\eeq
Since $V_1 \stackrel{\text{d}}{=} X_*$ and $M_1 \stackrel{\text{d}}{=} G$, the matrix above equals $\bOmega'_2$ (defined via \eqref{eq:Omega_pr_def}).  

Assume towards induction  that $\tbSigma_k = \barbSigma_k$ and $\tbOmega_{k+1} = 
\bOmega'_{k+1}$ for some $k \ge 1$.  Recalling that $(M_1, \ldots, M_k) \sim \normal(0, \tbSigma_k)$ and $(Z_1=0, \ldots, Z_{k+1}) \sim \tbOmega_{k+1}$, using the induction hypothesis in the definitions of $\tbDelta_{k+1}, \tbGamma_{k+1}, \tbPhi_{k+1}, \tbPsi_{k+1}$ in \eqref{eq:infDeltaGamma_def}-\eqref{eq:shorthand_def}, we obtain that
\beq
\tbDelta_{k+1} = \barbDelta_{k+1}, \quad \tbGamma_{k+1} = \barbGamma_{k+1}, \quad 
\tbPhi_{k+1} = \barbPhi_{k+1}, \quad \tbPsi_{k+1} = \barbPsi_{k+1},
\eeq
where $\barbDelta_{k+1}, \barbGamma_{k+1}, \barbPhi_{k+1}, \barbPsi_{k+1}$ are defined via  $\barbSigma_k$ and $\barbOmega_k$ in \eqref{eq:BarDelta_def}-\eqref{eq:emp_avg_shorthand}. It follows from the definitions in \eqref{eq:barSigma_def} and \eqref{eq:tSigmaOmega_def} that
$\tbSigma_{k+1} = \barbSigma_{k+1}$. This then implies that $ \tbPhi_{k+2} = \barbPhi_{k+2}$
and $\tbDelta_{k+2} = \barbDelta_{k+2}$. Using these in the definitions in \eqref{eq:tXi_Theta_t1j_def} and \eqref{eq:Theta_t1j_def}, we obtain that $\tbTheta^{(j)}_{k+2} = \bTheta^{(j)}_{k+2}$, and consequently, $\tbOmega_{k+2} = \bOmega'_{k+2}$. This completes the proof of the induction step, and gives the desired result. 
\end{proof}

\subsection{Proof of Theorem \ref{thm:main}}

At this point, Theorem \ref{thm:main} follows from the following intermediate result, whose proof is deferred to Section \ref{subsec:PLdiff_proof}.

\begin{lemma}
For any order 2 pseudo-Lipschitz functions  $\psi: \reals^{2t+1} \to \reals$ and $\phi: \reals^{2t+2} \to \reals$, the following limits hold almost surely for $t \ge 1$: 
\begin{align}
&  \lim_{n \to \infty}  \left\vert 
 \frac{1}{d} \sum_{i=1}^d  \psi(x^1_i, \ldots, x^{t}_i, \hx^1_i, \ldots, \hx^t_i, x^*_i)
\, - \,  \frac{1}{d} \sum_{i=1}^d  \psi(z^2_i + \bar{\mu}_1 x^*_i, \ldots, z^{t+1}_i + \bar{\mu}_t x^*_i, v^2_i, \ldots, v^{t+1}_i, x^*_i)  \right \vert  = 0,  \label{eq:PLdiff_x}\\
& \lim_{n \to \infty}  \left\vert    \frac{1}{n} \sum_{i=1}^n \phi( r^1_i, \ldots, r^t_i, s^1_i, \ldots, s^{t+1}_i, \, y_i ) \, - \, \frac{1}{n} \sum_{i=1}^n \phi(m^2_i, \ldots, m^{t+1}_i, u^2_i, \ldots, u^{t+2}_i, \, q(m^1_i, \veps_i) )  \right \vert = 0. \label{eq:PLdiff_r}
\end{align}
\label{lem:PLdiff}
\end{lemma}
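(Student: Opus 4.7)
The proof of Theorem~\ref{thm:main} will follow by combining Proposition~\ref{prop:auxSE} (state evolution for the auxiliary AMP), Lemma~\ref{lem:SE_equiv} (the two state evolution recursions agree), and Lemma~\ref{lem:PLdiff}. Of these, the remaining task is Lemma~\ref{lem:PLdiff}, which I would prove by strong induction on $t$, simultaneously establishing the four $\ell_2$ closeness statements
\[
\frac{\|\bx^s-(\bz^{s+1}+\bar\mu_s\bx^*)\|^2}{d},\ \frac{\|\hbx^s-\bv^{s+1}\|^2}{d},\ \frac{\|\br^s-\bm^{s+1}\|^2}{n},\ \frac{\|\bs^s-\bu^{s+1}\|^2}{n}\xrightarrow{\mathrm{a.s.}}0
\]
for the relevant $s\le t$, in parallel with the alignment of the debiasing coefficients $\beta_{ti}-\ssb_{t+1,i+1}\to 0$, $\alpha_{ti}-\sa_{t+1,i+1}\to 0$, and $\bar\mu_t-\ssb_{t+1,1}\to 0$. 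From these four $\ell_2$ estimates the pseudo-Lipschitz bound~\eqref{eq:PL_def} together with Cauchy--Schwarz and the bounded second moments supplied by Proposition~\ref{prop:auxSE} immediately yield~\eqref{eq:PLdiff_x}--\eqref{eq:PLdiff_r}.

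For the base case $t=1$: since $\bm^1=\bA\bx^*=\bg$, I have $\bu^2=\th_2(\bm^1,\bveps)=h_1(q(\bg,\bveps))=h_1(\by)=\bs^1$ exactly, hence $\bA^\sT\bs^1=\bA^\sT\bu^2$. Comparing $\bx^1=\bA^\sT\bs^1$ to $\bz^2=\bA^\sT\bu^2-\ssb_{21}\bv^1$ (with $\bv^1=\bx^*$) gives $\bx^1-\bz^2=\ssb_{21}\bx^*$, and a direct evaluation of the $2\times 2$ recursion~\eqref{eq:bPQ_def}--\eqref{eq:atbt_coeffs} yields $\ssb_{21}=\delta\kappa_2\langle\partial_g h_1(\by)\rangle$, which converges a.s.\ to $\bar\mu_1$ defined in~\eqref{eq:SE_init}; since $\|\bx^*\|^2/d$ is bounded a.s., the base case follows.

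For the inductive step I would subtract~\eqref{eq:AMP_xt_update}--\eqref{eq:AMP_rt_update} from~\eqref{eq:auxAMPz}--\eqref{eq:auxAMPm}, obtaining an identity of the form
\[
\bx^t-(\bz^{t+1}+\bar\mu_t\bx^*)=\bA^\sT(\bs^t-\bu^{t+1})+\sum_{i=1}^{t}\ssb_{t+1,i}\bv^i-\sum_{i=1}^{t-1}\beta_{ti}\hbx^i-\bar\mu_t\bx^*,
\]
and analogous identities for $\br^t-\bm^{t+1}$, $\hbx^t-\bv^{t+1}$, and $\bs^{t+1}-\bu^{t+2}$. The spectral assumption on $\bA$ gives $\|\bA\|_{\op}=O(1)$ a.s., so $\|\bA^\sT(\bs^t-\bu^{t+1})\|^2/d$ is bounded by $\delta\|\bA\|_{\op}^2\|\bs^t-\bu^{t+1}\|^2/n$, which vanishes by the induction hypothesis. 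After peeling off the $i=1$ term and relabeling $i\mapsto i+1$ in the $\ssb_{t+1,\cdot}$ sum, the remaining sums regroup as $\sum_{i=1}^{t-1}(\ssb_{t+1,i+1}-\beta_{ti})\hbx^i+\sum_{i=1}^{t-1}\ssb_{t+1,i+1}(\bv^{i+1}-\hbx^i)+(\ssb_{t+1,1}-\bar\mu_t)\bx^*$, each term of which vanishes in $\ell_2/d$ by the IH and the coefficient alignment. Lipschitz continuity of $f_t$ in Assumption~\textbf{(A1)} then transfers this closeness from the arguments to $\|\hbx^t-\bv^{t+1}\|^2/d\to 0$; a symmetric calculation, using the Lipschitz continuity of $h_{t+1}$ in each coordinate including the one through $q$, handles $\br^t-\bm^{t+1}$ and $\bs^{t+1}-\bu^{t+2}$.

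The main obstacle is closing the coefficient alignment at the inductive step: I need $\beta_{ti}-\ssb_{t+1,i+1}\to 0$, $\alpha_{ti}-\sa_{t+1,i+1}\to 0$, and $\bar\mu_t-\ssb_{t+1,1}\to 0$ almost surely. Since $\kappa_{2k}\to\bar\kappa_{2k}$ by hypothesis, it suffices to show that every empirical partial-derivative average appearing in $\bPsi_{t+1},\bPhi_{t+1},\hbPsi_{t+1},\hbPhi_{t+1}$ converges to the common deterministic limit $\E\{\partial_k f_\ell\}$ or $\E\{\partial_j h_\ell\}$ computed under the SE distributions~\eqref{eq:Zi_def}--\eqref{eq:Mi_def}. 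For the auxiliary side this is immediate from Proposition~\ref{prop:auxSE}. For the true AMP side the IH provides the requisite joint $W_2$-convergence of $(\bx^1,\ldots,\bx^\ell)$ to $(X_1,\ldots,X_\ell)$, but Assumption~\textbf{(A1)} guarantees only continuity (not Lipschitz continuity) of the partial derivatives on a probability-one set; passing the convergence of the arguments through the derivative therefore requires a continuous mapping plus uniform integrability argument (uniform integrability is available because the global Lipschitz bound on $f_\ell$, $h_\ell$ makes their partial derivatives bounded). Once this alignment is in place the induction closes, and combining the resulting four $\ell_2$ estimates with PL-2 continuity produces~\eqref{eq:PLdiff_x}--\eqref{eq:PLdiff_r}, which together with Proposition~\ref{prop:auxSE} and Lemma~\ref{lem:SE_equiv} gives Theorem~\ref{thm:main}.
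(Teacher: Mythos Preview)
Your proposal is correct and follows essentially the same approach as the paper: strong induction on $t$ establishing the four $\ell_2$ closeness statements, with the inductive step driven by the same regrouping of the debiasing terms into $(\ssb_{t+1,i+1}-\beta_{ti})\hbx^i$, $\ssb_{t+1,i+1}(\bv^{i+1}-\hbx^i)$, and $(\ssb_{t+1,1}-\bar\mu_t)\bx^*$, and the coefficient alignment obtained by showing that the empirical partial-derivative averages in $\bPsi,\bPhi,\hbPsi,\hbPhi$ all converge to the common deterministic SE limits. The technical point you flag about passing $W_2$ convergence of the arguments through the merely a.e.\ continuous (but bounded) partial derivatives is exactly what the paper isolates as a separate auxiliary lemma (Lemma~\ref{lem:lipderiv}): convergence in distribution to an absolutely continuous limit, together with boundedness of the derivative, suffices.
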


\noindent \emph{Proof of Theorem \ref{thm:main}.} Applying  \eqref{eq:PL2z_conv} to the pseudo-Lipschitz function $$\tilde{\psi}(z_2, \ldots, z_{t+1}, v_2, \ldots, v_{t+1}, x_*) = \psi(z_2 + \bar{\mu}_1 x_*, \ldots, z_{t+1} + \bar{\mu}_t x_*, v_2, \ldots, v_{t+1}, x_*),$$
we obtain that almost surely
\beq
\begin{split}
& \lim_{n \to \infty}\,   \frac{1}{d} \sum_{i=1}^d  \psi(z^2_i + \bar{\mu}_1 x^*_i, \ldots, z^{t+1}_i + \bar{\mu}_t x^*_i, v^1_i, \ldots, v^t_i, x^*) \\
&  = \E\{ \psi(Z_2 + \bar{\mu}_1 X_* \, , \ldots, Z_{t+1} + \bar{\mu}_t X_*, \,  V_2, \ldots, V_{t+1}, X_*) \} \\
&  = \E\{ \psi( X_1, \ldots, X_t, \, \hat{X}_1, \ldots, \hat{X}_t, \, X_*) \},
\end{split}
\label{eq:psiz_conv}
\eeq
where the last equality follows from Lemma \ref{lem:SE_equiv}, by recalling that $V_{\ell+1} = f_\ell(Z_2 + \bar{\mu}_1 X_* \, , \ldots, Z_{\ell+1} + \bar{\mu}_\ell X_*)$ and $\hat{X}_\ell= f_\ell(X_1, \ldots, X_\ell)$, for $\ell \ge 1$.
Combining \eqref{eq:psiz_conv} with \eqref{eq:PLdiff_x} yields \eqref{eq:PL2_main_resultx} of Theorem \ref{thm:main}. The result \eqref{eq:PL2_main_resultr} is obtained similarly, using \eqref{eq:PL2m_conv}, Lemma \ref{lem:SE_equiv}, and \eqref{eq:PLdiff_r}.  
\qed

\subsection{Proof of Lemma \ref{lem:PLdiff}} \label{subsec:PLdiff_proof}

Throughout, we use $C$ to denote a generic positive constant. All the limits in the proof hold almost surely, so  we don't explicitly state 
this each time.

Since $\psi$ is pseudo-Lipschitz, we have
\begin{align}
 &    \left \vert \frac{1}{d} \sum_{i=1}^d \psi(x^1_i, \ldots, x^{t}_i, \hx^1_i, \ldots, \hx^t_i, x^*_i)
\, - \,  \frac{1}{d} \sum_{i=1}^d  \psi(z^2_i + \bar{\mu}_1 x^*_i, \ldots, z^{t+1}_i + \bar{\mu}_t x^*_i, v^2_i, \ldots, v^{t+1}_i, x^*_i) \right\vert \nonumber \\
& \le \frac{C}{d} \sum_{i=1}^{d} \left( 1+ |x^*_i| + \sum_{\ell=1}^t \Big( |x^\ell_i| + |\hx^\ell_i| + |z^{\ell+1}_i + \bar{\mu}_\ell x^*_i| + |v_i^{\ell+1}|  \Big) \right)   \nonumber  \\  
& \qquad \qquad \cdot \left( \sum_{\ell=1}^t  \left(|x^\ell_i - z^{\ell+1}_i - \bar{\mu}_\ell x^*_i |^2 \, + \, |\hx^\ell_i - v^{\ell+1}_i |^2\right) \right)^{\frac{1}{2}} \nonumber \\
& \leq C(4t+2)\left[ 1 + \frac{\| \bx^* \|^2}{d} +  
\sum_{\ell=1}^t \Big( \frac{\|\bx^\ell \|^2}{d} + \frac{\|\hbx^\ell \|^2}{d} + \frac{\| \bz^{\ell+1} + \bar{\mu}_\ell \, \bx^* \|^2}{d} + 
\frac{\| \bv^{\ell+1} \|^2}{d} \Big) \right]^{\frac{1}{2}} \nonumber \\
&  \qquad \qquad \qquad \qquad \qquad \qquad  \qquad  \qquad\cdot \left( \sum_{\ell=1}^t \left(\frac{\| \bx^\ell - \bz^{\ell+1} - \barmu_\ell \bx^*\|^2}{d} + \frac{\|\hbx^\ell - \bv^{\ell+1}\|^2}{d} \, \right)\right)^{\frac{1}{2}},  \label{eq:xz_diff}
\end{align}
where the last step uses Cauchy-Schwarz inequality (twice). Similarly, we obtain 
\begin{align}
    & \left\vert    \frac{1}{n} \sum_{i=1}^n \phi( r^1_i, \ldots, r^t_i, s^1_i, \ldots, s^{t+1}_i, \, y_i ) \, - \, \frac{1}{n} \sum_{i=1}^n \phi(m^2_i, \ldots, m^{t+1}_i, u^2_i, \ldots, u^{t+2}_i, \, q(m^1_i, \veps_i) )  \right \vert  \nonumber \\
    & \leq  C(4t+5) \left[ 1 + \frac{\| \by \|^2}{n} + 
\frac{\| q(\bm^1, \bveps) \|^2}{n} + 
\sum_{\ell=1}^t \Bigg( \frac{\|\br^\ell \|^2}{n} + \frac{\| \bm^{\ell+1} \|^2}{n} \Bigg) + 
\sum_{\ell=1}^{t+1} \Bigg( \frac{\|\bs^\ell \|^2}{n} +  \frac{\| \bu^{\ell+1} \|^2}{n} \Bigg)  \right]^{\frac{1}{2}} \nonumber \\
& \qquad  \cdot \left( \sum_{\ell=1}^t  \Big( \frac{\|\br^\ell - \bm^{\ell+1}\|^2}{n} \, + \, \frac{\| \bs^\ell - \bu^{\ell+1} \|^2}{n} \Big)  + 
\frac{\| \bs^{t+1} - \bu^{t+2} \|^2}{n} \, + \, 
\frac{\|\by - q(\bm^1, \bveps) \|^2}{n}  \right)^{\frac{1}{2}}.
\label{eq:rm_diff}
\end{align}
We will inductively show that as $n \to \infty$: \emph{(i)} each of the terms in the last line of \eqref{eq:xz_diff} and \eqref{eq:rm_diff} converges to zero, and \emph{(ii)} the terms within the square brackets in \eqref{eq:xz_diff} and \eqref{eq:rm_diff} all converge to finite, deterministic limits. 

\underline{Base case $t=1$}:  Consider \eqref{eq:xz_diff} for $t=1$. From  the AMP initialization, we have $\bx^1= \bA^{\sT} h_1(\by)$, and from \eqref{eq:auxAMPz}, we have
\beq
\bz^2 = \bA^{\sT}h_1(q(\bm^1, \bveps)) \,  -  \, \ssb_{21} \bv^1 = \bA^{\sT}h_1(\by) \,  -  \, \delta \kappa_2 \< \partial_g h_1(q(\bg, \bveps)) \> \, \bx^*,
\eeq
where the last equality is obtained by recalling that $\bm^1=\bg$, $\bv^1= \bx^*$, and computing the matrix $\bM^{\ssb}_2$ in \eqref{eq:Mta_Mtb} to verify that $\ssb_{21} = \delta \kappa_2 \< \partial_g h_1(q(\bg, \bveps)) \>$. We therefore have
\beq
\begin{split}
  &   \frac{\| \bx^1 - \bz^2 - \barmu_1 \bx^* \|^2}{d}  = \frac{\| \bx^* \|^2}{d} \left( \delta \kappa_2 \< \partial_g h_1(q(\bg, \bveps)) \> 
   - \barmu_1 \right)^2 \\
   & = \frac{\| \bx^* \|^2}{d}   \delta^2 \, \left[  \kappa_2(  \< \partial_g h_1(q(\bg, \bveps)) \> 
   -  \E\{ \partial_g h_1(q(G, \veps)) \} ) \, + \, \E\{ \partial_g h_1(q(G, \veps) \} (\kappa_2-\barkap_2) \right]^2, 
\end{split}
\eeq
where for the last equality, we use the definition of $\barmu_1$ in   \eqref{eq:SE_init}. By the assumptions of the theorem, $\| \bx^* \|^2/d  \to \E\{ X_*^2\}$. Since $\bm^1=\bg$, applying Proposition \ref{prop:auxSE} for $t=1$ gives 
\beq 
( \bg, \bveps) \stackrel{W_2}{\longrightarrow} (G, \veps).
\label{eq:bveps_conv}
\eeq
 Since $h_1(q(g, \veps))$ is Lipschitz in each argument,  \eqref{eq:bveps_conv} together with Lemma \ref{lem:lipderiv} implies that $ \< \partial_g h_1(q(\bg, \bveps)) \> \to  \E\{ \partial_g h_1(q(G, \veps)) \}$. Furthermore, by the model assumptions $\kappa_2 \to \barkap_2$. Therefore, 
\beq
\lim_{n \to \infty} \frac{1}{d}\| \bx^1 - \bz^2 - \barmu_1 \bx^* \|^2 = 0.
\label{eq:x1z2diff}
\eeq
Since $\hbx^1 = f_1(\bx^1)$ and $\bv^2=f_1(\bz^2 + \barmu_1 \bx^*)$ with $f_1$ being Lipschitz, we have
\beq
\frac{\| \hbx^1 - \bv^2 \|}{d} \le C \,  \frac{1}{d}\| \bx^1 - \bz^2 - \barmu_1 \bx^* \|^2 \to 0, \ \text{ as } n \to \infty.
\label{eq:hx_v2}
\eeq
Now consider the terms inside the square brackets in \eqref{eq:xz_diff}.
Using Proposition \ref{prop:auxSE} and Lemma \ref{lem:SE_equiv}, we have the following  limits for $t \ge 1$:
\begin{equation}
    \begin{split}
        \lim_{n \to \infty} \frac{\| \bz^{t+1} + \barmu_t \bx^* \|^2}{d} = \E\{ (Z_{t+1}+ \barmu_t X_*)^2\} =\E\{ X_t^2 \},  \qquad \lim_{n \to \infty} \frac{\| \bv^{t+1} \|^2}{d}= \E\{ V_{t+1}^2\} = \E\{\hat{X}_t^2 \}.
    \end{split}
    \label{eq:PL2lims_t1}
\end{equation} 
Using the triangle inequality, we have the following lower and upper bounds, for $t\ge 1$:
\begin{equation}
    \begin{split}
       \|  \bz^{t+1} + \barmu_t \bx^* \| \, -  \, \| \bx^t - \bz^{t+1} - \barmu_t \bx^* \| &\le    \| \bx^t \| \le  \|  \bz^{t+1}  + \barmu_t \bx^* \| \, + \, \| \bx^t - \bz^{t+1} - \barmu_t \bx^* \|,  \\
             \|  \bv^{t+1} \| \, -  \, \| \hbx^t - \bv^{t+1} \| &\le    \| \hbx^t \| \le  \|  \bv^{t+1} \| \, + \, \| \hbx^t - \bv^{t+1} \|.
    \end{split}
    \label{eq:zv_triangle_eq}
\end{equation}
Using \eqref{eq:x1z2diff}-\eqref{eq:zv_triangle_eq}, we obtain
\beq
\lim_{n \to \infty} \frac{\| \bx^1 \|^2}{d} = \E\{ X_1^2\}, \qquad  
\lim_{n \to \infty} \frac{\| \hbx^1 \|^2}{d} = \E\{ \hat{X}_1^2\}.
\label{eq:x1hx1_lim}
\eeq
Using \eqref{eq:x1z2diff}-\eqref{eq:x1hx1_lim} in \eqref{eq:xz_diff}, we obtain \eqref{eq:PLdiff_x} for $t=1$.

Next consider \eqref{eq:rm_diff} for $t=1$. From the definition of auxiliary AMP in \eqref{eq:auxAMPz}-\eqref{eq:tf_defs}, we have $\bm^1= \bg$ and 
$$\bu^2= h_1(q(\bm^1, \bveps)) = h_1(q(\bg, \bveps)) = h_1(\by) = \bs^1, $$ 
where the last equality holds due to the initialization of the true AMP (below \eqref{eq:AMP_rt_update}). We therefore have 
\beq
\frac{\| \by - q(\bm^1, \bveps) \|^2}{n} =0, \qquad \frac{\| \bs^1 - \bu^2 \|^2}{n}=0. 
\label{eq:init_eqs}
\eeq
Next, from \eqref{eq:auxAMPm} we have
\begin{equation}
    \bm^2 = \bA \bv^2 - \sa_{22} \,  \bu^2 =  
    \bA \bv^2 - \kappa_2 \< \partial_1 f_1(\bz^2 + \barmu_1 \bx^*) \> \,  h_1(\by),
    \label{eq:m2_exp}
\end{equation}
where we have used $\bu^1=\bzero$ and the value of $\sa_{22}$ obtained via $\bM^{\sa}_2$ in \eqref{eq:Mta_Mtb}. For the true AMP, from \eqref{eq:AMP_rt_update} we have
\begin{equation}
    \br^1 = \bA \hbx^1 - \alpha_{11} \bs^1 = \bA \hbx^1 - \kappa_2 \< \partial_1 f_1(\bx^1) \> \, h_1(\by).
    \label{eq:r1_exp}
\end{equation}
Combining \eqref{eq:m2_exp} and \eqref{eq:r1_exp}, we obtain
\begin{equation}
\begin{split}
    \frac{\| \br^1 - \bm^2 \|^2}{n} & \leq 2\frac{ \| \bA (\hbx^1 - \bv^2)  \|^2}{n} + 2\kappa_2^2  \frac{ \| h_1(\by) \|^2}{n}\left( \< \partial_1 f_1(\bx^1) \> - \< \partial_1 f_1(\bz^2 + \barmu_1 \bx^*) \>  \right)^2 \\
    & \le 2 \| \bA \|_{\op}^2 \frac{\| \hbx^1 - \bv^2  \|^2}{n} + 2 \kappa_2^2  \frac{ \| h_1(\by) \|^2}{n}\left( \< \partial_1 f_1(\bx^1) \> - \< \partial_1 f_1(\bz^2 + \barmu_1 \bx^*) \>  \right)^2.
    \end{split}
    \label{eq:r1m2_diff}
\end{equation}
By assumption, the empirical distribution of $\blambda$, the vector of singular values, converges to $\Lambda$ which has compact support. Therefore, $\| \bA\|_{\op} \le C$, and  by \eqref{eq:hx_v2}, the first term above tends to zero. We also have $\kappa_2 \to \bar{\kappa}_2$ and $\| h_1(\by) \|^2/n \to \E\{ h_1(Y)^2 \}$. Since $f_1$ is Lipschitz and we have shown above that $\bx^1 \stackrel{\mathclap{W_2}}{\longrightarrow} X_1$, Lemma \ref{lem:lipderiv} implies that
\beq
\lim_{n \to \infty} \, \< \partial_1 f_1(\bx^1) \> = \E\{ \partial_1 f_1(X_1) \}.
\label{eq:part_f1_x}
\eeq
Similarly, since by Proposition \ref{prop:auxSE} we have $\bz^2 + \barmu_1 \bx^* \stackrel{\mathclap{W_2}}{\longrightarrow} (Z_2 +  \barmu_1 X_*)$,  Lemma \ref{lem:lipderiv} implies
\beq
\lim_{n \to \infty} \, \< \partial_1 f_1(\bz^2 + \barmu_1 \bx^*) \> = 
E\{ \partial_1 f_1(Z_2 + \barmu_1 \bx^*) = E\{ \partial_1 f_1(X_1)\},
\label{eq:part_f1_z}
\eeq
where the last equality follows from Lemma \ref{lem:SE_equiv}. Using \eqref{eq:part_f1_x} and \eqref{eq:part_f1_z} in \eqref{eq:r1m2_diff}, we  have
\beq
\lim_{n \to \infty} \frac{\| \br^1 - \bm^2 \|^2}{n} =0.
\label{eq:r1m2_lim}
\eeq
Since $h_2$ is Lipschitz in each argument, \eqref{eq:r1m2_lim} also implies that
\beq
\lim_{n \to \infty} \frac{\| \bs^2 - \bu^3 \|^2}{n} = \lim_{n \to \infty} \frac{\| h_2(\br^1, \, q(\bg, \bveps)) - h_2(\bm^2, \, q(\bm^1, \bveps)) \|^2}{n}  =0,
\label{eq:s2u3_lim}
\eeq
where we have used $\bm^1=\bg$. Eqs.  \eqref{eq:init_eqs}, \eqref{eq:r1m2_lim} and \eqref{eq:s2u3_lim} show that for $t=1$, each term on the last line of \eqref{eq:rm_diff} tends to zero. 
Using Proposition \ref{prop:auxSE} and Lemma \ref{lem:SE_equiv}, we have for $t \ge 1$:
\begin{equation}
\begin{split}
   &  \lim_{n \to \infty} \frac{\| \bm^{t+1}\|^2}{n} = \E\{ M_{t+1}^2\} = \E\{R_t^2 \}, \qquad
    \lim_{n \to \infty} \frac{\| \bu^{t+1}\|^2}{n} = \E\{ U_{t+1}^2\} = \E\{ S_t^2\},  \\
  &  \lim_{n \to \infty} \frac{ \| q(\bm^1, \bveps) \|^2}{n} = \lim_{n \to \infty} \frac{ \| \by \|^2}{n}  =\E\{ q(M_1, \veps)^2\}  = \E\{ Y^2 \}.
     \end{split}
     \label{eq:mu_lims}
\end{equation}
Using the triangle inequality, we have the following lower and upper bounds:
\begin{equation}
    \begin{split}
      \|  \bm^{t+1} \| \, -  \, \| \br^t - \bm^{t+1} \| \le    \| \br^t \| \le  \|  \bm^{t+1} \| \, + \, \| \br^t - \bm^{t+1} \|,  \\
            \|  \bu^{t+2} \| \, -  \, \| \bs^{t+1} - \bu^{t+2} \| \le    \| \bs^{t+1} \| \le  \|  \bu^{t+2} \| \, + \, \| \bs^{t+1} - \bu^{t+2} \|.
    \end{split}
    \label{eq:mu_triangle_eq}
\end{equation}
Combining \eqref{eq:mu_triangle_eq} with \eqref{eq:r1m2_lim}-\eqref{eq:mu_lims}, we obtain the following  limits:
\beq
   \lim_{n \to \infty} \frac{\| \br^{1}\|^2}{n} = \E\{ R_{1}^2\}, \qquad 
   \lim_{n \to \infty} \frac{\| \bs^{2}\|^2}{n} = \E\{ S_2^2\}.
   \label{eq:r1s2_lims}
\eeq
Using the limits in \eqref{eq:mu_lims} and \eqref{eq:r1s2_lims} in \eqref{eq:rm_diff} yields the result \eqref{eq:PLdiff_r} for $t=1$.

\underline{Induction step}: Assume towards induction that the results \eqref{eq:PLdiff_x}-\eqref{eq:PLdiff_r} hold with $t$ replaced by $(t-1)$ for some $(t-1) \ge 1$, and that 
\beq
\begin{split}
& \lim_{n \to \infty} \, \frac{\| \bx^\ell - \bz^{\ell+1} - \barmu_\ell \bx^*\|^2}{d} = 0, \quad 
\lim_{n \to \infty} \, \frac{\|\hbx^\ell - \bv^{\ell+1}\|^2}{d} =0,  \\
& 
\lim_{n \to \infty} \, \frac{\| \br^\ell - \bm^{\ell+1} \|^2}{n} = 0, 
\quad \lim_{n \to \infty} \, \frac{\|\bs^{\ell+1} - \bu^{\ell+2}\|^2}{n} =0, \\
& \lim_{n \to \infty} \frac{\| \bx^\ell \|^2}{d} =\E\{ X_\ell^2 \}, \ \  \lim_{n \to \infty} \frac{\| \hbx^\ell \|^2}{d} =\E\{ \hX_\ell^2 \},   \\ 
&  \lim_{n \to \infty} \frac{\| \br^\ell \|^2}{n} =\E\{ R_\ell^2 \}, \lim_{n \to \infty} \frac{\| \bs^{\ell+1} \|^2}{n} =\E\{ S_{\ell+1}^2 \},\,\, \text{ for } \ell \in [t-1].
\end{split}
\label{eq:ind_hypt}
\eeq
From the definitions of $\bx^t$ and $\bz^{t+1}$ (see \eqref{eq:AMP_xt_update} and \eqref{eq:auxAMPz}), we have
\begin{equation}
    \begin{split}
        &  \bx^t - \bz^{t+1} - \barmu_t \bx^* = \bA^{\sT} (\bs^t - \bu^{t+1})
        + \sum_{i=1}^{t-1} ( \ssb_{t+1, i+1} \bv^{i+1} - \beta_{ti} \hbx^i) \, 
        + \ssb_{t+1, 1} \bv^{1}  - \barmu_t \bx^* \\
        & =  \bA^{\sT} (\bs^t - \bu^{t+1})
        + \sum_{i=1}^{t-1} \ssb_{t+1, i+1} (\bv^{i+1} - \hbx^i) 
        + \sum_{i=1}^{t-1} (\ssb_{t+1, i+1} - \beta_{ti}) \hbx^i \, 
        + (\ssb_{t+1, 1}  - \barmu_t) \bx^*,
    \end{split}
\end{equation}
where we have used the fact that $\bv^1= \bx^*$. Using Cauchy-Schwarz inequality, we then have
\begin{equation}
    \begin{split}
        \frac{\| \bx^t - \bz^{t+1} - \barmu_t \bx^* \|^2}{n} &  \le 2t \Bigg[ 
        \|\bA \|^2_{\op} \frac{\| \bs^t - \bu^{t+1} \|^2}{n} \, + \,  \sum_{i=1}^{t-1} \,  \ssb_{t+1, i+1}^2 \frac{\| \bv^{i+1} - \hbx^i \|^2}{n} \\
        & \qquad\qquad\qquad  + \, \sum_{i=1}^{t-1} \, (\ssb_{t+1, i+1} - \beta_{ti})^2 \frac{\| \hbx^i \|^2}{n} + (\ssb_{t+1, 1}  - \barmu_t)^2 \frac{\| \bx^* \|^2}{n} \Bigg] \\
        & := 2t(T_1 + T_2 + T_3 + T_4).
    \end{split}
    \label{eq:T1T2T3T4}
\end{equation}
Since $\| \bA \|_{\op} \le C$, the induction hypothesis \eqref{eq:ind_hypt} implies that $T_1 \to 0$.  By the induction hypothesis, we also have $\| \bv^{i+1} - \hbx^i \|^2/d  \to 0$ and $\| \hbx^i\|^2/d \to \E\{ \hX^2 \}$ for $i \le (t-1)$. Furthermore, $\| \bx^* \|^2/d \to \E\{ X_*^2 \}$. Hence, we can prove that $T_2, T_3, T_4$ each tend to zero by showing that:
\begin{align}
    & \lim_{n \to \infty} \ssb_{t+1,1} = \bar{\mu}_t, \label{eq:sb_mu_lim} \\
    & \lim_{n \to \infty} \ssb_{t+1, i+1} =   \lim_{n \to \infty} \beta_{t,i} = \bar{\beta}_{t,i}, \qquad  i \in [t-1], 
    \label{eq:bt1_betat_lim} 
\end{align}
 where the limiting values $(\bar{\beta}_{t,i})$ in \eqref{eq:bt1_betat_lim} will be defined below (see \eqref{eq:Mtbeta_lim}).

Recall from \eqref{eq:Mta_Mtb}-\eqref{eq:atbt_coeffs}
that the coefficients $(\ssb_{t+1,j})_{j \le t}$ are determined by the entries of the matrices $\hbPsi_{t}$ and $\hbPhi_{t+1}$,  defined in \eqref{eq:bPQ_def}.  (Though the definition of  $\bM^{\ssb}_{t+1}$ in \eqref{eq:Mta_Mtb} involves $\hbPsi_{t+1}$, it can be verified that its last row  does not affect the computation, so the formula depends only on $\hbPsi_{t}, \hbPhi_{t+1}$.) From \eqref{eq:part_uv}, the non-zero entries of these matrices are of the form
\begin{equation*}
\begin{split}
  &   \< \partial_k f_\ell( \bz^2 + \barmu_1 \bx^*, \, \ldots,  \, \bz^{\ell+1} + \barmu_\ell \bx^* ) \>, \quad 1 \le k \le \ell \le (t-1), \\
& \< \partial_g h_\ell( \bm^2, \, \ldots,  \, \bm^{\ell}, \, q(\bm^1, \bveps) ) \>, \quad \< \partial_k h_\ell( \bm^2, \, \ldots,  \, \bm^{\ell}, \, q(\bm^1, \bveps) ) \>,
\quad  1 \le k < \ell \le t,
\end{split}
\end{equation*}
where we recall that $\partial_k$ denotes the partial derivative with respect to the $k$-th argument.
By Proposition \ref{prop:auxSE} and Lemma \ref{lem:SE_equiv}, we have that for $\ell \ge 1$:
\begin{equation}
    \begin{split}
        & ( \bz^2 + \barmu_1 \bx^*, \, \ldots,  \, \bz^{\ell+1} + \barmu_\ell \bx^* ) \stackrel{W_2}{\longrightarrow} (Z_2 + \barmu_1 X_*, \ldots, Z_{\ell+1} + \barmu_\ell X_* )  \stackrel{\rm{d}}{=} (X_1, \ldots, X_\ell), \\
        & (\bm^1, \ldots, \bm^\ell, \bveps) \,  \stackrel{W_2}{\longrightarrow} \,
        (M_1, \ldots, M_\ell, \veps)  \stackrel{\rm{d}}{=} (G, R_1, \ldots, R_{\ell -1}, \veps).
    \end{split}
    \label{eq:zm_W2}
\end{equation}
Since the functions $f_\ell$ and $h_\ell$ are Lipschitz in each argument, \eqref{eq:zm_W2} together with Lemma \ref{lem:lipderiv} implies that
\begin{equation}
    \begin{split}
        & \lim_{n \to \infty} \< \partial_k f_\ell( \bz^2 + \barmu_1 \bx^*, \, \ldots,  \, \bz^{\ell+1} + \barmu_\ell \bx^* ) \> = 
        \E\{ \partial_k f_\ell(X_1, \ldots, X_\ell) \}, \quad 1 \le k \le \ell \le (t-1), \\
       &  \lim_{n \to \infty} \< \partial_g h_\ell( \bm^2, \, \ldots,  \, \bm^{\ell}, \, q(\bm^1, \bveps) ) \> = \E\{ \partial_g h_\ell(R_1, \ldots, R_{\ell-1}), q(G, \veps) \}, \\
       &  \lim_{n \to \infty} \< \partial_k h_\ell( \bm^2, \, \ldots,  \, \bm^{\ell}, \, q(\bm^1, \bveps) ) \> = \E\{ \partial_k h_\ell(R_1, \ldots, R_{\ell-1}, q(G, \veps)) \}, \quad 1 \le k < \ell \le t.
    \end{split}
    \label{eq:part_gl_hl_lim1}
\end{equation}
Therefore, $\hbPsi_{t} \to \tbPsi_{t}$ and $\hbPhi_{t+1} \to \tbPhi_{t+1}$, where $\tbPsi_t, \tbPhi_{t+1}$ are defined in \eqref{eq:tbPQ_def}. Consequently, the matrix $\bM^{\ssb}_{t+1}$ and the coefficients $(\ssb_{t+1, j})$, defined via \eqref{eq:Mta_Mtb}-\eqref{eq:atbt_coeffs}, converge to the following limits:
\beq
\lim_{n \to \infty} \bM^{\ssb}_{t+1}= \tbM^{\ssb}_{t+1} \equiv \delta \sum_{j=0}^t \barkap_{2(j+1)} \tbPhi_{t+1} (\tbPsi_{t+1} \tbPhi_{t+1})^j , \qquad \lim_{n \to \infty} \ssb_{t+1, j} =  \bar{\ssb}_{t+1, j},  \quad  j \in [t], 
\label{eq:Mtb_lim}
\eeq
where 
$(\bar{\ssb}_{t+1, j})$ are computed according to  \eqref{eq:atbt_coeffs} from the last row of $\tbM^{\ssb}_{t+1}$.
By the induction hypothesis \eqref{eq:ind_hypt}, we have $ \| \hbx^i - \bv^{i+1} \|^2/d \to  0$ for $i \le (t-1)$. Therefore the term $T_2$ in \eqref{eq:T1T2T3T4} tends to $0$ as $n \to \infty$.

Using \eqref{eq:beta_def}, the coefficients $(\beta_{ti})$ are determined by the entries of the matrices $\bPsi_{t+1}$ and $\bPhi_{t+1}$,  defined in \eqref{eq:Psi_Phi_def}.  The non-zero entries of these matrices are of the form
\begin{equation*}
\begin{split}
  &   \< \partial_k f_\ell( \bx^1, \, \ldots,  \, \bx^\ell ) \>,  \quad   1 \le k \le \ell \le t, \\
&  \< \partial_g h_\ell( \br^1, \, \ldots,  \, \br^{\ell-1}, \, q(\bg, \bveps) ) \>, 
 \quad \< \partial_k h_\ell( \br^1, \, \ldots,  \, \br^{\ell-1}, \, q(\bg, \bveps) ) \>, \quad  1 \le k < \ell \le t.
\end{split}
\end{equation*}
By the induction hypothesis \eqref{eq:PLdiff_x}-\eqref{eq:PLdiff_r} for $(t-1)$ and \eqref{eq:zm_W2}, we have 
\begin{equation}
\begin{split}
    & (\bx^1, \ldots, \bx^{t-1}) \stackrel{W_2}{\longrightarrow}  (Z_2 + \barmu_1 X_*, \ldots, Z_{t} + \barmu_{t-1} X_* )  \stackrel{\rm{d}}{=} (X_1, \ldots, X_{t-1}) \\
    & (\br^1, \ldots, \br^{t-1}, q(\bg, \bveps))
     \stackrel{W_2}{\longrightarrow} (M_2, \ldots, M_t, q(M_1, \veps)) 
     \stackrel{\rm{d}}{=} (R_1, \ldots, R_{t-1},q(G, \veps)).
    \end{split}
    \label{eq:ind_rt1_W2}
\end{equation}
Since $f_\ell$ and $h_\ell$ are Lipschitz in each argument, \eqref{eq:ind_rt1_W2} together with Lemma \ref{lem:lipderiv} implies
\begin{equation}
    \begin{split}
        & \lim_{n \to \infty} \< \partial_k f_\ell( \bx^1, \ldots, \bx^\ell ) \> = 
        \E\{ \partial_k f_\ell(X_1, \ldots, X_\ell) \}, \quad 
        1 \le k \le \ell  \le (t-1), \\
        & \lim_{n \to \infty} \< \partial_g h_\ell( \br^1, \ldots, \br^{\ell-1}, q(\bg, \bveps) ) \> = \E\{ \partial_g h_\ell(R_1, \ldots, R_{\ell-1}), q(G, \veps))\},   \\
        % %
        &  \lim_{n \to \infty} \< \partial_k h_\ell( \br^1, \ldots, \br^{\ell-1}, q(\bg, \bveps) ) \> = \E\{ \partial_k h_\ell(R_1, \ldots, R_{\ell-1}, q(G, \veps)) \}, \quad 1 \le k < \ell  \le t.
    \end{split}
    \label{eq:part_gl_hl_lim2}
\end{equation}
Therefore, $\bPsi_{t}  \to \barbPsi_{t}$ and $\bPhi_{t+1} \to \barbPhi_{t+1}$, where the entries of $\barbPsi_{t}, \barbPhi_{t+1}$ are defined as in \eqref{eq:emp_avg_shorthand}. We note that computing  $\bM_{t+1}^{\beta}$ defined in \eqref{eq:Mbeta_def} requires knowledge of only $\bPsi_{t}$ and $\bPhi_{t+1}$ since the last row of $\bPsi_{t+1}$ is zeroed out in the multiplication with $\bPhi_{t+1}$. Therefore, 
\beq 
\lim_{n \to \infty} \bM_{t+1}^{\beta} =  \barbM_{t+1}^{\beta} \equiv  \delta \sum_{j=0}^{t} \barkap_{2(j+1)} \barbPhi_{t+1} (\barbPsi_{t+1} \barbPhi_{t+1})^j, \qquad 
\lim_{n \to \infty} \beta_{t,i}= \bar{\beta}_{t,i}, \quad  i \in [t-1].
\label{eq:Mtbeta_lim}
\eeq
where $(\bar{\beta}_{t,i})$  are computed according to  \eqref{eq:beta_def} from the last row of $\barbM^{\beta}_{t+1}$. Since the limits in \eqref{eq:part_gl_hl_lim1} and \eqref{eq:part_gl_hl_lim2} are the same, using the formulas
for $\barbPsi_{t}, \barbPhi_{t+1}$ (from \eqref{eq:emp_avg_shorthand}) and for $\tbPsi_{t}, \tbPhi_{t+1}$ (from \eqref{eq:tbPQ_def}), we have 
\begin{equation}
    \barbPsi_{t} = \tbPsi_t, \qquad \barbPhi_{t+1} = \tbPhi_{t+1}, \qquad \barbM_{t+1}^{\beta} = \tbM_{t+1}^{\ssb}.
    \label{eq:bar_til_iden}
\end{equation}
Combining \eqref{eq:Mtb_lim}, \eqref{eq:Mtbeta_lim}, \eqref{eq:bar_til_iden} and recalling that $\barmu_t = (\bM^\beta_{t+1})_{t+1,1}$, we obtain the claims in \eqref{eq:sb_mu_lim}-\eqref{eq:bt1_betat_lim}. We have therefore shown that each of the four terms in \eqref{eq:T1T2T3T4} tends to zero, and hence 
\beq
\lim_{n \to \infty} \, \frac{\| \bx^t - \bz^{t+1} - \barmu_t \bx^* \|^2}{d}  = 0.
\label{eq:xtzt1_conv}
\eeq
Moreover, since $\hbx^t = f_t(\bx^1, \ldots, \bx^t)$ and $\bv^{t+1} = f_t(\bz^{2}+\barmu_1 \bx^*, \ldots, \bz^{t+1}+\barmu_t \bx^* )$ with $f_t$ Lipschitz, we also have 
\beq
\lim_{n \to \infty} \frac{\| \hbx^t - \bv^{t+1} \|^2}{d}=0.
\label{eq:hxt_vt1_conv}
\eeq
Using \eqref{eq:xtzt1_conv}-\eqref{eq:hxt_vt1_conv}  together with the bounds in \eqref{eq:zv_triangle_eq} then yields
% %
\beq
\begin{split}
& \lim_{n \to \infty} \frac{\| \bx^t \|^2}{d} = \lim_{n \to \infty} \frac{\| \bz^{t+1} + \barmu_{t} \bx^* \|^2}{d}= \E\{ (Z_{t+1} + \barmu_t X)^2 \} = \E\{ X_t^2 \}, \\
&  \lim_{n \to \infty} \frac{\| \hbx^{t} \|^2}{d} = \lim_{n \to \infty} \frac{\| \bv^{t+1} \|^2}{d} = \E\{ V_{t+1}^2 \} = \E\{ \hX_t^2 \},
\label{eq:x2hx2_lims}
\end{split}
\eeq
where the last equality in each line above is due to Lemma \ref{lem:SE_equiv}. Using \eqref{eq:xtzt1_conv}-\eqref{eq:x2hx2_lims} and the induction hypothesis \eqref{eq:ind_hypt} in \eqref{eq:xz_diff}  yields the result \eqref{eq:PLdiff_x}.

The proof of \eqref{eq:PLdiff_r} is along similar lines. From the definitions of $\br^t$ and $\bm^{t+1}$ (see \eqref{eq:AMP_rt_update} and \eqref{eq:auxAMPm}), we have
\beq
\begin{split}
\br^t - \bm^{t+1} & = \bA(\hbx^t - \bv^{t+1}) + \sum_{i=1}^{t} (\sa_{t+1,i+1} \bu^{i+1} - \alpha_{t,i} \bs^i) \\ 
 & = \bA(\hbx^t - \bv^{t+1}) + \sum_{i=1}^{t} \sa_{t+1,i+1} (\bu^{i+1} -  \bs^i) \, +  \,
 \sum_{i=1}^{t}  (\sa_{t+1,i+1} - \alpha_{t,i}) \bs^i. 
\end{split}
\eeq
Using the Cauchy-Schwarz inequality, we obtain
\begin{equation}
    \begin{split}
        & \frac{\| \br^t - \bm^{t+1} \|^2}{n}  \\
        & \le 2(t+2)  
        \Bigg[ \| \bA\|^2_{\op} \frac{\| \hbx^t - \bv^{t+1} \|^2}{n} 
        \, + \, \sum_{i=1}^{t} \sa_{t+1, i+1}^2 
        \frac{\| \bu^{i+1} - \bs^i \|^2}{n} \, + \, 
        \sum_{i=1}^t (\sa_{t+1,i+1} - \alpha_{t,i})^2 \frac{\| \bs^i \|^2}{n} \Bigg].
        \label{eq:rt_mt1_split}
    \end{split}
\end{equation}
By assumption,  $\| \bA \|_{\op}^2 \le C$; therefore, by \eqref{eq:hxt_vt1_conv} the first term in the brackets tends to zero. By the induction hypothesis \eqref{eq:ind_hypt}, we have
\begin{equation}
    \begin{split}
       \lim_{n \to \infty} \frac{\| \bu^{i+1} - \bs^{i} \|^2}{n}=0, \quad
       \lim_{n \to \infty} \frac{\| \bs^i \|^2}{n} = \E\{ S_i^2 \},  \quad 1\le  i \le t.
    \end{split}
    \label{eq:hx_us_lims}
\end{equation}
We can also show that 
\beq
\lim_{n \to \infty} \sa_{t+1, i+1} =
\lim_{n \to \infty} \alpha_{t, i} = \bar{\alpha}_{t,i}, \qquad 1\le i \le t,
\label{eq:at_alph_lim}
\eeq
where $(\bar{\alpha}_{t,1}, \ldots, \bar{\alpha}_{t,t})$ is defined as in \eqref{eq:alpha_def} using the matrix 
$ \barbM_{t+1}^\alpha \equiv \sum_{j=0}^{t+1} \barkap_{2(j+1)} \barbPsi_{t+1} (\barbPhi_{t+1} \barbPsi_{t+1})^j$.
The proof of \eqref{eq:at_alph_lim} is omitted as it is similar to that of \eqref{eq:bt1_betat_lim}: we show that $\bM^{\alpha}_{t+1} \to \barbM^{\alpha}_{t+1}$  and $\bM^{\sa}_{t+1} \to \tbM^{\sa}_{t+1} \equiv \sum_{j=0}^{t+1} \barkap_{2(j+1)} \tbPsi_{t+1} (\tbPhi_{t+1} \tbPsi_{t+1})^j$, and then that $\tbM^{\sa}_{t+1}= \barbM^{\alpha}_{t+1}$. Using  \eqref{eq:hx_us_lims} and \eqref{eq:at_alph_lim} in \eqref{eq:rt_mt1_split}, we obtain 
\beq
\lim_{n \to \infty} \, \frac{\| \br^t - \bm^{t+1} \|^2}{n}=0, \qquad
\lim_{n \to \infty} \, \frac{\| \bs^{t+1} - \bu^{t+2} \|}{n}=0,
\label{eq:rtmt1_st1_ut2_conv}
\eeq
where the second limit holds because 
\[ 
\bs^{t+1} = h_{t+1}(\br^1, \ldots, \br^t, q(\bg, \bveps)), \qquad \bu^{t+2} = h_{t+1}(\bm^2, \ldots, \bm^{t+1}, q(\bm^1, \bveps)),
\]
with $h_{t+1}$ Lipschitz in each argument. Using \eqref{eq:rtmt1_st1_ut2_conv} together with the bounds in \eqref{eq:mu_triangle_eq} then yields
\beq
\begin{split}
& \lim_{n \to \infty} \frac{\| \br^t \|^2}{n} = \lim_{n \to \infty} \frac{\| \bm^{t+1} \|^2}{n}= \E\{ M_{t+1}^2 \} = \E\{ R_t^2 \}, \\
&  \lim_{n \to \infty} \frac{\| \bs^{t+1} \|^2}{n} = \lim_{n \to \infty} \frac{\| \bu^{t+2} \|^2}{n} = \E\{ U_{t+2}^2 \} = \E\{ S_{t+1}^2 \},
\label{eq:r2s2_lims}
\end{split}
\eeq
 where the last equality in each line above is due to Lemma \ref{lem:SE_equiv}.

Using \eqref{eq:rtmt1_st1_ut2_conv}, \eqref{eq:r2s2_lims}, and the induction hypothesis \eqref{eq:ind_hypt} in \eqref{eq:rm_diff}  yields the result \eqref{eq:PLdiff_r}, completing the proof.

%%%%%%%%
\section{An Auxiliary Lemma}
\begin{lemma}
\label{lem:lipderiv}
Let $F \colon \reals^t \to\reals$ be a Lipschitz function, and let $\partial_k F$ denote its derivative with respect to the $k$-th argument, for $1 \le k \le t$. Assume that $\partial_k F$ is continuous almost everywhere in the $k$-th argument,  for each $k$. Let $(V_1^{(m)}, \ldots, V_t^{(m)})$  be a sequence of random vectors in $\reals^t$ converging in distribution to the random vector $(V_1, \ldots,V_t)$ as $m \to \infty$. Furthermore, assume that the distribution of $(V_1, \ldots, V_t)$ is absolutely continuous with respect to the Lebesgue measure. Then, 
\[ \lim_{m \to \infty}  \E\{ \partial_k F(V_1^{(m)}, \ldots, V_t^{(m)}) \} = \E\{ \partial_k F(V_1, \ldots, V_t) \}, \qquad  k \in [t]. \]
\end{lemma}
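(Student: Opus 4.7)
The strategy is to verify the hypotheses of a portmanteau-type theorem for weak convergence: if $g$ is a bounded Borel function which is continuous on a set of probability one under the limit law, then $\E\{g(V^{(m)})\} \to \E\{g(V)\}$ whenever $V^{(m)} \Rightarrow V$. I will apply this with $g = \partial_k F$, so the two points to check are boundedness and continuity on a $\mathbb{P}_V$-full set.

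First, since $F$ is Lipschitz, Rademacher's theorem ensures that $F$ is differentiable at Lebesgue-almost every point of $\reals^t$, and at every such point $|\partial_k F| \le L$, where $L$ is the Lipschitz constant of $F$. One can then define $\partial_k F$ arbitrarily (say, as $0$) on the Lebesgue-null exceptional set, yielding a bounded Borel function on $\reals^t$.

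Second, the hypothesis supplies a set $D \subset \reals^t$ of Lebesgue measure zero off of which $\partial_k F$ is continuous. Because the law of $(V_1, \ldots, V_t)$ is absolutely continuous with respect to Lebesgue measure, $\mathbb{P}\{(V_1,\ldots,V_t) \in D\} = 0$, so $\partial_k F$ is continuous at $(V_1,\ldots,V_t)$ with probability one.

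With these two ingredients, the final step is a direct invocation of the portmanteau theorem (see, e.g., Theorem~2.7 of Billingsley, \emph{Convergence of Probability Measures}): for a bounded Borel $g$ whose set of discontinuities has zero measure under the limit law, $\E\{g(V^{(m)})\} \to \E\{g(V)\}$ whenever $V^{(m)} \Rightarrow V$. Taking $g = \partial_k F$ yields the claim. The only subtlety — and the step requiring care — is to make sure the interpretation of ``continuous almost everywhere in the $k$-th argument'' in the lemma statement does mean continuity on a Lebesgue-full subset of $\reals^t$; once this is granted, the absolute continuity assumption on $V$ transfers the full-measure set from Lebesgue to $\mathbb{P}_V$, and no further work is needed.
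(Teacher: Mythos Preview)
Your argument is correct: boundedness of $\partial_k F$ follows from the Lipschitz constant, the absolute-continuity hypothesis transfers the Lebesgue-a.e.\ continuity set to a $\mathbb{P}_V$-full set, and the portmanteau theorem then gives the conclusion directly. The paper does not give its own proof of this lemma but simply cites Lemma~6 of Bayati--Montanari and Lemma~7.14 of Feng et al.; the argument in those references is precisely the bounded/a.e.-continuous portmanteau argument you wrote, so your approach matches the intended one.
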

The result was proved for $t=2$ in Lemma 6 of \cite{BM-MPCS-2011}. The proof for $t > 2$ is essentially the same; see also Lemma 7.14 in \cite{feng2021unifying}.

%%%%%%%%%%%%%%%%%%%%%%%%%%%%%%%%%%%%%%%%%%%%%%%%%%%%%%%%%%%%%%%%%%%%%%%%%%%%%%%
%%%%%%%%%%%%%%%%%%%%%%%%%%%%%%%%%%%%%%%%%%%%%%%%%%%%%%%%%%%%%%%%%%%%%%%%%%%%%%%

\end{document}